\newtheorem{proposition}{Proposition}
\def\ack{\section*{Acknowledgements}%
  \addtocontents{toc}{\protect\vspace{6pt}}%
  \addcontentsline{toc}{section}{Acknowledgements}%
}
\theoremstyle{definition}
\newtheorem{definition}{Definition}
\newtheorem{algorithm}{Algorithm}
\renewcommand{\thealgorithm}{\Roman{algorithm}}
\newtheorem{remark}{Remark}
\newcommand{\bea}{\begin{eqnarray}}
\newcommand{\eea}{\end{eqnarray}}
\newcommand{\ben}{\begin{equation}}
\newcommand{\een}{\end{equation}}
\newcommand{\RNum}[1]{\uppercase\expandafter{\romannumeral #1\relax}}
\begin{document}
\begin{frontmatter}
\title{{\color{black}Lexicographic optimization-based approaches to learning a representative model for multi-criteria sorting with non-monotonic criteria}}

\author[dut1]{Zhen Zhang}
\ead{zhen.zhang@dlut.edu.cn}

\author[dut1]{Zhuolin Li\corref{cor1}}
\ead{lizhuolin@mail.dlut.edu.cn}
\cortext[cor1]{Corresponding author.}

\author[dufe]{Wenyu Yu\corref{cor2}}
\ead{dr.yuwy@gmail.com}
\cortext[cor2]{Corresponding author.}

\address[dut1]{School of Economics and Management, Dalian University of Technology, Dalian 116024, P. R. China}
\address[dufe]{School of Data Science and Artificial Intelligence, Dongbei University of Finance and Economics, Dalian 116025, P. R. China}

\begin{abstract}
Deriving a representative model using value function-based methods from the perspective of preference disaggregation has emerged as a prominent and growing topic in multi-criteria sorting (MCS) problems. A noteworthy observation is that many existing approaches to learning a representative model for MCS problems traditionally assume the monotonicity of criteria, which may not always align with the complexities found in real-world MCS scenarios. Consequently,  this paper proposes some approaches to learning a representative model for MCS problems with non-monotonic criteria through the integration of the threshold-based value-driven sorting procedure. To do so, we first define some transformation functions to map the marginal values and category thresholds into a UTA-like functional space.  Subsequently, we construct constraint sets to model non-monotonic criteria in MCS problems and develop optimization models to check and rectify the inconsistency of the decision maker's assignment example preference information. By simultaneously considering the complexity and discriminative power of the models, two distinct lexicographic optimization-based approaches are developed to derive a representative model for MCS problems with non-monotonic criteria. Eventually, we offer an illustrative example and conduct comprehensive simulation experiments to elaborate the feasibility and validity of the proposed approaches.
\end{abstract}
\begin{keyword}
Multi-criteria decision making, non-monotonic criteria, preference disaggregation, lexicographic optimization
\end{keyword}
\end{frontmatter}

\section{Introduction}\label{sec:1}
Nowadays, multi-criteria sorting (MCS) problems, which involve the assignment of a set of alternatives to several predefined ordered categories in terms of multiple criteria, have garnered growing interest and find widespread application across diverse fields. These fields include inventory management \citep{Chen08caor,Liu16omega}, policy assessments \citep{Dias18omega}, consumer preference analysis \citep{Guo20omega}, ecological risk assessments \citep{Qin21ins}, supplier evaluation \citep{Pelissari22eswa} and green building rating \citep{Zhang22anor}. Within the literature, various MCS approaches have been proposed, encompassing value function-based approaches \citep{Doumpo01ds,Greco10ejor,Kadzinski15ejor,Kadzinski17caor,Liu20ejor}, outranking-based approaches \citep{Almeida10ejor,Kadzinski16ins,Olteanu22caor,Kadzinski21asoc}, distance-based approaches \citep{Chen07smca,de20caie}, and rule-based approaches \citep{Dembczynski09ejor,Kadzinski16ins1}.

Among the various mainstream MCS approaches, value function-based approaches stand out for their computational simplicity and intuitive interpretability \citep{Liu19ejor,Li23tcss}. These approaches rely on preference information provided by decision makers. Typically, decision makers have two ways to furnish their preferences: the direct and indirect methods \citep{Tomczyk19caor,Belahcene234or,Cinelli22ejor,Belahcene23caor}. The direct method entails decision makers providing specific parameter information required for value function-based MCS approaches, such as the shape of marginal value functions. In contrast, the indirect method involves decision makers offering holistic judgment information, such as assignment example preference information. Related parameters are then inferred using preference disaggregation techniques \citep{Jacquet01ejor,Liu20joc}. The indirect method is gaining popularity among value function-based MCS approaches due to its reduced cognitive demands on decision makers \citep{Corrente13ml}. Notably, UTilit\'{e}sAdditivesDIScriminantes (UTADIS) and its variants have become classic and prevalent approaches in this category \citep{Zopounidis01ejor,Doumpo02springer,Sobrie18ejor}. In UTADIS and its various variants, the holistic judgement information provided by a decision maker is typically translated into constraints, forming an extensive feasible domain and thus yielding multiple compatible parameters \citep{Ru23ejor,Arcidiacono23omega}. Consequently, the selection of a representative model, defined by a set of representative parameters, has emerged as a prominent research area for MCS problems. For instance, \cite{Greco11caor} introduced a robust ordinal regression method to select a representative value function for MCS problems. \cite{Kadzinski13dss} presented a stochastic ordinal regression method to select a representative value function and yield robust sorting results in MCS problems. \cite{Doumpos14ejor} developed an optimization model to select a representative sorting model by maximizing the minimal difference between the global values of reference alternatives and the corresponding category thresholds.

Many of the aforementioned approaches for learning a representative sorting model operate under the assumption of criteria monotonicity. However, in real-life scenarios, numerous MCS problems involve non-monotonic criteria \citep{Rezaei18eswa,Bagherzadeh22ejor}. For instance, in the financial domain, when classifying firms based on their financial status, the total liabilities to total assets is often considered as a non-monotonic criterion. Specifically, as the total liabilities to total assets decreases, the firm's ability to utilize external funds worsens, and as it increases, the financial risk of the firm rises. Consequently, the total liabilities to total assets must be maintained within a specific range. Similarly, in medical diagnosis, blood glucose levels also exhibit non-monotonic property; if the blood glucose level falls outside the normal range (either too high or too low), a patient is diagnosed with hyperglycemia or hypoglycemia. Consequently, modeling blood glucose levels necessitates the use of non-monotonic value functions \citep{Belahcene234or1}.

To address non-monotonic criteria for multi-criteria decision making problems, various approaches have been developed. For instance, \cite{Despotis95ama} introduced a linear programming method for learning non-monotonic value functions with a quadratic shape, requiring the decision maker to specify the evaluation value at which monotonicity changes in advance. Based on the UTASTAR algorithm, \cite{Kliegr09pl} proposed a new method to allow any shape of value functions to accommodate non-monotonic criteria by introducing numerous binary variables. \cite{Eckhardt12pl} utilized a local preference transformation method to handle non-monotonic criteria in the UTilit\'{e} Additive (UTA) method. \cite{Doumpos12orsp}  devised an evolutionary optimization-based approach for constructing non-monotonic value function models, considering different types of value functions with non-monotonic criteria.
\cite{Ghaderi15prl} proposed a new preference disaggregation approach that allows non-monotonic additive models, and applied it to analyze the impact of brand colour on brand image.
\cite{Ghaderi17ejor} constructed a linear fractional programming model to infer non-monotonic additive value functions based on indirect pairwise comparisons. \cite{Corrente23eswa} proposed a robust TOPSIS method for multi-criteria decision making problems with hierarchical and non-monotonic criteria. Although significant progress has been made in addressing non-monotonic criteria, most of these approaches are designed for multi-criteria ranking problems and are not directly applicable to MCS problems. Recently, \cite{Guo19eswa} introduced a progressive method for selecting representative marginal value functions for MCS problems with non-monotonic criteria. \cite{Kadzinski20ijar} developed a mixed-integer linear programming model to derive a representative instance of the sorting model by considering different types of monotonic and non-monotonic marginal value functions. Furthermore, \cite{Kadzinski21kbs} introduced a novel approach to address non-monotonic criteria by modeling the marginal value function for each criterion as the sum of non-decreasing and non-increasing components.
While significant advancements have been achieved, two primary challenges persist. Firstly, some methods are constrained  to learning marginal utility functions with specific shapes, limiting their widespread applicability.  Secondly, even though some methods can handle marginal utility functions of any shape, they frequently necessitate the introduction of binary variables, consequently increasing the computational complexity during model solving. Alternatively, they may not completely adhere to specific normalization constraints, thereby compromising the interpretability of the models.

To address the previously mentioned challenges, this paper introduces lexicographic optimization-based approaches to learning a representative model for MCS problems with non-monotonic criteria. The contributions of this paper are outlined as follows:

(1) We define some transformation functions to map the marginal values and category thresholds in the original functional space into UTA-like functional space. These functions ensure that the resulting marginal value functions and category thresholds are in UTA-like standard forms, preserving the sorting result for alternatives after the transformation.

(2) We construct constraint sets tailored to model the non-monotonic criteria in MCS problems. On this basis, we proceed to develop models for checking and addressing inconsistency in the decision maker's assignment example preference information. Subsequently, we introduce two lexicographic optimization-based approaches to learning a representative sorting model, taking into account both model complexity and discriminative power. Additionally, we analyze the robustness of the proposed approaches.

(3) We present an example to illustrate the implementation process of the proposed approaches. Additionally, we conduct extensive simulation experiments to compare the proposed approaches with some non-monotonic criteria modeling methods. Furthermore, we analyze how the performance of these approaches is influenced by factors such as the number of alternatives, criteria, categories, and the proportion of reference alternatives among all alternatives.

The subsequent sections of this paper are organized in the following way. Section \ref{sec:2} provides a review of the threshold-based value-driven sorting procedure. In Section \ref{sec:3}, we formulate the MCS problem with non-monotonic criteria. Following this, Section \ref{sec:4} offers a detailed exposition of the proposed approaches. In Section \ref{sec:5}, we present an illustrative example and some discussions. Subsequently, we provide extensive experiment analysis to validate the effectiveness of the proposed approach in Section \ref{sec:6}. Finally, Section \ref{sec:7} concludes this paper.

\section{Threshold-based value-driven sorting procedure}\label{sec:2}
In this section, we review the threshold-based value-driven sorting procedure \citep{Greco10ejor}, which serves as the foundation for the subsequent research discussed in this paper.

Let $A=\{a_1, a_2, \ldots, a_n\}$ represent a set of alternatives that need to be assigned to several predefined ordered categories $C=\{C_1, \ldots, C_q\}$ based on a family of criteria $G=\{g_1, g_2, \ldots, g_m\}$. In this context, $a_i$ denotes the $i$-th alternative, $i \in N=\{1,2,\ldots,n\}$, $g_j$ refers to the $j$-th criterion, $j \in M=\{1,2,\ldots,m\}$, and $C_h$ represents the $h$-th category, such that $C_{h}\succ C_{h-1}$ for $h \in Q=\{1,\ldots,q\}$. The performance level of the alternative $a_i$ with respect to the criterion $g_j$ is denoted as $g_j(a_i)$, $i \in N$ and $j \in M$. By defining $x_{ij}=g_j(a_i)$, the decision matrix encompassing all alternatives can be represented as $X=(x_{ij})_{n\times m}$. Furthermore, we introduce a category threshold vector $b=(b_0,b_1,\ldots,b_q)^{\rm T}$ to delineate the categories, with $b_{h-1}$ and $b_{h}$ being the lower and upper thresholds of the category $C_h$, respectively, where $b_{h}>b_{h-1}$ for $h \in Q$. For convenience, we set $b_0=0$ and $b_q=1+\varepsilon$, where $\varepsilon$ is a very small positive number.

In order to represent the decision maker's preferences and obtain the global value of each alternative $a_i$, an additive value function $V(a_i)$ is usually used, i.e.,
\begin{equation}\label{eq:global_v}
V(a_i)=\sum\limits_{j=1}^mv_j(x_{ij}), i\in N,
\end{equation}
where $v_j(\cdot)$ is the marginal value function on the criterion $g_j$, $j\in M$.

Let $[\beta_j^-,\beta_j^+]$ be the performance range of the criteria $g_j$, where $\beta_j^-$ and $\beta_j^+$ are the minimum and maximum performance levels for the criterion $g_j$, respectively, $j\in M$. To ensure that $V(a_i)\in [0,1]$, the below normalization condition should be satisfied for an additive value function $v_j(\cdot)$:
\begin{equation}\label{eq:s_v}
\begin{aligned}
&v_j(\beta_j^-)=0,j\in M\\
&\sum\limits_{j=1}^mv_j(\beta_j^+)=1.
\end{aligned}
\end{equation}

In this context, we call an additive value function is in UTA-like standard form if Eqs. \eqref{eq:global_v}~-~\eqref{eq:s_v} hold. In the threshold-based value-driven sorting procedure, a piecewise linear marginal value function, denoted by $v_j(\cdot)$ is typically employed to represent the decision maker's preferences concerning the criterion $g_j$. Specifically, the performance range of each criterion $g_j$ is divided into $s_j$ equal subintervals, denoted as $[\beta_j^1,\beta_j^2]$,\ldots,$[\beta_j^l,\beta_j^{l+1}]$, \ldots, $[\beta_j^{s_j},\beta_j^{s_j+1}]$, where $\beta_j^1=\beta_j^-$, $\beta_j^{s_j+1}=\beta_j^+$, $\beta_j^l=\beta_j^1+\frac{l-1}{s_j}(\beta_j^{s_j+1}-\beta_j^1)$ and each $\beta_j^l$ is termed a breakpoint, $l=1,\ldots,s_j+1$.

For an alternative $a_i$, given an performance level $x_{ij}$ for the criterion $g_j$, its marginal value can be determined through linear interpolation, i.e.,
\begin{equation}\label{eq:marginal_v0}
v_j(x_{ij})=v_j(\beta^l_j)+\frac{x_{ij}-\beta^l_j}{\beta^{l+1}_j-\beta^l_j}(v_j(\beta^{l+1}_j)-v_j(\beta^l_j)), \text{if}\ x_{ij}\in[\beta^l_j,\beta^{l+1}_j].
\end{equation}

Following this, the sorting result (i.e., the assigned category) for each alternative $a_i$ can be determined by comparing its global value $V(a_i)$ with the category thresholds. To elaborate, if $b_{h-1}\le V(a_i)< b_h$, then the alternative $a_i$ is assigned to the category $C_h$, $i\in N$, $h\in Q$.

In the fundamental framework of the threshold-based value-driven sorting procedure, it is typically assumed that the criteria exhibit a monotonically increasing behavior. Nonetheless, practical MCS problems often involve non-monotonic criteria. To address this challenge, a novel approach is introduced in the subsequent sections.

\section{Problem formulation}\label{sec:3}

We consider the following MCS problem with non-monotonic criteria, which seeks to assign a set of alternatives $\{a_1,a_2,\ldots,a_n\}$ into predefined ordered categories $\{C_1,C_2\ldots,C_q\}$, based on a family of criteria $\{g_1,g_2,\ldots,g_m\}$. It is noteworthy that this MCS problem accommodates non-monotonic criteria, but no prior information is available regarding these non-monotonic criteria.

To facilitate the assignment of alternatives, the decision maker will offer some assignment example preference information, typically consisting of holistic judgments for reference alternatives and their corresponding assigned categories. Let $A^R \subset A$ represent the set of reference alternatives, and the set of assignment example preference information can be succinctly denoted as $\{a_{i}\rightarrow C_{B_{i}}|a_{i}\in A^R\}$. Additionally, we define $A^N = A\backslash A^R$ as the set of non-reference alternatives.

This paper is dedicated to introducing lexicographic optimization-based approaches to learning a representative model for MCS problems with non-monotonic criteria from the perspective of preference disaggregation and then obtain the sorting result for non-reference alternatives. The proposed approaches consist of the following parts:

(1) Modeling the non-monotonic criteria using transformation functions

In this part, we employ a transformation function designed to convert non-monotonic marginal value functions into a UTA-like standard form \citep{Ghaderi17ejor}. Accordingly, we also introduce another transformation function that maps the category thresholds within the non-monotonic functional space into a UTA-like functional space. These transformations serve a dual purpose: simplifying the subsequent modeling process and ensuring that the sorting result for alternatives remain unchanged after the transformation. Afterwards, utilizing the proposed transformation functions, we proceed to construct several constraint sets tailored to model non-monotonic criteria for MCS problems. These constraint sets serve as the foundation for the subsequent development of our models (see subsections \ref{sec:4.1} and \ref{sec:4.2}).

(2) Consistency check and preference adjustment

To check whether the assignment example preference information provided by the decision maker is consistent, we develop a consistency check model that takes into consideration the non-monotonic criteria.  In instances where the assignment example preference information is identified as inconsistent, we formulate a minimum adjustment optimization model to rectify this inconsistency (see subsection \ref{sec:4.3}).

(3) Learning a representative sorting model

Once the inconsistency of the decision maker's assignment example preference information is eliminated, we proceed with developing lexicographic optimization-based approaches to learning a representative model for MCS problems with non-monotonic criteria. By solving the models, we can determine the representative marginal value functions and category thresholds effectively (see subsection \ref{sec:4.4}).

(4) Obtaining the sorting result for non-reference alternatives

Using the representative sorting model, we are equipped to assign non-reference alternatives to predefined ordered categories based on the threshold-based value-driven sorting procedure.

\section{The proposed approaches}\label{sec:4}
In this section, the proposed approaches are detailed. We commence by introducing the transformation functions that will be used throughout the remainder of the paper. Subsequently, we provide the non-monotonic criteria modeling method for MCS problems. Following this, we develop some consistency check and preference adjustment models. Additionally, we introduce the lexicographic optimization-based approaches and the robustness analysis method based on the proposed approaches.

\subsection{Transformation functions}\label{sec:4.1}
Since the marginal value functions may be non-monotonic in MCS problems, the performance levels associated with the maximum and minimum marginal values taken for each criterion are not known beforehand. Consequently, the marginal value functions may not adhere to a UTA-like standard form. To address this challenge and  represent the marginal value functions in a UTA-like standard form for MCS problems with non-monotonic criteria, some transformation functions are introduced below. These functions are designed to guarantee that the sorting result for alternatives remain unchanged even after transformation.

Let $v_j^S(\cdot)$ and $V^S(\cdot)$ denote the marginal value functions for the criterion $g_j$ and the overall marginal value function in the UTA-like functional space, respectively, then we have
\begin{equation}\label{eq:s_v1}
\begin{aligned}
&V^S(\cdot)=\sum\limits_{j=1}^mv_j^S(\cdot) \\
&v_j^S(g_j^-)=0,j\in M\\
&\sum\limits_{j=1}^mv_j^S(g_j^+)=1.\\
\end{aligned}
\end{equation}
where $g_j^-$ and $g_j^+$ are the performance levels corresponding to the minimum and maximum marginal values of the criterion $g_j$, $j\in M$, respectively, and the superscript $S$ denotes that the functions are in UTA-like standard forms.

Moreover, let $b^S=(b_0^S,b_1^S,\ldots,b_q^S)^{\rm T}$ be a category threshold vector in the UTA-like functional space, then the following conditions should satisfy:
\begin{equation}\label{eq:s_b}
\begin{aligned}
&b_0^S = 0,  b_q^S = 1+ \varepsilon^S\\
&b_{h}^S - b_{h-1}^S \ge \varepsilon^S, h \in Q.
\end{aligned}
\end{equation}
where $\varepsilon^S$ is an arbitrarily small positive number.

Analogously to \cite{Ghaderi17ejor} wherein a mapping function is proposed to address non-monotonic marginal value functions in multi-criteria ranking problems, this paper defines some transformation functions that convert marginal value functions in the non-monotonic functional space into that in a UTA-like standard form for MCS problems.
\begin{definition}\label{def:1}
Let $v_j(\cdot)$ denote the marginal value function of the criterion $g_j$, $j\in M$, then the transformation function that converts the marginal value $v_j(x_{ij})$ into the UTA-like functional space is defined as
\begin{equation}\label{eq:fv}
f_v(x_{ij})=\frac{v_j(x_{ij})-v_j(g_j^-)}{\sum\limits_{j=1}^m(v_j(g_j^+)-v_j(g_j^-))}, i\in N,j\in M.
\end{equation}
\end{definition}

According to Definition \ref{def:1}, if $x_{ij}=g_j^-$, then we have $f_v(x_{ij})=0$; if $x_{ij}=g_j^+$, then we have $\sum\limits_{j=1}^mf_v(x_{ij})=\sum\limits_{j=1}^m \frac{v_j(g_j^+)-v_j(g_j^-)}{\sum\limits_{j=1}^m(v_j(g_j^+)-v_j(g_j^-))}=1$. Therefore, the transformation function $f_v(\cdot)$ ensures that marginal value functions after transformation adhere to the UTA-like standard form.

In a similar manner, we define the function that maps the category thresholds in the original functional space into the UTA-like functional space as follows.
\begin{definition}\label{def:2}
Let $b=(b_0,b_1,\ldots,b_q)^{\rm T}$ be the category threshold vector in the original functional space, then the function that maps a category threshold $b_h$ into the UTA-like functional space is defined as
\begin{equation}\label{eq:fb}
f_b(b_h)=\frac{b_h-\sum\nolimits_{j=1}^m v_j(g_j^-)}{\sum\limits_{j=1}^m(v_j(g_j^+)-v_j(g_j^-))}, h=0,1,\ldots,q.
\end{equation}
\end{definition}

According to Definition \ref{def:2}, it is evident that when $b_h=\sum\nolimits_{j=1}^m v_j(g_j^-)$, we obtain $f_b(b_h)=0$. Conversely, when $b_h=\sum\nolimits_{j=1}^m v_j(g_j^+)$, we have $f_b(b_h)=1$. Furthermore, in accordance with the threshold-based value-driven sorting procedure, we have $b_h-b_{h-1}\ge \varepsilon$, where $\varepsilon$ is an arbitrarily small positive number. As a result, it follows that $f_b(b_h)-f_b(b_{h-1})=\frac{b_h-b_{h-1}}{\sum\limits_{j=1}^m(v_j(g_j^+)-v_j(g_j^-))}\ge \frac{\varepsilon}{\sum\limits_{j=1}^m(v_j(g_j^+)-v_j(g_j^-))}$. Let $\varepsilon^S=\frac{\varepsilon}{\sum\limits_{j=1}^m(v_j(g_j^+)-v_j(g_j^-))}$, since $f_b(b_h)-f_b(b_{h-1})\ge \varepsilon^S$, we  can conclude that $b_h-b_{h-1}\ge \varepsilon$. To summarize, it can be concluded that the transformation function $f_b(\cdot)$ ensures that the transformed category thresholds satisfy the basic conditions outlined in Eq. \eqref{eq:s_b}.

\begin{proposition}\label{proposition:1}
By employing the transformation functions $f_v(\cdot)$ and $f_b(\cdot)$, the sorting result for alternatives remain unchanged after transformation.
\end{proposition}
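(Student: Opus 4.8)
The plan is to show that the threshold-based sorting rule applied to the transformed marginal values and transformed thresholds yields exactly the same category for each alternative as the original rule. The key observation is that both transformation functions $f_v(\cdot)$ and $f_b(\cdot)$ are obtained from the original quantities by subtracting the common constant $\sum_{j=1}^m v_j(g_j^-)$ (distributed across criteria in the case of $f_v$) and dividing by the common positive constant $D := \sum_{j=1}^m (v_j(g_j^+) - v_j(g_j^-))$. Since $D > 0$ (each criterion contributes a nonnegative term and the normalization forces the sum to be positive), this is a strictly increasing affine change of variables, hence order-preserving on the real line.

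The steps I would carry out, in order, are as follows. First, I would compute the transformed global value $V^S(a_i) = \sum_{j=1}^m f_v(x_{ij})$ and show by the definition in Eq.~\eqref{eq:fv} that $V^S(a_i) = \dfrac{\sum_{j=1}^m v_j(x_{ij}) - \sum_{j=1}^m v_j(g_j^-)}{D} = \dfrac{V(a_i) - \sum_{j=1}^m v_j(g_j^-)}{D}$. Second, I would note from Eq.~\eqref{eq:fb} that the transformed threshold satisfies $b_h^S = f_b(b_h) = \dfrac{b_h - \sum_{j=1}^m v_j(g_j^-)}{D}$, i.e.\ exactly the same affine map applied to $b_h$. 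Third, I would invoke the sorting rule: $a_i$ is assigned to $C_h$ under the transformed model iff $b_{h-1}^S \le V^S(a_i) < b_h^S$. Substituting the two expressions above, this inequality chain becomes $\dfrac{b_{h-1} - \sum_j v_j(g_j^-)}{D} \le \dfrac{V(a_i) - \sum_j v_j(g_j^-)}{D} < \dfrac{b_h - \sum_j v_j(g_j^-)}{D}$, which — multiplying through by $D>0$ and adding back $\sum_j v_j(g_j^-)$ — is equivalent to $b_{h-1} \le V(a_i) < b_h$, i.e.\ the condition for $a_i$ to be assigned to $C_h$ in the original model. Hence the assignment is identical for every $a_i$ and every $h \in Q$.

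I would also remark, for completeness, that the preceding discussion in the excerpt already checks the endpoint/normalization consistency: $f_v$ sends $g_j^-$ to $0$ and makes the marginal values sum to $1$ at the $g_j^+$ profile, and $f_b$ sends the relevant extreme threshold values to $0$ and $1$ while preserving the spacing condition $b_h^S - b_{h-1}^S \ge \varepsilon^S$. So the transformed model is a legitimate UTA-like standard-form model, and the argument above shows it induces the same sorting. The main (and essentially only) obstacle is bookkeeping: one must be careful that the constant subtracted inside $f_v$ is distributed over the $m$ criteria (so that $\sum_j f_v(x_{ij})$ telescopes correctly to the single affine expression in $V(a_i)$) and that the denominator $D$ is genuinely positive — this last point relies on the normalization Eq.~\eqref{eq:s_v}/\eqref{eq:s_v1}, without which the division would not be order-preserving. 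Once these two points are pinned down, the equivalence of the inequality chains is immediate and the proposition follows.
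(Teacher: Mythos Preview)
Your proposal is correct and follows essentially the same route as the paper: both arguments reduce to the observation that $V^S(a_i)$ and $b_h^S$ arise from $V(a_i)$ and $b_h$ via the same affine map with positive scaling factor $D=\sum_{j=1}^m(v_j(g_j^+)-v_j(g_j^-))$, so the sorting inequalities are preserved. The paper phrases this as computing the single difference $V^S(a_i)-b_{h-1}^S=\frac{V(a_i)-b_{h-1}}{D}$, whereas you multiply the full inequality chain through by $D$; these are the same argument.
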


\begin{proof}
Proposition \ref{proposition:1} means that if the alternative $a_i$ is assigned to the category $C_h$, i.e., $b_{h-1}\le V(a_i) < b_{h}$, then we have $b_{h-1}^S\le V^S(a_i) < b_h^S$.
We first prove that $b_{h-1}\le V(a_i)$ is equivalent to $b_{h-1}^S\le V^S(a_i)$. In terms of Definitions \ref{def:1}~-~\ref{def:2},  it is apparent that $V^S(a_i)-b_{h-1}^S=\frac{\sum\nolimits_{j=1}^m v_j(x_{ij})-\sum\nolimits_{j=1}^m v_j(g_j^-)}{\sum\nolimits_{j=1}^m(v_j(g_j^+)-v_j(g_j^-))}- \frac{b_{h-1}-\sum\nolimits_{j=1}^m v_j(g_j^-)}{\sum\nolimits_{j=1}^m(v_j(g_j^+)-v_j(g_j^-))}=\frac{V(a_i)-b_{h-1}}{\sum\nolimits_{j=1}^m(v_j(g_j^+)-v_j(g_j^-))}$. Consequently,  we obtain $V(a_i)-b_{h-1}\ge 0 \iff V^S(a_i)-b_{h-1}^S\ge 0$. Analogously, it follows that $V(a_i) - b_{h}<0 \iff V^S(a_i)-b_h^S<0$. Therefore, we have that $b_{h-1}\le V(a_i) < b_{h} \iff b_{h-1}^S\le V^S(a_i) < b_h^S$.
This completes the proof of Proposition \ref{proposition:1}.
\end{proof}

The transformation functions introduced in this subsection enable us to convert marginal values and category thresholds from the original functional space into the UTA-like functional space. This transformation simplifies the modeling process for MCS problems with non-monotonic criteria.

\subsection{Modeling the non-monotonic criteria in MCS problems}\label{sec:4.2}

In this subsection, we model the non-monotonic criteria in MCS problems by constructing some constraint sets based on the transformation functions proposed in Section \ref{sec:4.1}.

In this paper, it is assumed that the decision maker provides assignment examples as his/her preference information, and the threshold-based value-driven procedure is employed for assigning alternatives. For a reference alternative $a_{i}\in A^R$ with the category assignment $C_{B_{i}}$, the following constraint set can be utilized to model the decision maker's assignment example preference information:
\begin{equation}\label{eq:base0}
E^{A^{R^0}}\begin{cases}
\begin{aligned}
&V^S(a_{i})\ge b_{B_{i}-1}^S,\forall a_{i}\in A^R\\
&V^S(a_{i})\le b_{B_{i}}^S - \varepsilon^S,\forall a_{i}\in A^R.\\
\end{aligned}
\end{cases}
\end{equation}

Furthermore, some fundamental conditions of the threshold-based value-driven sorting procedure should be satisfied, i.e.,
\begin{small}
\begin{equation}
E^{Sort^0}\begin{cases}
\begin{aligned}
&V^S(a_{i})=\sum\limits_{j=1}^mv_j^S(x_{{i}j}),\forall a_{i}\in A^R \\
&v_j^S(g_j^-)=0,j\in M\\
&\sum\limits_{j=1}^mv_j^S(g_j^+)=1\\
&v_j^S(x_{ij})=v_j^S(\beta^l_j)+\frac{x_{ij}-\beta^l_j}{\beta^{l+1}_j-\beta^l_j}(v_j^S(\beta^{l+1}_j)-v_j^S(\beta^l_j)),\\ &\hspace{5em}\text{if}\ x_{ij}\in[\beta^l_j,\beta^{l+1}_j], \forall a_{i}\in A^R, j\in M \\
&b_0^S = 0,  b_q^S = 1+ \varepsilon^S\\
&b_{h}^S - b_{h-1}^S \ge \varepsilon^S, h \in Q.
\end{aligned}
\end{cases}
\end{equation}
\end{small}

\begin{proposition}\label{proposition:2}
In terms of the transformation functions $f_v(\cdot)$ and $f_b(\cdot)$ proposed in Section \ref{sec:4.1}, the constraint set $E^{A^{R^0}}$ and $E^{Sort^0}$ can be equivalently converted into the following constraint sets $E^{A^{R^1}}$ and $E^{Sort^1}$, respectively, i.e.,
\begin{equation}
E^{A^{R^1}}\begin{cases}
\begin{aligned}
&V(a_{i})\ge b_{B_{i}-1},\forall a_{i}\in A^R\\
&V(a_{i})\le b_{B_{i}} - \varepsilon,\forall a_{i}\in A^R.\\
\end{aligned}
\end{cases}
\end{equation}

\begin{equation}
E^{Sort^1}\begin{cases}
\begin{aligned}
&V(a_{i})=\sum\limits_{j=1}^mv_j(x_{ij}),\forall a_{i}\in A^R \\
&v_j(x_{ij})=v_j(\beta^l_j)+\frac{x_{ij}-\beta^l_j}{\beta^{l+1}_j-\beta^l_j}(v_j(\beta^{l+1}_j)-v_j(\beta^l_j)),\\ &\hspace{5em}\text{if}\ x_{ij}\in[\beta^l_j,\beta^{l+1}_j], \forall a_{i}\in A^R, j\in M \\
&b_0 = \sum\nolimits_{j=1}^m v_j(g_j^-),  b_q = \sum\nolimits_{j=1}^m v_j(g_j^+)+ \varepsilon\\
&b_{h} - b_{h-1} \ge \varepsilon, h \in Q.
\end{aligned}
\end{cases}
\end{equation}

\end{proposition}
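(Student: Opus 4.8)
The plan is to exploit the fact that both $f_v(\cdot)$ and $f_b(\cdot)$ are affine maps sharing one and the same strictly positive denominator, so that moving between the two constraint systems is nothing but clearing that denominator from every (in)equality. Introduce the shorthand $D=\sum_{j=1}^m\bigl(v_j(g_j^+)-v_j(g_j^-)\bigr)$ and $\theta=\sum_{j=1}^m v_j(g_j^-)$. Since $g_j^+$ and $g_j^-$ are by definition the performance levels attaining the maximal and minimal marginal values on $g_j$, we have $v_j(g_j^+)\ge v_j(g_j^-)$ for every $j$, hence $D\ge 0$, and $D>0$ under the non-degeneracy hypothesis that $v_j$ is non-constant for at least one $j$. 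With this notation, Definitions~\ref{def:1}~--~\ref{def:2} together with summation over $j$ give $v_j^S(x_{ij})=\bigl(v_j(x_{ij})-v_j(g_j^-)\bigr)/D$ at every breakpoint and interpolated point, $V^S(a_i)=\bigl(V(a_i)-\theta\bigr)/D$, and $b_h^S=(b_h-\theta)/D$, together with the scaling relation $\varepsilon^S=\varepsilon/D$ already derived in Section~\ref{sec:4.1}.

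First I would handle $E^{A^{R^0}}$: substituting $V^S(a_i)=(V(a_i)-\theta)/D$ and $b_{B_i-1}^S=(b_{B_i-1}-\theta)/D$ into $V^S(a_i)\ge b_{B_i-1}^S$ and multiplying by $D>0$ cancels both $\theta$ and $D$, yielding exactly $V(a_i)\ge b_{B_i-1}$; similarly $V^S(a_i)\le b_{B_i}^S-\varepsilon^S$ becomes $V(a_i)-\theta\le b_{B_i}-\theta-\varepsilon$, i.e.\ $V(a_i)\le b_{B_i}-\varepsilon$. Every step is reversible because $D$ is a fixed positive scalar, so $E^{A^{R^0}}\Leftrightarrow E^{A^{R^1}}$.

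Next I would go through $E^{Sort^0}$ line by line. The identity $V^S(a_i)=\sum_j v_j^S(x_{ij})$ is, after the substitution above, the same assertion as $V(a_i)=\sum_j v_j(x_{ij})$, recovering the first line of $E^{Sort^1}$. The two normalization lines $v_j^S(g_j^-)=0$ and $\sum_j v_j^S(g_j^+)=1$ are automatically satisfied — $v_j^S(g_j^-)=\bigl(v_j(g_j^-)-v_j(g_j^-)\bigr)/D=0$ and $\sum_j v_j^S(g_j^+)=\sum_j\bigl(v_j(g_j^+)-v_j(g_j^-)\bigr)/D=D/D=1$ — so they impose no restriction in the original space and are correctly absent from $E^{Sort^1}$. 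The piecewise-linear interpolation constraint is preserved because $v_j^S$ is an affine image of $v_j$ with the same shift and scale at every breakpoint, and affine maps commute with linear interpolation; subtracting $v_j(g_j^-)/D$ from each term and dividing the $E^{Sort^0}$ interpolation equation by $D$ returns the $E^{Sort^1}$ interpolation equation. For the thresholds, $b_0^S=0\Leftrightarrow b_0=\theta=\sum_j v_j(g_j^-)$; $b_q^S=1+\varepsilon^S\Leftrightarrow b_q-\theta=D+\varepsilon\Leftrightarrow b_q=\sum_j v_j(g_j^+)+\varepsilon$; and $b_h^S-b_{h-1}^S\ge\varepsilon^S\Leftrightarrow b_h-b_{h-1}\ge\varepsilon$. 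Collecting these equivalences gives $E^{Sort^0}\Leftrightarrow E^{Sort^1}$.

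The delicate point — and the place I would be most careful — is the status of $D$: the entire reduction rests on $D$ being a strictly positive constant, which should be stated as the non-degeneracy hypothesis $\sum_j\bigl(v_j(g_j^+)-v_j(g_j^-)\bigr)>0$, and on the convention $\varepsilon^S=\varepsilon/D$ from Section~\ref{sec:4.1} being applied consistently so that the $\varepsilon$-margins match on both sides. It is also worth stating explicitly in which sense the conversion is an equivalence: the affine maps $f_v,f_b$ send every solution of $E^{A^{R^1}}\cup E^{Sort^1}$ to a solution of $E^{A^{R^0}}\cup E^{Sort^0}$, while conversely any UTA-like solution is the $f_v,f_b$-image of itself (taking $v_j:=v_j^S$ and $b_h:=b_h^S$, for which $\theta=0$ and $D=1$, so $\varepsilon^S=\varepsilon$); hence the two systems describe the same family of sorting models up to this affine reparametrization, which is exactly what the subsequent subsections require. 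This, together with Proposition~\ref{proposition:1}, completes the proof of Proposition~\ref{proposition:2}.
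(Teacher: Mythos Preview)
Your proof is correct and follows essentially the same affine-reparametrization approach as the paper: both arguments rest on the fact that $f_v$ and $f_b$ are the same positive-scalar-times-shift map, so clearing $D$ and $\theta$ converts each constraint in $E^{A^{R^0}}\cup E^{Sort^0}$ to its counterpart in $E^{A^{R^1}}\cup E^{Sort^1}$. The paper's proof is terser---it simply cites Proposition~\ref{proposition:1} for the assignment constraints and asserts the substitutions for the rest---whereas you carry out the algebra explicitly and are more careful about the non-degeneracy of $D$ and the bidirectional sense of the equivalence, which is a welcome addition rather than a different route.
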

\begin{proof}
As per Proposition \ref{proposition:1}, we have that $b_{h-1}\le V(a_i) < b_{h} \iff b_{h-1}^S\le V^S(a_i) < b_h^S$. Consequently, the constraint set $E^{A^{R^0}}$ can be converted into $E^{A^{R^1}}$. Furthermore, By substituting $v^S(\cdot)$ with $f_v(\cdot)$, the fourth constraint in $E^{Sort^0}$ can be equivalently replaced by the second constraint in $E^{Sort^1}$. This also leads to $V(a_{i})=\sum\nolimits_{j=1}^mv_j(x_{ij})$. Additionally, based on the transformation function $f_b(\cdot)$, the last two constraints in $E^{Sort^0}$ can be transformed into the last two constraints in $E^{Sort^1}$.
This completes the proof of Proposition \ref{proposition:2}.
\end{proof}

The constraint set $E^{Sort^1}$ includes some variables $v_j(g_j^-)$, $v_j(g_j^+)$ that are utilized to denote $b_0$ and $b_q$. However, accounting for potential non-monotonic criteria makes it challenging to directly represent $b_0$ and $b_q$ without introducing additional binary variables. This could increase the model's complexity. As a result, when formulating the representative sorting model, we omit the constraints $b_0 = \sum\nolimits_{j=1}^m v_j(g_j^-),  b_q = \sum\nolimits_{j=1}^m v_j(g_j^+)+ \varepsilon$ from $E^{Sort^1}$. It is important to note that such omissions do not impact the sorting result since $b_0$ and $b_q$ are derived after obtaining the values of $v_j(\beta_j^l)$. Consequently, we can express the constraint sets $E^{A^{R^1}}$ and $E^{Sort^1}$ as follows:
\begin{equation}
E^{A^{R}}\begin{cases}
\begin{aligned}
& b_{B_{i}-1}\le V(a_{i})\le  b_{B_{i}} - \varepsilon, \text{ if } B_{i}\in\{2,\ldots,q-1\}, \forall a_{i}\in A^R\\
& V(a_{i}) \le b_1 - \varepsilon, \text{ if } B_{i}=1, \forall a_{i}\in A^R\\
& V(a_{i}) \ge b_{q-1}, \text{ if } B_{i}=q, \forall a_{i}\in A^R.\\
\end{aligned}
\end{cases}
\end{equation}

\begin{equation}
E^{Sort}\begin{cases}
\begin{aligned}
&V(a_{i})=\sum\limits_{j=1}^mv_j(x_{ij}),\forall a_{i}\in A^R \\
&v_j(x_{ij})=v_j(\beta^l_j)+\frac{x_{ij}-\beta^l_j}{\beta^{l+1}_j-\beta^l_j}(v_j(\beta^{l+1}_j)-v_j(\beta^l_j)),\\ &\hspace{5em}\text{if}\ x_{ij}\in[\beta^l_j,\beta^{l+1}_j], \forall a_{i}\in A^R, j\in M \\
&b_{h} - b_{h-1} \ge \varepsilon, h=2,\ldots,q-1.
\end{aligned}
\end{cases}
\end{equation}

Furthermore, when modeling the non-monotonic criteria in MCS problems, it is typically desirable to bound the solution space of the marginal value functions \citep{Ghaderi17ejor}. To achieve this, we utilize the following constraints to restrict the marginal values of breakpoints for each criterion, i.e.,
\begin{equation}\label{eq:bound}
E^{Bound}\left\{
\begin{aligned}
& v_j(\beta_j^l)\ge 0, l=1,\ldots,s_j+1, j\in M\\
& v_j(\beta_j^l)\le 1, l=1,\ldots,s_j+1, j\in M.
\end{aligned}
\right.
\end{equation}

\begin{remark}
The constraint set $E^{Bound}$ guarantees that the value of $v_j(\beta_j^l)$ are non-negative and does not exceed 1. In other words, it enforces that 0 is the lower bound for $v_j(\beta_j^l)$, and 1 is the upper bound.  It's worth noting that the selection of these lower and upper bounds is somewhat arbitrary, and different values can be chosen to constrain $v_j(\beta_j^l)$ if needed.
\end{remark}

In the rest of this paper, we will employ the constraint sets $E^{A^{R}}$, $E^{Sort}$, and $E^{Bound}$ to model the decision maker's assignment example preference information for MCS problems with non-monotonic criteria.

\subsection{Consistency check and preference adjustment}\label{sec:4.3}

\subsubsection{Consistency check}\label{sec:4.3.1}
In this paper, our preference model posits the use of an additive value function to reconstruct the decision maker's assignment example preference information. However, in certain instances, the assumed preference model may fall short of fully restoring the decision maker's assignment example preference information. This limitation primarily arises due to the preference model's restrictive assumptions, such as preference independence and additivity \citep{Ghaderi21omega}. To identify a sorting model that is compatible with the assumed preference model, it becomes crucial to check the consistency of the assignment example preference information provided by the decision maker.

To this end, inspired by the UTADIS method \citep{Devaud80UTADIS}, the following optimization model is developed to check whether the assignment example preference information is consistent:
\begin{equation}\tag{M-1}\label{m:con_check}
\begin{aligned}
&\min \ \sum\limits_{a_{i}\in A^R} \delta_{i}^+ + \delta_{i}^- \\
&\begin{aligned}
\rm{s.t.} \ & b_{B_{i}-1} - \delta_{i}^+ \le V(a_{i})\le  b_{B_{i}} + \delta_{i}^- - \varepsilon, \text{ if } B_{i}\in\{2,\ldots,q-1\}, \forall a_{i}\in A^R\\
& V(a_{i})\le  b_1 + \delta_{i}^- - \varepsilon, \text{ if } B_{i}=1, \forall a_{i}\in A^R\\
& V(a_{i})\ge b_{q-1} - \delta_{i}^+, \text{ if } B_{i}=q, \forall a_{i}\in A^R\\
&\delta_{i}^+\ge 0, \delta_{i}^- \ge 0, \forall a_{i}\in A^R\\
&E^{Bound},\ E^{Sort}.
\end{aligned}
\end{aligned}
\end{equation}

In the model \eqref{m:con_check}, auxiliary variables $\delta_{i}^+$ and $\delta_{i}^-$, $\forall a_{i}\in A^R$ are introduced to judge the consistency of the assignment example preference information. If the optimal objective function value of the model \eqref{m:con_check} is 0, the assignment example preference information is deemed consistent. Conversely, if the value is non-zero, it indicates inconsistency, requiring adjustment through the minimum adjustment optimization model proposed in Section \ref{sec:4.3.2}.

\subsubsection{Preference adjustment}\label{sec:4.3.2}
To assist the decision maker in adjusting inconsistent assignment example preference information,  a minimum adjustment optimization model is proposed. For convenience, let the adjusted assignment example preference information be denoted as $\{a_{i}\rightarrow C_{\overline{B}_{i}}|a_{i}\in A^R\}$.

To preserve the initial assignment example preference information provided by the decision maker to the greatest extent,
the objective of the optimization model is to minimize the sum of steps in moving between categories for all reference alternatives within the set of assignment example preference information, i.e.,
\begin{equation}\label{eq:min}
\min \sum\limits_{a_{i}\in A^R}|\overline{B}_{i}-B_{i}|.
\end{equation}

Moreover, to ensure the consistency of the adjusted assignment example preference information, the following constraints must be satisfied:
\begin{equation}\label{eq:EAR}
E^{A^{R'}}\left\{
\begin{aligned}
& b_{\overline{B}_{i}-1} \le V(a_{i})\le b_{\overline{B}_{i}} - \varepsilon, \text{ if } \overline{B_{i}}\in\{2,\ldots,q-1\}, \forall a_{i}\in A^R\\
& V(a_{i}) \le  b_1 - \varepsilon, \text{ if } \overline{B_{i}}=1, \forall a_{i}\in A^R\\
& V(a_{i}) \ge b_{q-1}, \text{ if } \overline{B_{i}}=q, \forall a_{i}\in A^R.\\
\end{aligned}\right.
\end{equation}

\begin{proposition}\label{proposition:3}
Let $t_{h{i}}$ be a binary variable, $\forall h\in Q$, $ a_{i}\in A^R$, then $E^{A^{R'}}$ can be equivalently transformed into the following constraint set:
\begin{equation}\label{eq:T_EAR}
E^{\overline{A^{R}}}\left\{
\begin{aligned}
&V(a_{i})\ge b_{h-1} + M(t_{h{i}} - 1), h=2,\ldots,q, \forall a_{i}\in A^R\\
&V(a_{i})\le  b_{h} - \varepsilon + M(1 - t_{h{i}}), h=1,\ldots,q-1, \forall a_{i}\in A^R\\
&\sum\limits_{h=1}^q t_{h{i}} = 1, \forall a_{i}\in A^R\\
&\overline{B}_{i} = \sum\limits_{h=1}^q h\cdot t_{h{i}}, \forall a_{i}\in A^R.
\end{aligned}\right.
\end{equation}
\end{proposition}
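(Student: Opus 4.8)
\textbf{Proof proposal for Proposition \ref{proposition:3}.}

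The plan is to show the two-way equivalence by arguing that the binary variables $t_{h i}$ act as indicator variables pinpointing exactly which category each reference alternative falls into, and that with the big-$M$ slack the pair of inequalities in $E^{\overline{A^R}}$ becomes vacuous for the ``wrong'' categories and collapses to the defining inequalities of $E^{A^{R'}}$ for the ``right'' one. First I would fix a reference alternative $a_i \in A^R$ and exploit the constraint $\sum_{h=1}^q t_{h i} = 1$ together with $t_{h i}\in\{0,1\}$ to conclude that there is a unique index $h^\ast$ with $t_{h^\ast i}=1$ and $t_{h i}=0$ for $h\neq h^\ast$; by the last equation of $E^{\overline{A^R}}$ this forces $\overline{B}_i = h^\ast$. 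So the content to verify is that, given $\overline{B}_i = h^\ast$, the big-$M$ constraints are equivalent to the bracketed inequalities of $E^{A^{R'}}$ for that value of $\overline{B}_i$.

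Next I would split into the three cases $h^\ast\in\{2,\ldots,q-1\}$, $h^\ast=1$, and $h^\ast=q$, mirroring the case structure of $E^{A^{R'}}$. In the generic case $h^\ast\in\{2,\ldots,q-1\}$: for the index $h=h^\ast$ we have $t_{h^\ast i}=1$, so the first family gives $V(a_i)\ge b_{h^\ast-1}$ and the second gives $V(a_i)\le b_{h^\ast}-\varepsilon$, which are precisely the constraints in $E^{A^{R'}}$; for every other index $h\neq h^\ast$ the term $M(t_{h i}-1) = -M$ (resp.\ $M(1-t_{h i})=M$) makes the inequality $V(a_i)\ge b_{h-1}-M$ (resp.\ $V(a_i)\le b_h-\varepsilon+M$) trivially satisfied, provided $M$ is chosen large enough — here I would note explicitly that $M$ must dominate the range of attainable global values and thresholds, e.g.\ $M > 1 + \varepsilon$ together with the bounds from $E^{Bound}$, so that $b_{h-1}-M < V(a_i)$ and $V(a_i) < b_h-\varepsilon+M$ hold unconditionally. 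The boundary cases are handled the same way: when $h^\ast=1$ the lower-bound family is indexed only by $h=2,\ldots,q$, so no active lower bound is imposed on $a_i$ (the $h=1$ lower bound does not exist), and the $h=1$ upper bound reads $V(a_i)\le b_1-\varepsilon$, matching the second line of $E^{A^{R'}}$; symmetrically, when $h^\ast=q$ the upper-bound family is indexed by $h=1,\ldots,q-1$, imposing no active upper bound, while the $h=q$ lower bound gives $V(a_i)\ge b_{q-1}$, matching the third line of $E^{A^{R'}}$.

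For the converse direction I would start from a solution of $E^{A^{R'}}$, read off $\overline{B}_i$ from that constraint set, set $t_{\overline{B}_i,i}=1$ and all other $t_{h i}=0$; then $\sum_h t_{h i}=1$ and $\overline{B}_i=\sum_h h\cdot t_{h i}$ hold by construction, and the big-$M$ inequalities hold by the case analysis just described. Hence every feasible point of one system extends to / restricts to a feasible point of the other with the same values of $V(a_i)$ and $\overline{B}_i$, and in particular the objective $\sum_{a_i\in A^R}|\overline{B}_i - B_i|$ is unaffected, so the two formulations are equivalent. The main obstacle — really the only subtle point — is making the big-$M$ argument rigorous: one has to state a concrete sufficient lower bound on $M$ in terms of the a priori bounds on $V(a_i)$ (available from $E^{Bound}$, which forces $0\le v_j(\beta_j^l)\le 1$ and hence $0\le V(a_i)\le m$) and on the thresholds $b_h$, and check that with that choice the ``inactive'' constraints are genuinely redundant rather than merely heuristically so; everything else is bookkeeping over the three index ranges.
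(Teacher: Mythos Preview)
Your proposal is correct and follows essentially the same approach as the paper's proof, which merely sketches that the first two big-$M$ constraints activate exactly when $t_{hi}=1$, the third constraint enforces a unique category, and the fourth recovers $\overline{B}_i$. Your version is considerably more thorough --- the explicit two-way argument, the three-case split matching the structure of $E^{A^{R'}}$, and the care about a sufficient lower bound on $M$ all go beyond the paper's one-paragraph justification --- but the underlying idea is identical.
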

\begin{proof}
The first two constraints of $E^{\overline{A^{R}}}$ ensure that if $a_{i}\rightarrow C_{B_{i}}$, then $t_{h{i}}=1$; otherwise, $t_{hi}=0$, and the third constraint of $E^{\overline{A^{R}}}$ guarantees that each reference alternative can only be assigned to one category. Furthermore, the last constraint is used to calculate the adjusted category to which the reference alternative is assigned. This completes the proof of Proposition \ref{proposition:3}.
\end{proof}

Taking into account the considerations mentioned above, we formulate the minimum adjustment optimization model as:
\begin{equation}\tag{M-2}\label{m:mini}
\begin{aligned}
&\min \sum\limits_{a_{i}\in A^R}|\overline{B}_{i}-B_{i}|\\
&\begin{aligned}
\rm{s.t.} \ &E^{\overline{A^R}}, \ E^{Bound},\ E^{Sort}.
\end{aligned}
\end{aligned}
\end{equation}

By solving the model \eqref{m:mini}, the consistent assignment example preference information can be obtained. Let $\overline{B}_{i}^*$ be the optimal solution to the model \eqref{m:mini}, then the adjusted assignment example preference information is denoted by $\{a_{i}\rightarrow C_{\overline{B}_{i}^*}|a_{i}\in A^R\}$. Without loss of generality, we will continue to denote the assignment example preference information by $\{a_{i}\rightarrow C_{{B}_{i}}|a_{i}\in A^R\}$ for the remainder of this section.

\subsection{The lexicographic optimization-based approaches}\label{sec:4.4}
In this section, we develop some lexicographic optimization-based approaches to derive a representative sorting model for MCS problems with non-monotonic criteria.

When determining a representative sorting model, it becomes crucial to take into account both model complexity and discriminative power \citep{Ghaderi17ejor}. The model complexity can be captured by analyzing changes in the slope of the marginal value function. In particular, a substantial change in the slope indicates higher complexity, while a small change suggests lower complexity.
Additionally, the model's discriminative power can be examined by the value of the parameter $\varepsilon$, i.e., a larger value of the parameter $\varepsilon$ corresponds to better discriminative power of the model.

To do so, we introduce a variable $\gamma_{lj}$ for the $j$-th criterion's marginal value function to represent the change in slope between two consecutive subintervals, namely $[\beta_j^{l-1},\beta_j^l]$ and $[\beta_j^l,\beta_j^{l+1}]$, where
\begin{equation}
\gamma_{lj} = \left|\frac{v_j(\beta_j^l)-v_j(\beta_j^{l-1})}{\beta_j^l-\beta_j^{l-1}} - \frac{v_j(\beta_j^{l+1})-v_j(\beta_j^l)}{\beta_j^{l+1}-\beta_j^l}\right|, l=2,\ldots,s_j, j\in M.
\end{equation}

To obtain a representative sorting model that strikes a balance between low complexity and strong discriminative power, the objective is to minimize the sum of all $\gamma_{lj}$ while maximizing the parameter $\varepsilon$.  This results in the formulation of a multi-objective optimization model as follows:
\begin{equation}\tag{M-3}\label{m:GP}
\begin{aligned}
&\min\ \sum\limits_{j=1}^m\sum\limits_{l=2}^{s_j}\gamma_{lj}\\
&\max\  \varepsilon\\
&\begin{aligned}
\rm{s.t.} &\ E^{A^{R}}, \ E^{Bound},\ E^{Sort}\\
&\gamma_{lj} = \left|\frac{v_j(\beta_j^l)-v_j(\beta_j^{l-1})}{\beta_j^l-\beta_j^{l-1}} - \frac{v_j(\beta_j^{l+1})-v_j(\beta_j^l)}{\beta_j^{l+1}-\beta_j^l}\right|, l=2,\ldots,s_j, j\in M.
\end{aligned}
\end{aligned}
\end{equation}

To address the multi-objective optimization model \eqref{m:GP}, we employ lexicographic optimization, a technique that converts a multi-objective optimization model into several single-objective optimization models based on the priority assigned to each objective. To implement this, we introduce two approaches, namely Approach 1 and Approach 2, to derive a representative sorting model, taking into account different objective priorities.

\textbf{Approach 1}

In Approach 1, we prioritize model complexity over model discriminative power. Under this assumption, we first aim to optimize the objective $\min \sum\limits_{j=1}^m\sum\limits_{l=2}^{s_j}\gamma_{lj}$, and then we optimize the second objective $\max \varepsilon$ while ensuring that the first objective reaches its optimal value. Consequently, the optimization model for the first stage in Approach 1 is constructed as
\begin{equation}\tag{M-4}\label{m:method1_1_0}
\begin{aligned}
&\min \sum\limits_{j=1}^m\sum\limits_{l=2}^{s_j}\gamma_{lj}\\
&\begin{aligned}
\rm{s.t.} &\ E^{{A^R}}, \ E^{Bound},\ E^{Sort}\\
&\gamma_{lj} = \left|\frac{v_j(\beta_j^l)-v_j(\beta_j^{l-1})}{\beta_j^l-\beta_j^{l-1}} - \frac{v_j(\beta_j^{l+1})-v_j(\beta_j^l)}{\beta_j^{l+1}-\beta_j^l}\right|, l=2,\ldots,s_j, j\in M.
\end{aligned}
\end{aligned}
\end{equation}

\begin{proposition}
Let us define the following constraint set $E^{Slope}$,
\begin{equation}\label{eq:gamma}
E^{Slope} \left\{
\begin{aligned}
&\gamma_{lj} \ge \frac{v_j(\beta_j^l)-v_j(\beta_j^{l-1})}{\beta_j^l-\beta_j^{l-1}} - \frac{v_j(\beta_j^{l+1})-v_j(\beta_j^l)}{\beta_j^{l+1}-\beta_j^l}, l=2,\ldots,s_j, j\in M \\
&\gamma_{lj} \ge - \frac{v_j(\beta_j^l)-v_j(\beta_j^{l-1})}{\beta_j^l-\beta_j^{l-1}} + \frac{v_j(\beta_j^{l+1})-v_j(\beta_j^l)}{\beta_j^{l+1}-\beta_j^l}, l=2,\ldots,s_j, j\in M.
\end{aligned}\right.
\end{equation}
then the model \eqref{m:method1_1_0} can be replaced by the following model:
\begin{equation}\tag{M-5}\label{m:method1_1}
\begin{aligned}
&\min \sum\limits_{j=1}^m\sum\limits_{l=2}^{s_j}\gamma_{lj}\\
&\begin{aligned}
\rm{s.t.} &\ E^{A^R}, \ E^{Bound},\ E^{Sort}, \ E^{Slope}.
\end{aligned}
\end{aligned}
\end{equation}
\end{proposition}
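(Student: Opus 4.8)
The plan is to show that replacing the absolute-value equality defining each $\gamma_{lj}$ by the two inequalities in $E^{Slope}$ yields an optimization model with the same optimal objective value and the same set of optimal solutions (in the relevant variables). The key observation is a standard linearization fact: for any real number $c$, the constraint $\gamma = |c|$ is equivalent, \emph{at optimality of a minimization objective that includes $\gamma$ with a positive coefficient}, to the pair of constraints $\gamma \ge c$ and $\gamma \ge -c$. Here each $c$ is the slope-difference $\frac{v_j(\beta_j^l)-v_j(\beta_j^{l-1})}{\beta_j^l-\beta_j^{l-1}} - \frac{v_j(\beta_j^{l+1})-v_j(\beta_j^l)}{\beta_j^{l+1}-\beta_j^l}$, which is an affine expression in the decision variables $v_j(\beta_j^l)$ (the breakpoints $\beta_j^l$ are fixed data), so $E^{Slope}$ is a set of linear inequalities.

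First I would argue the ``$\le$'' direction for the optimal values: any feasible solution of \eqref{m:method1_1_0} is feasible for \eqref{m:method1_1}, because $\gamma_{lj} = |c_{lj}|$ implies both $\gamma_{lj} \ge c_{lj}$ and $\gamma_{lj} \ge -c_{lj}$, and all other constraints ($E^{A^R}$, $E^{Bound}$, $E^{Sort}$) are shared; hence the optimal value of \eqref{m:method1_1} is no larger than that of \eqref{m:method1_1_0}. Second I would argue the reverse: take any optimal solution of \eqref{m:method1_1}; the two inequalities of $E^{Slope}$ give $\gamma_{lj} \ge \max\{c_{lj}, -c_{lj}\} = |c_{lj}|$ for every $l,j$. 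If some $\gamma_{lj} > |c_{lj}|$ strictly, then decreasing that $\gamma_{lj}$ to $|c_{lj}|$ keeps all $E^{Slope}$ constraints satisfied (and affects no other constraint, since $\gamma_{lj}$ appears only there and in the objective), while strictly decreasing the objective $\sum_{j}\sum_{l}\gamma_{lj}$ — contradicting optimality. Therefore at optimality $\gamma_{lj} = |c_{lj}|$ for all $l,j$, so the optimal solution of \eqref{m:method1_1} is feasible, hence optimal, for \eqref{m:method1_1_0}, and the two optimal values coincide. I would also note the converse containment of optimal solution sets follows symmetrically, so the two models are equivalent in the sense that matters for Approach 1.

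The main obstacle — really a point requiring care rather than a genuine difficulty — is making precise in what sense the two models are ``the same.'' They are not literally the same feasible region: \eqref{m:method1_1} has strictly more feasible points (those with slack $\gamma_{lj} > |c_{lj}|$). The claim is only that they share the same optimal value and that every optimal solution of \eqref{m:method1_1} automatically satisfies the equality constraints of \eqref{m:method1_1_0}, which is exactly what is needed so that the subsequent second-stage maximization of $\varepsilon$ in Approach 1 operates over the correct set. I would phrase the proposition's conclusion accordingly and rely on the positivity of the objective coefficients on the $\gamma_{lj}$, without which the linearization would fail; I would remark that this is the standard epigraph reformulation of a piecewise-linear convex penalty and that feasibility of $E^{Slope}$ for at least one point is inherited from feasibility of \eqref{m:method1_1_0} (which the consistency check in Section \ref{sec:4.3} guarantees after any needed adjustment).
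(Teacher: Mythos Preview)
Your proposal is correct and follows essentially the same approach as the paper's proof: both rely on the observation that the two inequalities in $E^{Slope}$ force $\gamma_{lj}\ge |c_{lj}|$, and that minimizing $\sum_{j,l}\gamma_{lj}$ rules out any strict slack, so $\gamma_{lj}=|c_{lj}|$ at optimality. Your version is more detailed---you make the two-direction comparison of optimal values explicit and flag that the feasible regions differ while the optimal sets coincide---but the underlying argument is identical to the paper's.
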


\begin{proof}
The constraint set $E^{Slope}$ ensures that $\gamma_{lj} \ge \left|\frac{v_j(\beta_j^l)-v_j(\beta_j^{l-1})}{\beta_j^l-\beta_j^{l-1}} - \frac{v_j(\beta_j^{l+1})-v_j(\beta_j^l)}{\beta_j^{l+1}-\beta_j^l}\right|, l=2,\ldots,s_j, j\in M$.
As the objective of the model \eqref{m:method1_1} is to minimize $\sum\limits_{j=1}^m\sum\limits_{l=2}^{s_j}\gamma_{lj}$, any solutions that satisfy $\gamma_{lj} > \left|\frac{v_j(\beta_j^l)-v_j(\beta_j^{l-1})}{\beta_j^l-\beta_j^{l-1}} - \frac{v_j(\beta_j^{l+1})-v_j(\beta_j^l)}{\beta_j^{l+1}-\beta_j^l}\right|$ are not optimal solutions to the model \eqref{m:method1_1}.

Consequently, $\gamma_{lj} = \left|\frac{v_j(\beta_j^l)-v_j(\beta_j^{l-1})}{\beta_j^l-\beta_j^{l-1}} - \frac{v_j(\beta_j^{l+1})-v_j(\beta_j^l)}{\beta_j^{l+1}-\beta_j^l}\right|$, and the model  \eqref{m:method1_1_0} can be transformed into the model \eqref{m:method1_1}.
\end{proof}

Let $\gamma^*$ be the optimal objective function value of the model \eqref{m:method1_1}. Then, the optimization model for the second stage in Approach 1 can be developed as
\begin{equation}\tag{M-6}\label{m:method1_2}
\begin{aligned}
&\max\ \varepsilon \\
&\begin{aligned}
\rm{s.t.} &\ E^{{A^R}}, \ E^{Bound},\ E^{Sort} \\
& \gamma^* = \sum\limits_{j=1}^m\sum\limits_{l=2}^{s_j}\gamma_{lj} \\
& \gamma_{lj} = \left|\frac{v_j(\beta_j^l)-v_j(\beta_j^{l-1})}{\beta_j^l-\beta_j^{l-1}} - \frac{v_j(\beta_j^{l+1})-v_j(\beta_j^l)}{\beta_j^{l+1}-\beta_j^l}\right|, l=2,\ldots,s_j, j\in M.
\end{aligned}
\end{aligned}
\end{equation}

\textbf{Approach 2}

It is assumed that the model discriminative power has a higher priority than the  model complexity in Approach 2. Therefore, the optimization model in the first stage of Approach 2 is to maximize the parameter $\varepsilon$, and we have
\begin{equation}\tag{M-7}\label{m:method2_1}
\begin{aligned}
&\max \varepsilon \\
&\begin{aligned}
\rm{s.t.} &\ E^{{A^R}}, \ E^{Bound},\ E^{Sort}\\
& \varepsilon >0.
\end{aligned}
\end{aligned}
\end{equation}

Let $\varepsilon^*$ be the optimal objective function value obtained by solving the model \eqref{m:method2_1}, then the optimization model in the second stage is constructed as
\begin{equation}\tag{M-8}\label{m:method2_2}
\begin{aligned}
&\min \sum\limits_{j=1}^m\sum\limits_{l=2}^{s_j}\gamma_{lj}\\
&\begin{aligned}
\rm{s.t.} &\ E^{A^R}, \ E^{Bound},\ E^{Sort}, \ E^{Slope} \\
& \varepsilon=\varepsilon^*.
\end{aligned}
\end{aligned}
\end{equation}

By solving the optimization models in Approach 1 or Approach 2, the representative model can be derived for MCS with non-monotonic criteria. To be specific, the marginal value functions of all criteria and the category thresholds are determined. Note that the variables $b_0$ and $b_q$ have been removed from the constraint set when establishing the above optimization models. Let $v_j(\beta_j^l)^*$ be the optimal solution for $v_j(\beta_j^l)$ obtained in the second stage optimization model, $l=1,\ldots,s_j+1, j\in M$, then $b_0$ and $b_q$ can be obtained by $b_0 = \sum\nolimits_{j=1}^m \min v_j(\beta_j^l)^*,  b_q = \sum\nolimits_{j=1}^m \max v_j(\beta_j^l)^*+ \varepsilon$, respectively.

Furthermore, based on the representative sorting model, the sorting result for the non-reference alternatives can be obtained by applying the threshold-based value-driven sorting procedure.

\begin{remark}
In practice, the choice between the two proposed approaches to derive a representative sorting model depends on the decision maker's priorities. If the decision maker considers model complexity to be more important than model discriminative power, Approach 1 is suitable. Conversely, if the decision maker values model discriminative power more, Approach 2 is the preferred option.
\end{remark}

\subsection{The proposed algorithm}\label{sec:4.5}
In this section, we propose Algorithm \ref{alg:1} to summarize the proposed approach. In addition, a flow chart of Algorithm \ref{alg:1} is depicted in Fig. \ref{fig:flowchart}.
\begin{figure}[htbp]
\centering
\includegraphics[scale=0.55]{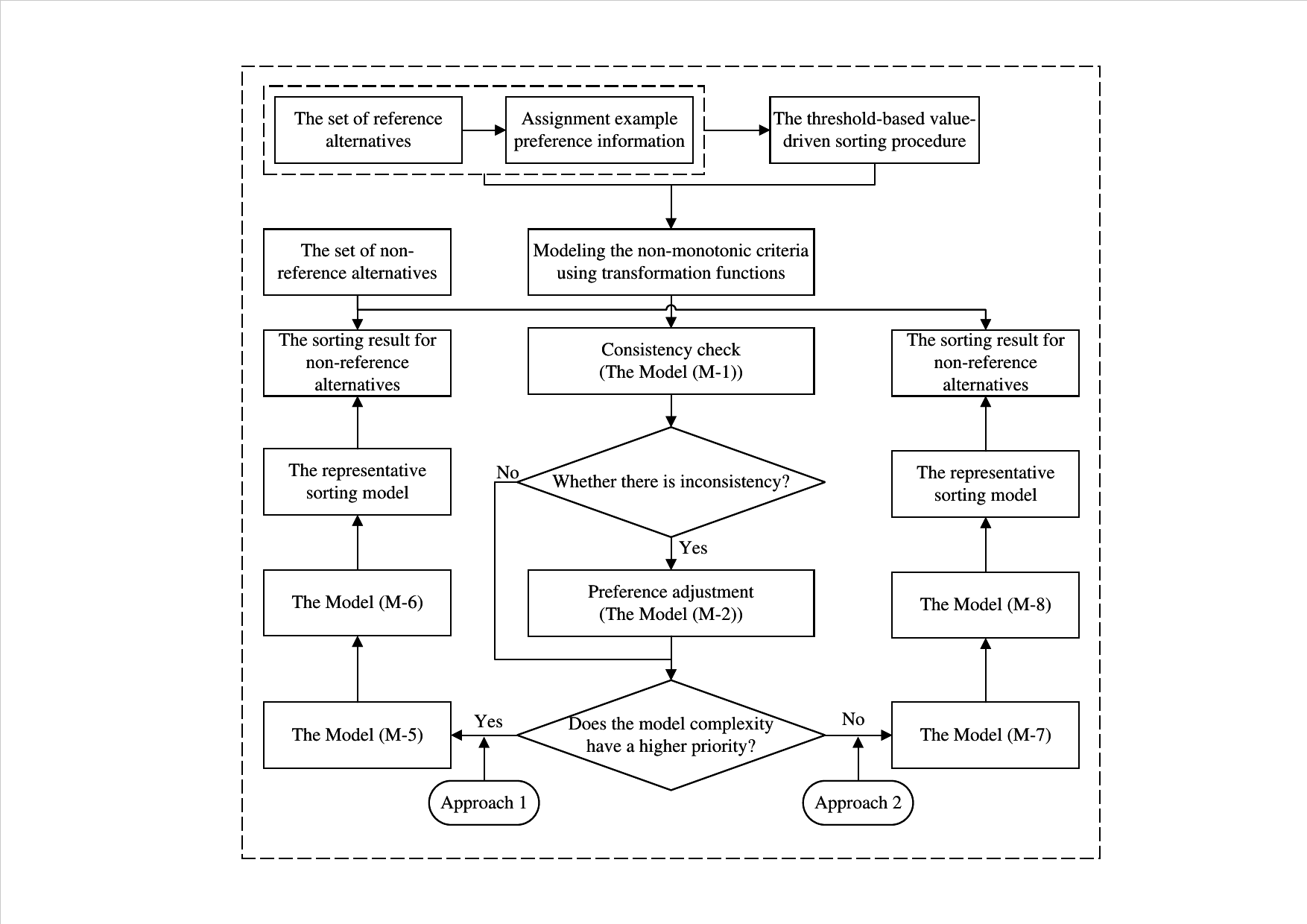}
\caption{The flow chart of the proposed algorithm}
\label{fig:flowchart}
\end{figure}

\begin{algorithm}\label{alg:1}
~\\
{\bf Input:} The decision matrix $X=(x_{ij})_{n\times m}$, the assignment example preference information $\{a_{i}\rightarrow C_{{B}_{i}}|a_{i}\in A^R\}$, the number of subintervals for each criterion $s_j$, $j\in M$.\\
{\bf Output:} The representative sorting model and sorting result for non-reference alternatives.
\begin{enumerate}[\bf Step 1:]
  \item Based on the decision matrix $X=(x_{ij})_{n\times m}$ and the specified number of subintervals for each criterion $s_j$, identify the breakpoints for each criterion, represented as $\beta_j^l$, $l=1,\ldots,s_j+1$, $j\in M$.
  \item Employ the model \eqref{m:con_check} to check whether the assignment example preference information provided by the decision maker is consistent. Proceed to Step 4 if the optimal objective function value of the model \eqref{m:con_check} equals 0. Otherwise, continue to the following step.
  \item Solve the model \eqref{m:mini} to obtain the adjusted assignment example preference information for each reference alternative, denoted by $\{a_{i}\rightarrow C_{{B}_{i}^*}|a_{i}\in A^R\}$, and then let $B_{i}=B_{i}^*$, $\forall a_i\in A^R$.
  \item Determine the priority between model complexity and discriminative power. If model complexity takes precedence, solve the models \eqref{m:method1_1} and \eqref{m:method1_2} to derive the marginal value functions and category thresholds, and then proceed to Step 5. If, however, model discriminative power is deemed more important, solve the models \eqref{m:method2_1} and \eqref{m:method2_2} to obtain the marginal value functions and category thresholds. Then move on to Step 5.
 \item In light of the representative sorting model, use the threshold-based value-driven sorting procedure to determine the sorting result for non-reference alternatives.
 \item Output the representative sorting model and the sorting result for non-reference alternatives.
\end{enumerate}
\end{algorithm}

\subsection{Robustness analysis}\label{sec:4.6}
The proposed lexicographic optimization-based approaches offer a solution for deriving a representative sorting model. However, different approaches for obtaining representative sorting models may lead to varying sorting result for non-reference alternatives. To validate the robustness of the proposed non-monotonic preference modeling approach, this subsection introduces a robustness analysis method based on the concept of ``possible assignments'', which refers to the set of categories to which a given alternative can be assigned by at least one sorting model that aligns with the assignment example preference information provided by the decision maker \citep{Greco10ejor,Liu15ejor,Kadzinski20ijar}.

To facilitate the analysis, let $C^P_i$ be the set of possible assignments for a non-reference alternative $a_{i}\in A^N$. For each non-reference alternative $a_{i}\in A^N$ and each category $C_h$, $h\in Q$, we add $a_i\rightarrow C_h$ to the assignment example preference information provided by the decision maker, and then solve the following optimization model:
\begin{equation}\tag{M-9}\label{m:robust}
\begin{aligned}
&\max \varepsilon \\
&\begin{aligned}
\rm{s.t.} &\ E^{{A^R}}, \ E^{Bound},\ E^{Sort}.
\end{aligned}
\end{aligned}
\end{equation}

Let $\varepsilon^*_h$ denote the optimal objective function value when $a_i\rightarrow C_h$ is included in the assignment example preference information. If $\varepsilon^*_h>0$, then the non-reference alternative $a_i$ can be possibly assigned to the category $C_h$. Conversely, if $\varepsilon^*_h\le 0$, no compatible sorting model exists that can assign the non-reference alternative $a_i$ to the category $C_h$. For each non-reference alternative $a_i$, the model \eqref{m:robust} is solved $q$ times for all $q$ categories. On this basis, the set of possible assignments for the non-reference alternative $a_i$ is determined as $C^P_i=\{C_h| \varepsilon^*_h>0, h\in Q\}$, $\forall a_{i}\in A^N$. Let $|C^P_i|$ be the number of elements in $C^P_i$, if $|C^P_i|=1$, then the non-reference alternative $a_i$ is consistently assigned to a single category $C_h$ by all compatible sorting models, indicating the method's maximum robustness. Otherwise, the non-reference alternative $a_i$ can be assigned to multiple categories.

To further evaluate the robustness of the approaches, we employ the average possible assignment ($APA$) metric, defined as the average number of possible assignments for all non-reference alternatives $a_i\in A^N$. To ensure comparability of results, the $APA$ metric is normalized as follows \citep{Kadzinski21ejor}:
\begin{equation}\label{eq:APA}
APA=1 - \frac{1}{|A^N|}\sum\limits_{a_i\in A^N}\frac{|C^P_i| - 1}{q - 1}.
\end{equation}
where $|A^N|$ and $|C^P_i|$ denote the number of elements in the sets $A^N$ and $C^P_i$, respectively. The value of $APA$ ranges from 0 to 1. An $APA$ value of 1 indicates that all non-reference alternatives are assigned to a single category, reflecting maximum robustness. Conversely, an APA value of 0 suggests that all non-reference alternatives can possibly be assigned to any category.

By applying Eq. \eqref{eq:APA}, we can assess the robustness of the proposed approaches, where a higher $APA$ value indicates greater robustness.

\section{Illustrative example}\label{sec:5}
In this section, we begin by implementing the proposed approaches using a simple illustrative example. Subsequently, we provide further discussions based on the illustrative example.

\subsection{Implementation of the proposed approaches}\label{sec:5.1}
In this subsection, we will apply the proposed approach to a real-world example using the data from Section 4 of \cite{Guo19eswa}. In particular, the majority of the data predominantly stems from \cite{Despotis95ama} and \cite{Ghaderi17ejor}. In this example, a decision maker wants to assign twenty firms (denoted by $A=\{a_1,a_2,\ldots,a_{20}\}$) into four categories based on their financial states: firms in the worst financial state ($C_1$), firms in a lower-intermediate financial state ($C_2$),  firms in an upper-intermediate financial state ($C_3$), and firms in the best financial state ($C_4$). The following three criteria are considered: the cash to total assets ($g_1$), the long term debt and stockholder's equity to fixed assets ($g_2$), and the total liabilities to total assets ($g_3$). The decision matrix is presented as below:
\begin{footnotesize}
\[\setlength{\arraycolsep}{1.5pt}
X= \left(
  \begin{array}{cccccccccccccccccccc}
  3.8 & 5.84 & 0.04 & 4.89 & 0.57 & 16.7 & 3.16 & 25.42 & 17.99 & 3.98 & 0.76 & 24.16 & 2.53 & 35.06 & 0.72 & 24 & 8.86 & 10.58 & 16.35 & 1.7\\
  2.4 & 1.96 & 1.14 & 2.92 & 1.72 & 2.32 & 4.1 & 3.35 & 1.34 & 3.26 & 2.74 & 2.83 & 2.54 & 9.56 & 0.97 & 2.5 & 29.06 & 4.03 & 3.6 & 5.92\\
  60.7 & 63.7 & 64.26 & 55.04 & 64.7 & 53.29 & 23.9 & 59.03 & 73.84 & 84.95 & 84.44 & 70.51 & 81.05 & 61.08 & 99.67 & 99.92 & 47.4 & 89.64 & 56.55 & 85.83\\
  \end{array}\right)^{\rm T}.\]
\end{footnotesize}

Suppose that the decision maker provides assignment example preference information as follows: $\{a_{2}\rightarrow C_4, a_{3}\rightarrow C_2, a_{9}\rightarrow C_3, a_{10}\rightarrow C_2, a_{12}\rightarrow C_1, a_{17}\rightarrow C_3\}$. In what follows, we implement Algorithm \ref{alg:1} to determine the sorting result for non-reference alternatives.

Without loss of generality, we assume that the numbers of subintervals for all criteria are set to 4, i.e., $s_j=4$, $j=1,2,3$. On this basis, we can calculate the breakpoints for each criterion. Let us take the criterion $g_1$ as an example. As $\beta_1^-=0.04$ and $\beta_1^+=35.06$, we obtain the breakpoints as follows: $\beta_1^1=0.04$, $\beta_1^2=8.795$, $\beta_1^3=17.55$, $\beta_1^4=26.305$, and $\beta_1^5=35.06$.

Following this, we apply the model \eqref{m:con_check} to check the consistency of the provided assignment example preference information. Upon solving the model, we find that the optimal objective function value is 0, signifying that the decision maker's assignment example preference information is consistent and requires no adjustments. Consequently, we can proceed to utilize the lexicographic optimization-based approaches to derive a representative sorting model.

\textbf{The implementation of Approach 1}

We first assume that the model complexity has a higher priority. By solving the model \eqref{m:method1_1}, we have that $\gamma^*=0.000683$. Subsequently, by solving the model \eqref{m:method1_2}, the value of the parameter $\varepsilon$ is determined as $\varepsilon^*=0.001$. Accordingly, we obtain the marginal value functions and category thresholds. Denoting $h_{jl}=v_j(\beta_j^l)$, $h=1,2,3$, $l=1,2,\ldots,5$, the marginal values for each breakpoint are expressed as:
\[H=(h_{jl})_{3\times 5}= \left(
  \begin{array}{ccccc}
  0.004 & 0.0071 & 0.0057 & 0.0029 & 0\\
  1 & 0.9992 & 0.9985 & 0.9977 & 0.9969\\
  1 & 0.9989 & 0.9978 & 0.9967 & 0.9957
  \end{array}\right).\]

Examining the marginal values at each breakpoint, we observe that $g_1$ displays non-monotonic behavior, in contrast to the monotonic trends observed in $g_2$ and $g_3$. This demonstrates the capability of the proposed approach to effectively capture the non-monotonic nature of criteria.

Furthermore, the category threshold vector is obtained as $b=(1.9926, 2.0017, 2.0027, 2.0037, 2.0081)^{\rm T}$. Applying the threshold-based value-driven sorting procedure, each alternative $a_i$ is assigned to a specific category denoted by $f_i\in C$, where $i=1,2,\ldots,20$. The global value and the sorting result for all firms are shown in Table \ref{table:global_v}.

\begin{table}[htbp]
\centering
\setlength{\abovecaptionskip}{0pt}
\setlength{\belowcaptionskip}{10pt}
\caption{The global value and sorting result for all firms obtained by Approach 1}
\label{table:global_v}
\begin{tabular}{cccccccccccc}
\toprule
Firm & $V(a_i)$ & $f_i$  & Firm & $V(a_i)$ & $f_i$ & Firm & $V(a_i)$ & $f_i$ & Firm & $V(a_i)$ & $f_i$    \\ \midrule
$a_1$ & 2.0031 & $C_3$ & $a_6$ & 2.0041 & $C_4$ & $a_{11}$ & 2.0007 & $C_1$ & $a_{16}$ & 1.9992 & $C_1$ \\
$a_2$ & 2.0037 & $C_4$ & $a_7$ & 2.0048 & $C_4$ & $a_{12}$ & 2.0007 & $C_1$ & $a_{17}$ & 2.0027 & $C_3$ \\
$a_3$ & 2.0017 & $C_2$ & $a_8$ & 2.0009 & $C_1$ & $a_{13}$ & 2.0015 & $C_1$ & $a_{18}$ & 2.0028 & $C_3$ \\
$a_4$ & 2.0038 & $C_4$ & $a_9$ & 2.0027 & $C_3$ & $a_{14}$ & 1.9970 & $C_1$ & $a_{19}$ & 2.0038 & $C_4$ \\
$a_5$ & 2.0018 & $C_2$ & $a_{10}$ & 2.0017 & $C_2$ & $a_{15}$ & 2 & $C_1$ & $a_{20}$ & 2.0006 & $C_1$ \\
\bottomrule
\end{tabular}
\end{table}

To further validate the effectiveness of the transformation functions proposed in Section \ref{sec:4.1}, we also present the transformed marginal value functions in Fig. \ref{fig:numericla_v1}. Evidently, the transformed marginal value functions adhere to UTA-like standard forms, and the relative importance of the three criteria $g_1$, $g_2$ and $g_3$ (i.e., criteria weights) is obtained as 0.4930, 0.2114 and 0.2956, respectively. Moreover, the category threshold vector after transformation is obtained as $b^S=(0, 0.6275, 0.6965, 0.7655, 1.069)^{\rm T}$. Table \ref{table:T_V} includes the global value and sorting result for all firms based on the transformed marginal value functions.
\begin{figure}[htbp]
\centering
\includegraphics[scale=0.4]{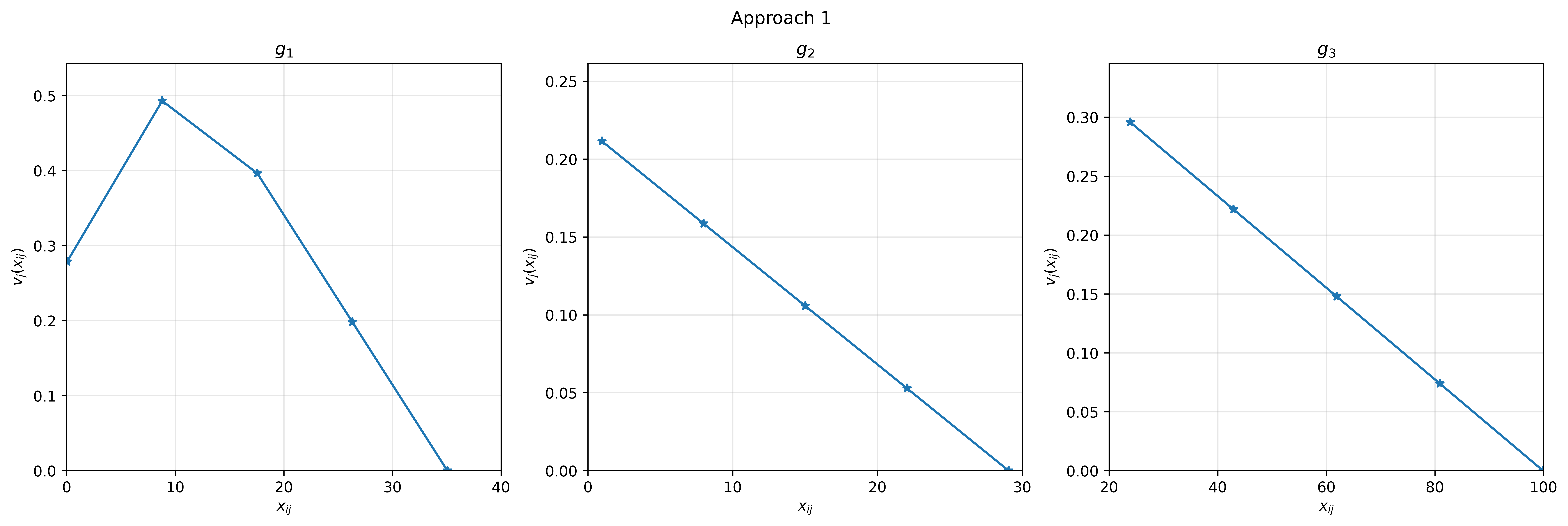}
\caption{The transformed marginal value functions obtained by Approach 1}
\label{fig:numericla_v1}
\end{figure}

\begin{table}[htbp]
\centering
\setlength{\abovecaptionskip}{0pt}
\setlength{\belowcaptionskip}{10pt}
\caption{The transformed global value and sorting result for all firms obtained by Approach 1}
\label{table:T_V}
\begin{tabular}{cccccccccccc}
\toprule
Firm & $V^S(a_i)$ & $f_i$  & Firm & $V^S(a_i)$ & $f_i$ & Firm & $V^S(a_i)$ & $f_i$ & Firm & $V^S(a_i)$ & $f_i$    \\ \midrule
$a_1$ & 0.7239 & $C_3$ & $a_6$ & 0.7884 & $C_4$ & $a_{11}$ & 0.5546 & $C_1$ & $a_{16}$ & 0.4503 & $C_1$ \\
$a_2$ & 0.7655 & $C_4$ & $a_7$ & 0.8385 & $C_4$ & $a_{12}$ & 0.5585 & $C_1$ & $a_{17}$ & 0.6965 & $C_3$ \\
$a_3$ & 0.6275 & $C_2$ & $a_8$ & 0.5707 & $C_1$ & $a_{13}$ & 0.6126 & $C_1$ & $a_{18}$ & 0.7016 & $C_3$ \\
$a_4$ & 0.7687 & $C_4$ & $a_9$ & 0.6965 & $C_3$ & $a_{14}$ & 0.2978 & $C_1$ & $a_{19}$ & 0.7699 & $C_4$ \\
$a_5$ & 0.6344 & $C_2$ & $a_{10}$ & 0.6275 & $C_2$ & $a_{15}$ & 0.5077 & $C_1$ & $a_{20}$ & 0.5483 & $C_1$ \\
\bottomrule
\end{tabular}
\end{table}

From Tables \ref{table:global_v} and \ref{table:T_V}, it is evident that the sorting result for all alternatives are the same before and after transformation.

\textbf{The implementation of Approach 2}

In this scenario, assuming that model discriminative power takes higher priority than model complexity, Approach 2 is used to derive the representative sorting model. Under this assumption, solving the model \eqref{m:method2_1} yields the maximum value of the parameter $\varepsilon$ as $\varepsilon^*=0.2675$. Subsequently, by solving the model \eqref{m:method2_2}, we obtain the optimal objective function value as $\gamma^*=0.458$, along with the marginal values of each breakpoint and the category thresholds. Let $h_{jl}=v_j(\beta_j^l)$, $h=1,2,3$, $l=1,2,\ldots,5$, then the marginal values of each breakpoint is shown as
\[H= (h_{jl})_{3\times 5}=\left(
  \begin{array}{ccccc}
  0.0336 & 1 & 0.5562 & 0 & 0\\
  1 & 0 & 0 & 0 & 0\\
  1 & 0.6972 & 0.3941 & 0 & 1
  \end{array}\right).\]

Moreover, the category threshold vector is obtained as $b=(0, 1.3547, 1.6222, 1.8898, 3.2675)^{\rm T}$, and the global value and sorting result for all firms are presented in Table \ref{table:global_v2}.
\begin{table}[htbp]
\centering
\setlength{\abovecaptionskip}{0pt}
\setlength{\belowcaptionskip}{10pt}
\caption{The global value and sorting result obtained by Approach 2}
\label{table:global_v2}
\begin{tabular}{cccccccccccc}
\toprule
Firm & $V(a_i)$ & $f_i$  & Firm & $V(a_i)$ & $f_i$ & Firm & $V(a_i)$ & $f_i$ & Firm & $V(a_i)$ & $f_i$    \\ \midrule
$a_1$ & 1.6584 & $C_3$ & $a_6$ & 1.9386 & $C_4$ & $a_{11}$ & 1.0465 & $C_1$ & $a_{16}$ & 1.9286 & $C_4$ \\
$a_2$ & 1.8898 & $C_4$ & $a_7$ & 1.9323 & $C_4$ & $a_{12}$ & 1.0872 & $C_1$ & $a_{17}$ & 1.6222 & $C_3$ \\
$a_3$ & 1.3547 & $C_2$ & $a_8$ & 1.1573 & $C_1$ & $a_{13}$ & 1.092 & $C_1$ & $a_{18}$ & 1.9329 & $C_4$ \\
$a_4$ & 1.7949 & $C_3$ & $a_9$ & 1.6222 & $C_3$ & $a_{14}$ & 0.4073 & $C_1$ & $a_{19}$ & 1.7221 & $C_3$ \\
$a_5$ & 1.3215 & $C_1$ & $a_{10}$ & 1.3547 & $C_2$ & $a_{15}$ & 2.0955 & $C_4$ & $a_{20}$ & 0.7705 & $C_1$ \\
\bottomrule
\end{tabular}
\end{table}

Analogously, we display the transformed marginal value functions in Fig. \ref{fig:numericla_v2}. In addition, the relative importance of the three criteria $g_1$, $g_2$ and $g_3$ in Approach 2 is obtained as 0.3333, 0.3333 and 0.3334, respectively, and the category threshold vector after transformation is determined as $b^S=(0, 0.4516, 0.5407, 0.6299,\\$ $ 1.0892)^{\rm T}$. The transformed global value and sorting result for all firms are shown in Table \ref{table:T_V2}.
\begin{figure}[htbp]
\centering
\includegraphics[scale=0.4]{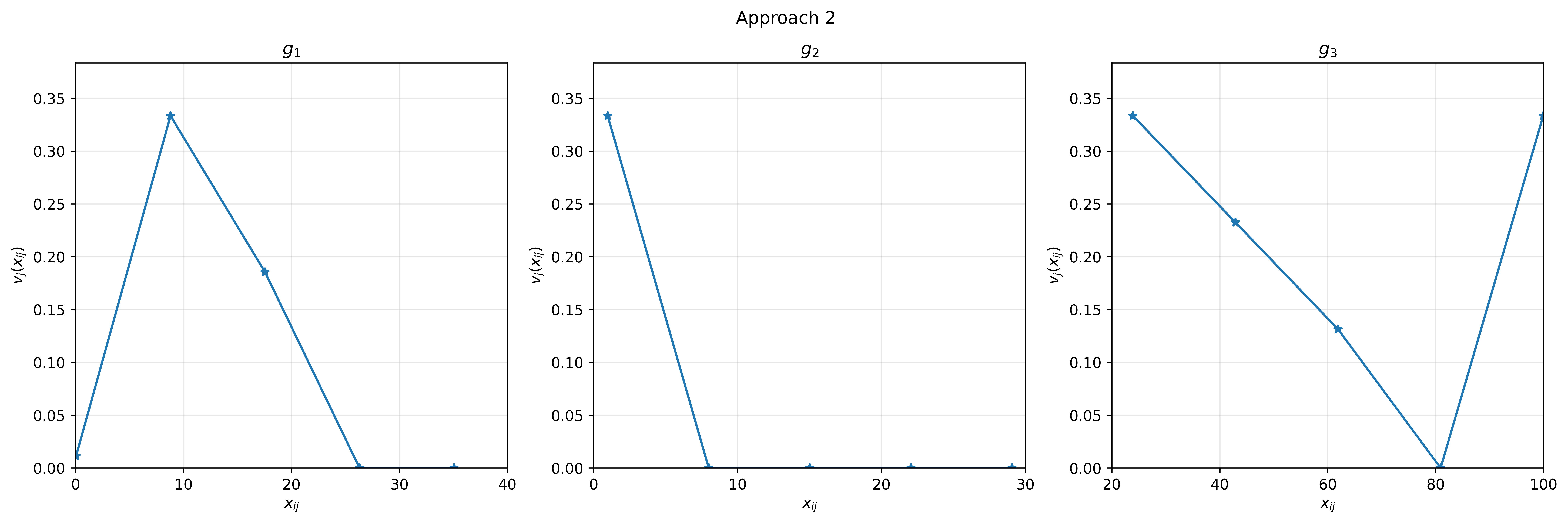}
\caption{The transformed marginal value functions obtained by Approach 2}
\label{fig:numericla_v2}
\end{figure}

\begin{table}[htbp]
\centering
\setlength{\abovecaptionskip}{0pt}
\setlength{\belowcaptionskip}{10pt}
\caption{The transformed global value and sorting result for all firms obtained by Approach 2}
\label{table:T_V2}
\begin{tabular}{cccccccccccc}
\toprule
Firm & $V^S(a_i)$ & $f^S_i$  & Firm & $V^S(a_i)$ & $f^S_i$ & Firm & $V^S(a_i)$ & $f^S_i$ & Firm & $V^S(a_i)$ & $f^S_i$    \\ \midrule
$a_1$ & 0.5528 & $C_3$ & $a_6$ & 0.6462 & $C_4$ & $a_{11}$ & 0.3488 & $C_1$ & $a_{16}$ & 0.6429 & $C_4$ \\
$a_2$ & 0.6299 & $C_4$ & $a_7$ & 0.6441 & $C_4$ & $a_{12}$ & 0.3624 & $C_1$ & $a_{17}$ & 0.5407 & $C_3$ \\
$a_3$ & 0.4516 & $C_2$ & $a_8$ & 0.3858 & $C_1$ & $a_{13}$ & 0.364 & $C_1$ & $a_{18}$ & 0.6443 & $C_4$ \\
$a_4$ & 0.5983 & $C_3$ & $a_9$ & 0.5407 & $C_3$ & $a_{14}$ & 0.1358 & $C_1$ & $a_{19}$ & 0.574 & $C_3$ \\
$a_5$ & 0.4405 & $C_1$ & $a_{10}$ & 0.4516 & $C_2$ & $a_{15}$ & 0.6985 & $C_4$ & $a_{20}$ & 0.2568 & $C_1$ \\
\bottomrule
\end{tabular}
\end{table}

Comparing Tables \ref{table:global_v2} and \ref{table:T_V2}, we observe that the sorting result for the alternatives remain consistent before and after the transformation.

Additionally, a comparison of Figs. \ref{fig:numericla_v1} and \ref{fig:numericla_v2}, reveals that the marginal value functions of $g_1$ derived by both approaches are non-monotonic and exhibit similar shapes, first increasing and then decreasing. This is consistent with practical financial considerations: a higher cash to total assets typically signifies good liquidity and a strong ability to repay debt, positively impacting the financial state of the firm. However, an excessively high ratio may indicate inefficient use of cash for investment or growth opportunities, which can negatively affect the firm's financial state.
The marginal value functions of $g_2$ obtained by both approaches are monotonically decreasing, though they differ somewhat in shape. As the long term debt and stockholder's equity to fixed assets increases, it might indicate that the firm is relying more on long term capital to finance its fixed assets. This increased reliance can elevate financial risk and negatively impact the firm's financial state.
For the marginal value functions of $g_3$, Approach 1 results in a monotonically decreasing function, while Approach 2 yields a non-monotonic one. An increasing total liabilities to total assets may lead to higher interest payments and greater financial strain, further diminishing the firm's ability to manage cash and potentially causing financial instability. However, in Approach 2, beyond a certain point, the firm may benefit from leverage through enhanced returns, tax advantages or improved capital efficiency, resulting in a recovery and an improved financial state.
Furthermore, there are differences in the sorting result for some non-reference alternatives between Approaches 1 and 2. This discrepancy is expected,  as the two approaches consider different priorities regarding model complexity and discriminative power.

\subsection{Further discussions}\label{sec:5.2}
In this subsection, we provide further discussions using the example presented in Section \ref{sec:5.1}. First, we explore the impact of  the number of subintervals for each criterion $s_j$ on the proposed approaches. Subsequently, we compare the proposed approaches with the non-monotonic criteria modeling method introduced by \cite{Kadzinski20ijar}.

\subsubsection{The impact of $s_j$ on the proposed approaches}

To investigate the impact of the number of subintervals for each criterion on the proposed approaches, we set $s_j\in\{5,6,7\}$, $j=1,2,3$. For each fixed $s_j$ value, Algorithm \ref{alg:1} is implemented under different priorities for model complexity and discriminative power.

For Approach 1, the transformed marginal value functions corresponding to  different $s_j$ values are depicted in Figs. \ref{fig:Approach1_5}~-~\ref{fig:Approach1_7}. Let $F^1_{s_j}$ be the sorting result for all firms obtained by Approach 1 when the number of subintervals for each criterion is $s_j$. The sorting result under different $s_j$ values are presented in Table \ref{table:Approach1_sj}.

By comparing Figs. \ref{fig:numericla_v1}, \ref{fig:Approach1_5}~-~\ref{fig:Approach1_7}, it is evident that for Approach 1, the shape of marginal value functions remains largely consistent across different $s_j$ values. Furthermore, as observed in Tables \ref{table:T_V} and \ref{table:Approach1_sj}, when $s_j=5$, the sorting result for the alternative $a_{18}$ differs from those for $s_j=4,6,7$, while the sorting results for the remaining alternatives remain consistent across all $s_j$ values. These findings suggest that Approach 1 is relatively stable, as the value of
$s_j$ has minimal impact on both the shape of the marginal value functions and the sorting result.

\begin{figure}[H]
\centering
\includegraphics[scale=0.3]{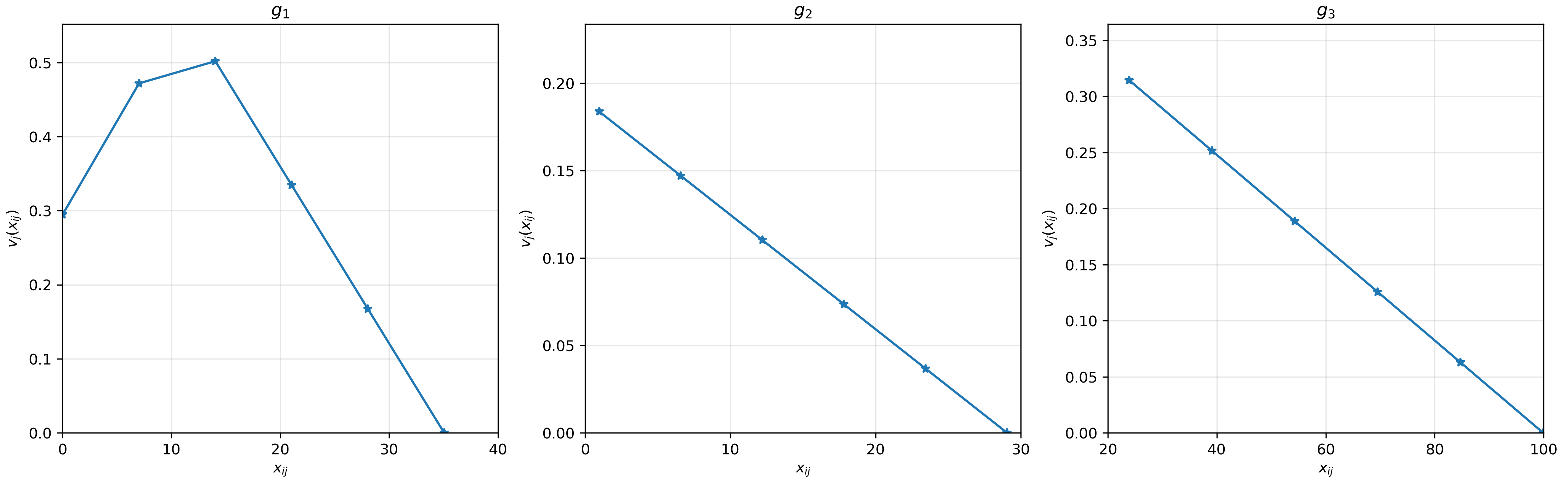}
\caption{The transformed marginal value functions obtained by Approach 1 ($s_j=5$) }
\label{fig:Approach1_5}
\end{figure}

\begin{table}[H]
\centering
\setlength{\abovecaptionskip}{0pt}
\setlength{\belowcaptionskip}{10pt}
\caption{The sorting result obtained by Approach 1 under different $s_j$ values}
\label{table:Approach1_sj}
\begin{tabular}{cc}
\toprule
$s_j$ & $F^1_{s_j}$    \\ \midrule
5 & $\{C_3, C_4, C_2, C_4, C_2,  C_4,C_4,C_1,C_3,C_2,  C_1,C_1,C_1,C_1,C_1, C_1,C_3,C_2,C_4,C_1\}$ \\
6 & $\{C_3, C_4, C_2, C_4, C_2,  C_4,C_4,C_1,C_3,C_2,  C_1,C_1,C_1,C_1,C_1, C_1,C_3,C_3,C_4,C_1\}$ \\
7 & $\{C_3, C_4, C_2, C_4, C_2,  C_4,C_4,C_1,C_3,C_2,  C_1,C_1,C_1,C_1,C_1, C_1,C_3,C_3,C_4,C_1\}$ \\
\bottomrule
\end{tabular}
\end{table}

\begin{figure}[H]
\centering
\includegraphics[scale=0.3]{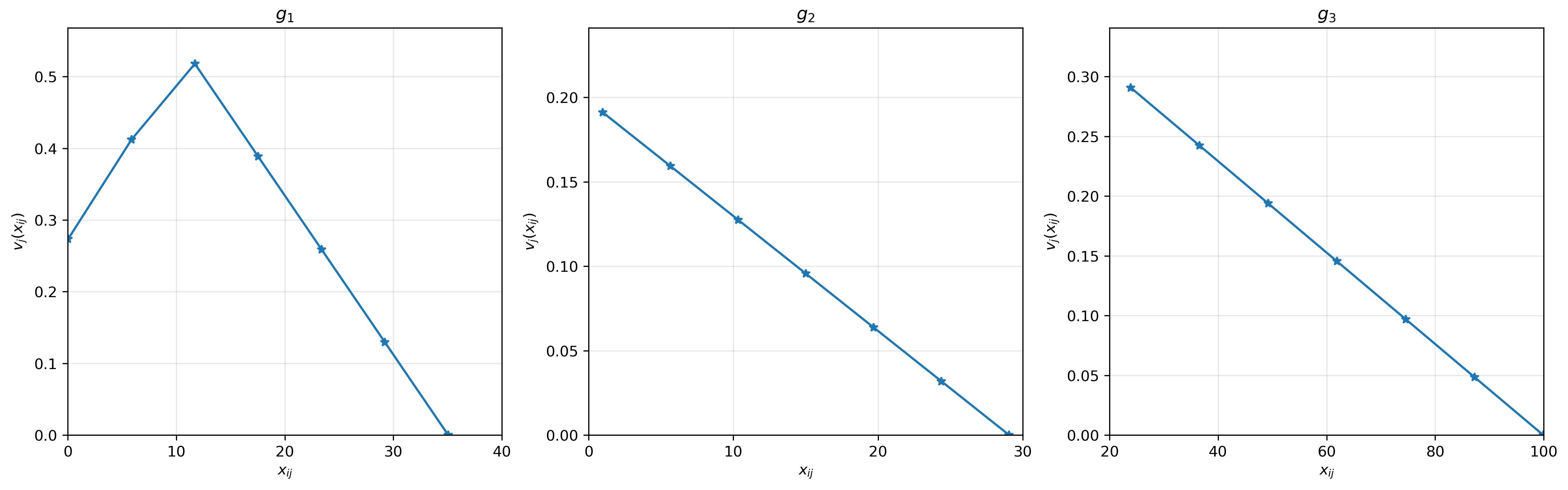}
\caption{The transformed marginal value functions obtained by Approach 1 ($s_j=6$) }
\label{fig:Approach1_6}
\end{figure}

\begin{figure}[H]
\centering
\includegraphics[scale=0.3]{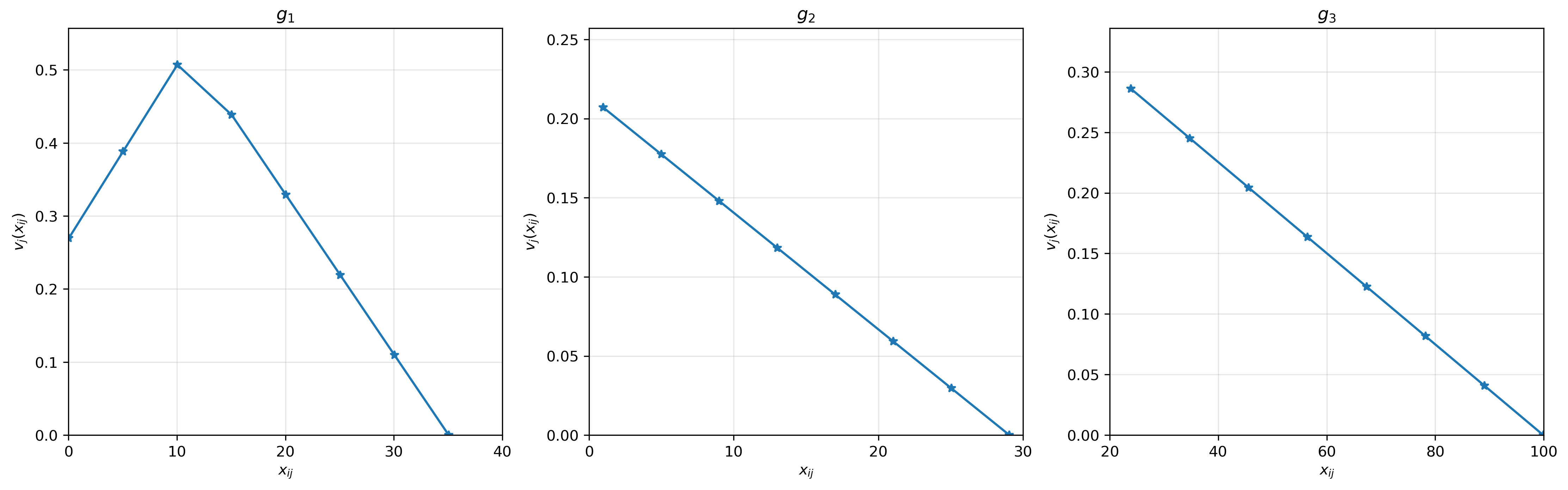}
\caption{The transformed marginal value functions obtained by Approach 1 ($s_j=7$) }
\label{fig:Approach1_7}
\end{figure}

Similarly, for Approach 2, the transformed marginal value functions corresponding to different $s_j$ values are shown in Figs. \ref{fig:Approach2_5}~-~\ref{fig:Approach2_7}, respectively, and the sorting result under different $s_j$ values are presented in Table \ref{table:Approach2_sj}.

\begin{figure}[H]
\centering
\includegraphics[scale=0.3]{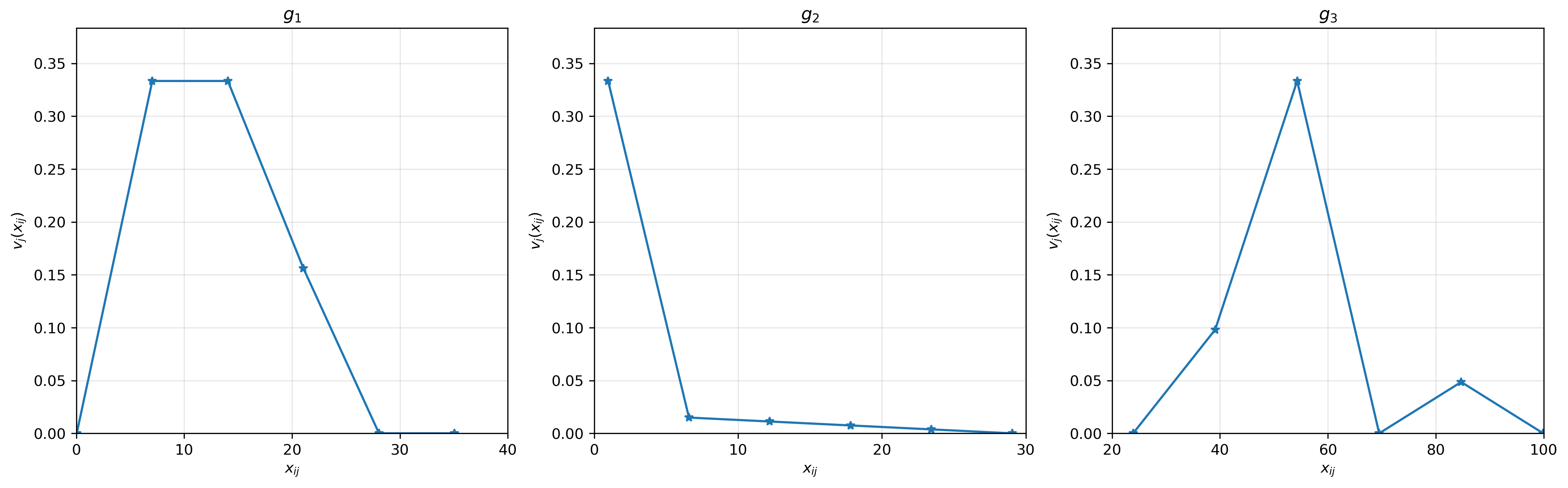}
\caption{The transformed marginal value functions obtained by Approach 2 ($s_j=5$)}
\label{fig:Approach2_5}
\end{figure}

\begin{table}[htbp]
\centering
\setlength{\abovecaptionskip}{0pt}
\setlength{\belowcaptionskip}{10pt}
\caption{The sorting result obtained by Approach 2 under different $s_j$ values}
\label{table:Approach2_sj}
\begin{tabular}{cc}
\toprule
$s_j$ & $F^2_{s_j}$    \\ \midrule
5 & $\{C_3, C_4, C_2, C_4, C_1,  C_4,C_1,C_2,C_3,C_2,  C_1,C_1,C_1,C_1,C_1, C_1,C_3,C_2,C_4,C_1\}$ \\
6 & $\{C_3, C_4, C_2, C_3, C_2,  C_4,C_3,C_1,C_3,C_2,  C_1,C_1,C_1,C_1,C_1, C_1,C_3,C_2,C_4,C_1\}$ \\
7 & $\{C_3, C_4, C_2, C_4, C_1,  C_4,C_2,C_2,C_3,C_2,  C_1,C_1,C_1,C_1,C_1, C_1,C_3,C_2,C_4,C_1\}$ \\
\bottomrule
\end{tabular}
\end{table}

\begin{figure}[H]
\centering
\includegraphics[scale=0.3]{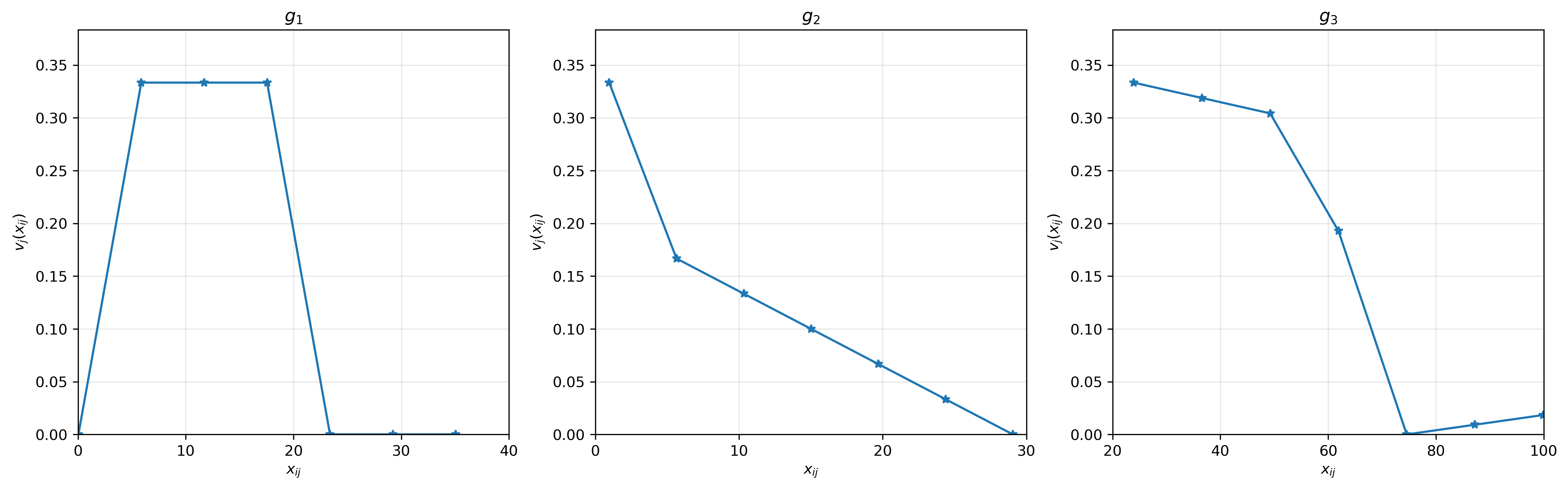}
\caption{The transformed marginal value functions obtained by Approach 2 ($s_j=6$)}
\label{fig:Approach2_6}
\end{figure}

\begin{figure}[H]
\centering
\includegraphics[scale=0.3]{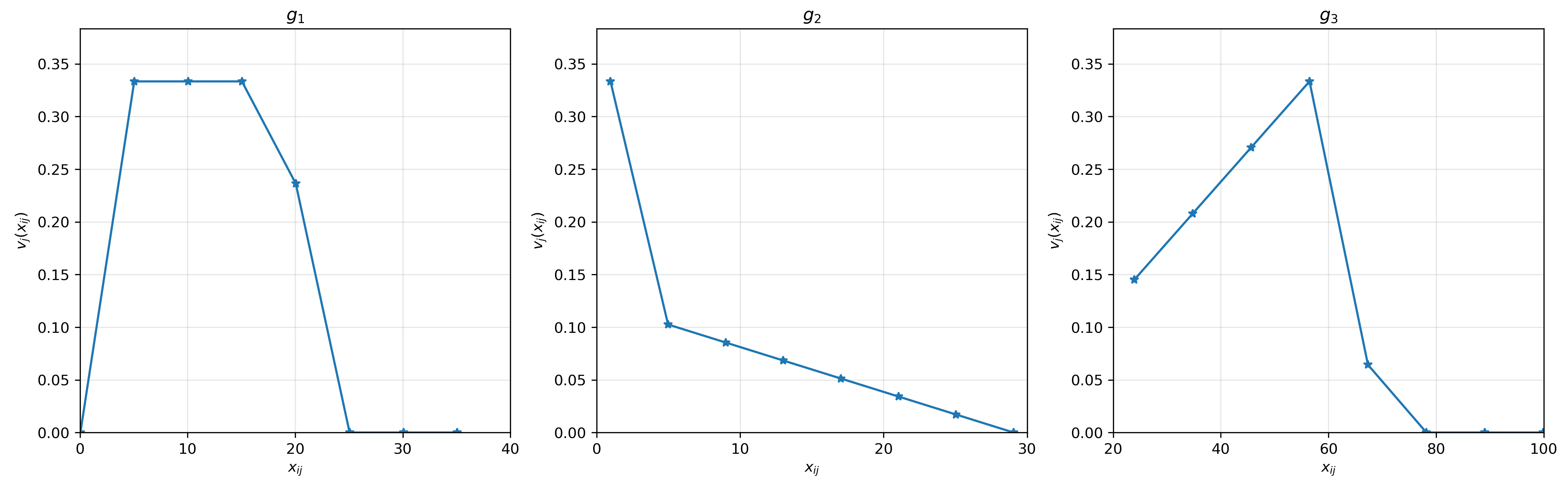}
\caption{The transformed marginal value functions obtained by Approach 2 ($s_j=7$)}
\label{fig:Approach2_7}
\end{figure}

By comparing Figs. \ref{fig:numericla_v2}, \ref{fig:Approach2_5}~-~\ref{fig:Approach2_7}, it is clear that the marginal value functions for the criteria $g_1$ and $g_2$ exhibit similar shape across different $s_j$ values, while the marginal value functions for the criterion $g_3$ show distinct shapes. Specifically, the marginal value function for the criterion $g_1$ initially increases and then decreases, whereas the marginal value function for the criterion $g_2$ consistently decreases across all $s_j$ values. For the criterion $g_3$, the shape of marginal value functions are similar when $s_j=4$ and $s_j=6$, and when $s_j=5$ and $s_j=7$. Moreover, the sorting result for the alternatives $a_4$, $a_5$, $a_7$, $a_8$, $a_{15}$, $a_{16}$, $a_{18}$ and $a_{19}$ differs under different $s_j$ values. These observations indicate that the value of $s_j$ significantly impacts Approach 2.

From the above analysis, it is evident that $s_j$ has a smaller impact on Approach 1 and a greater impact on Approach 2. In practical MCS problems, two methods can be considered for determining the value of $s_j$: the decision maker can either directly assign a value to $s_j$ based on their knowledge and experience, or the value that yields the best performance across different values can be selected as the final $s_j$ value.

\subsubsection{Comparisons with the non-monotonic criteria modeling method in \cite{Kadzinski20ijar}}

To emphasize the characteristics of the proposed approaches, we compare them with the non-monotonic criteria modeling method  introduced by \cite{Kadzinski20ijar}. This method seeks to  control the complexity of the inferred marginal value functions and identifies the performance levels associated with the maximum and minimum marginal values taken for each criterion by introducing binary variables.

We implement \cite{Kadzinski20ijar}'s method using the data presented in Section \ref{sec:5.1}. The inferred marginal value functions are displayed in Fig. \ref{fig:IJAR_results}, and the category thresholds are obtained as $b=(0, 0.0980, 0.0990,0.132, 1.001)^{\rm T}$. Additionally, the global values and sorting result for all firms are shown in Table \ref{table:IJAR_V}.
\begin{figure}[htbp]
\centering
\includegraphics[scale=0.3]{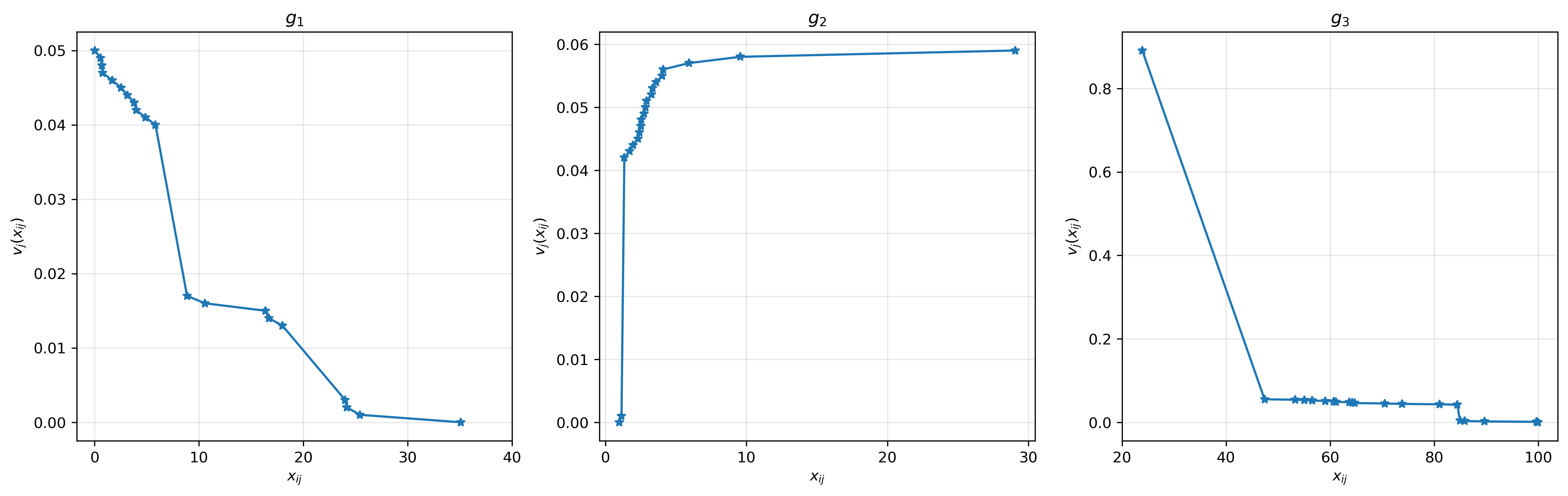}
\caption{The marginal value functions obtained by  \cite{Kadzinski20ijar}'s method}
\label{fig:IJAR_results}
\end{figure}

\begin{table}[htbp]
\centering
\setlength{\abovecaptionskip}{0pt}
\setlength{\belowcaptionskip}{10pt}
\caption{The global value and sorting result obtained by \cite{Kadzinski20ijar}'s method}
\label{table:IJAR_V}
\begin{tabular}{cccccccccccc}
\toprule
Firm & $V(a_i)$ & $f_i$  & Firm & $V(a_i)$ & $f_i$ & Firm & $V(a_i)$ & $f_i$ & Firm & $V(a_i)$ & $f_i$    \\ \midrule
$a_1$ & 0.1390 & $C_4$ & $a_6$ & 0.1130 & $C_3$ & $a_{11}$ & 0.1380 & $C_4$ & $a_{16}$ & 0.0500 & $C_1$ \\
$a_2$ & 0.1320 & $C_4$ & $a_7$ & 0.9910 & $C_4$ & $a_{12}$ & 0.0970 & $C_1$ & $a_{17}$ & 0.1310 & $C_3$ \\
$a_3$ & 0.0980 & $C_2$ & $a_8$ & 0.1050 & $C_3$ & $a_{13}$ & 0.1360 & $C_4$ & $a_{18}$ & 0.0730 & $C_1$ \\
$a_4$ & 0.1450 & $C_4$ & $a_9$ & 0.0990 & $C_3$ & $a_{14}$ & 0.1070 & $C_3$ & $a_{19}$ & 0.1210 & $C_3$ \\
$a_5$ & 0.1380 & $C_4$ & $a_{10}$ & 0.0980 & $C_2$ & $a_{15}$ & 0.0490 & $C_1$ & $a_{20}$ & 0.1060 & $C_3$ \\
\bottomrule
\end{tabular}
\end{table}

Comparing Figs. \ref{fig:numericla_v1}, \ref{fig:numericla_v2} and \ref{fig:IJAR_results} reveals significant differences in the marginal value functions for the criteria $g_1$ and $g_2$ between \cite{Kadzinski20ijar}'s method and  the proposed approaches. Interestingly, the marginal value function for the criterion $g_3$ inferred using \cite{Kadzinski20ijar}'s method is similar to that obtained by Approach 1. Additionally, the criterion weights are substantially  different, with \cite{Kadzinski20ijar}'s method assigning weights of 0.05, 0.059, and 0.891, indicating very low weights for $g_1$ and $g_2$. These variations also lead to noticeable differences in the sorting result for non-reference alternatives, as shown in Tables \ref{table:T_V}, \ref{table:T_V2} and \ref{table:IJAR_V}.

The observed differences are primarily due to the following factors. First, the proposed approaches use piecewise linear functions to approximate the true marginal value function, whereas \cite{Kadzinski20ijar} directly calculates marginal values across all performance levels. Secondly, the proposed approaches model non-monotonic criteria using transformation functions, while \cite{Kadzinski20ijar} employs binary variables for this purpose. Lastly, the proposed approaches considers both model complexity and discriminative power, utilizing  lexicographic optimization to derive a representative sorting model, whereas \cite{Kadzinski20ijar} focuses on minimizing the number of monotonic changes to achieve this goal. These  differences contribute to the divergent results observed  between the proposed approaches and \cite{Kadzinski20ijar}'s method.

In addition, we assess the robustness of these methods. Applying the robustness analysis method detailed in Section \ref{sec:4.6}, the possible assignments for all non-reference alternatives under both methods were found to be $C^P_i=\{C_1,C_2,C_3,C_4\}$,  $\forall a_i\in \{a_1,a_4,a_5,a_6,a_7,a_8,a_{11},a_{13},a_{14},a_{15},a_{16},a_{18},a_{19},a_{20}\}$. Consequently, according to Eq. \eqref{eq:APA}, all these methods yield a value of 0 on the $APA$ metric.

\section{Experimental analysis}\label{sec:6}
In this section, we first compare the two proposed approaches (Approach 1 and Approach 2) with some non-monotonic criteria modeling approaches and analyze the impact of certain parameters on these approaches by simulation experiments. Subsequently, we assess the robustness of the proposed approaches.

\subsection{Comparisons with some non-monotonic criteria modeling approaches}\label{sec:6.1}

We first introduce an accuracy metric to evaluate the performance of these approaches, which quantifies how many non-reference alternatives are correctly assigned.
\begin{definition}
Let $F=(f_1,f_2,\ldots,f_{n_2})^{\rm T}$ and $\overline{F}=(\overline{f_1},\overline{f_2},\ldots,\overline{f_{n_2}})^{\rm T}$ be
the real sorting result and the sorting result obtained by a sorting model for $n_2$ non-reference alternatives, respectively, where $f_i$ and $\overline{f_i}$ are the real and inferred category assignment of the $i$-th non-reference alternative, respectively. Let $y_i$ be a binary variable such that
\begin{equation}\label{eq:accu1}
y_i=\left\{
\begin{aligned}
&1, \ \text{if} \ f_i = \overline{f_i},\\
&0, \ \text{otherwise},
\end{aligned}
\right.
\end{equation}
then the accuracy metric is computed by
\begin{equation}\label{eq:accu2}
accuracy=\frac{\sum\nolimits_{i=1}^{n_2} y_i}{n_2}.
\end{equation}
\end{definition}

Evidently, a higher accuracy indicates a better performance of the sorting model.

Subsequently, we present an algorithm (see Algorithm \ref{alg:2} in the supplemental file) for generating simulation datasets to be used in the subsequent experiments, ensuring that the assignment example preference information provided by the decision maker is consistent.

Following this, we perform simulation experiments to compare the proposed approaches with some non-monotonic criteria modeling approaches, aiming to demonstrate the effectiveness of the proposed lexicographic optimization-based approaches. We also analyze the influence of various parameters, namely the number of alternatives, criteria, categories and the proportion of reference alternatives among all alternatives, on these approaches.

The descriptions of the non-monotonic criteria modeling approaches compared in this subsection are outlined below.

$\bullet$ \textbf{UTADIS-NONM}: Although the proposed approaches include lexicographic optimization models, it is worth highlighting that a sorting model can also be derived by exclusively solving the model \eqref{m:con_check}. This can be considered as a variant of the UTADIS method, referred to as UTADIS-NONM in this paper.

$\bullet$ \textbf{LFP-NONM}: This approach employs the proposed transformation functions to model non-monotonic criteria and utilizes linear fractional programming (LFP) to derive the representative sorting model \citep{Ghaderi17ejor}, which can be formulated as
\begin{equation}\tag{M-10}\label{m:LFP}
\begin{aligned}
&\min\ \frac{\sum\nolimits_{j=1}^m\sum\nolimits_{l=2}^{s_j}\gamma_{lj}}{\varepsilon}\\
&\begin{aligned}
\rm{s.t.} &\ E^{A^{R}}, \ E^{Bound},\ E^{Sort}\\
&\gamma_{lj} = \left|\frac{v_j(\beta_j^l)-v_j(\beta_j^{l-1})}{\beta_j^l-\beta_j^{l-1}} - \frac{v_j(\beta_j^{l+1})-v_j(\beta_j^l)}{\beta_j^{l+1}-\beta_j^l}\right|, l=2,\ldots,s_j, j\in M.
\end{aligned}
\end{aligned}
\end{equation}

Subsequently, we present an algorithm (see Algorithm \ref{alg:3} in the supplemental file) designed to compare Approaches 1 and 2 with UTADIS-NONM and LFP-NONM. On this basis, we conduct  four distinct simulation experiments evaluate the performance of these four approaches under varying parameter settings.

\textbf{Simulation experiment \RNum{1}\ :}
In this experiment, we examine  the impact of the number of alternatives on the performance of the four approaches. We consider different values for the number of alternatives, specifically $n\in\{200,400,600\}$, while keeping $m=6$, $q=4$, and $r=0.8$ constant. Additionally, we assume that all criteria share the same number of subintervals, denoted as $s_j\in \{1,2,3,4\}$, $j\in M$. For each fixed value of $n$ and $s_j$, we execute Algorithm \ref{alg:2} 10 times, generating 10 datasets. Subsequently, for each dataset, we run Algorithm \ref{alg:3} 100 times and compute the average values of the accuracy metric. Finally, we calculate the mean and standard deviation of the accuracy metric across the ten datasets for each of the four approaches. The summarized results are presented in Table \ref{table:simu1}. It is worth noting that Approach 1 and LFP-NONM are not applicable when $s_j=1$, as they do not allow for any slope change in this case. Therefore, the results for Approach 1 and LFP-NONM when $s_j=1$ are represented as ``-'' in the following experiments.
\begin{table}[htbp]
\centering
\setlength{\abovecaptionskip}{0pt}
\setlength{\belowcaptionskip}{10pt}
\caption{The mean and standard deviation of the accuracy metric for the four approaches with different values of $n$ in Simulation experiment \RNum{1}}
\label{table:simu1}
\begin{tabular}{cccccc}
\toprule
$n$ & Approach & $s_j=1$ & $s_j=2$ & $s_j=3$ & $s_j=4$    \\ \midrule
\multirow{3}*{200} & Approach 1 & - & $0.9433\pm0.0081$ & $0.9250\pm0.0105$ & $0.9021\pm0.0112$\\
& Approach 2 & \bm{$0.9649\pm0.0073$} & $0.9435\pm0.0083$ & \bm{$0.9293\pm0.0089$} & \bm{$0.9036\pm0.0114$} \\
& LFP-NONM & - & \bm{$0.9445\pm0.0081$} & $0.9247\pm0.0090$ & $0.9017\pm0.0117$ \\
& UTADIS-NONM  & $0.9584\pm0.0029$ & $0.9252\pm0.0044$ & $0.9047\pm0.0086$ & $0.8614\pm0.0067$ \\

\multirow{3}*{400} & Approach 1 & - & $0.9734\pm0.0040$ & $0.9629\pm0.0059$ & $0.9473\pm 0.0037$ \\
& Approach 2 & \bm{$0.9841\pm0.0034$} & $0.9734\pm0.0039$ & \bm{$0.9634\pm0.0067$} & \bm{$0.9483\pm0.0051$} \\
& LFP-NONM & - & \bm{$0.9736\pm0.0039$} & $0.9631\pm 0.0060$ & $0.9479\pm 0.0042$\\
& UTADIS-NONM  & $0.9808\pm0.0025$ & $0.9650\pm0.0039$ & $0.9512\pm0.0047$ & $0.9314\pm0.0062$\\

\multirow{3}*{600} & Approach 1 & - & $0.9812\pm0.0027$ & $0.9735\pm0.0035$ & $0.9670\pm0.0020$ \\
& Approach 2 & \bm{$0.9894\pm0.0015$} & \bm{$0.9816\pm0.0031$} & \bm{$0.9738\pm0.0033$} & \bm{$0.9674\pm0.0021$} \\
& LFP-NONM & - & $0.9813\pm 0.0031$ & \bm{$0.9738\pm 0.0033$} & $0.9675\pm 0.0017$ \\
& UTADIS-NONM  & $0.9870\pm0.0013$ & $0.9744\pm0.0017$ & $0.9669\pm0.0036$ & $0.9527\pm0.0053$\\
\bottomrule
\end{tabular}
\end{table}

\textbf{Simulation experiment \RNum{2}\ :}
In this experiment, we examine how the number of criteria impacts the performance of the four approaches. To do so, we set $n$ to 200 and vary $m$ in \{6, 7, 8\}, keeping $q$ at 4, $r$ at 0.8, and $s_j$ in $\{1, 2, 3, 4\}$, $j=1,2,\ldots,m$. For each combination of $m$ and $s_j$, we execute Algorithm \ref{alg:2} 10 times to generate initial datasets. Afterwards, for each dataset, we run Algorithms \ref{alg:3} 100 times and calculate the average accuracy metric. Finally, we calculate the mean and standard deviation of the accuracy metric across the ten datasets for all four approaches. The results are tabulated in Table \ref{table:simu2}.

\begin{table}[htbp]
\centering
\setlength{\abovecaptionskip}{0pt}
\setlength{\belowcaptionskip}{10pt}
\caption{The mean and standard deviation of the accuracy metric for the four approaches with different values of $m$ in Simulation experiment \RNum{2}}
\label{table:simu2}
\begin{tabular}{cccccc}
\toprule
$m$ & Approach & $s_j=1$ & $s_j=2$ & $s_j=3$ & $s_j=4$    \\ \midrule
\multirow{3}*{6} & Approach 1 & - & $0.9433\pm0.0081$ & $0.9250\pm0.0105$ & $0.9021\pm0.0112$\\
& Approach 2 & \bm{$0.9649\pm0.0073$} & $0.9435\pm0.0083$ & \bm{$0.9293\pm0.0089$} & \bm{$0.9036\pm0.0114$} \\
& LFP-NONM & - & \bm{$0.9445\pm0.0081$} & $0.9247\pm0.0090$ & $0.9017\pm0.0117$ \\
& UTADIS-NONM  & $0.9584\pm0.0029$ & $0.9252\pm0.0044$ & $0.9047\pm0.0086$ & $0.8614\pm0.0067$ \\

\multirow{3}*{7} & Approach 1 & - & $0.9396\pm0.0087$ & $0.9121\pm0.0108$ & $0.8897\pm0.0119$ \\
& Approach 2 & \bm{$0.9614\pm0.0067$} & $0.9385\pm0.0073$ & \bm{$0.9139\pm0.0094$} & \bm{$0.8916\pm0.0125$} \\
& LFP-NONM & - & \bm{$0.9400\pm0.0083$} & $0.9132\pm0.0113$ & $0.8902\pm0.0112$ \\
& UTADIS-NONM  & $0.9525\pm0.0051$ & $0.9163\pm0.0077$ & $0.8806\pm0.0063$ & $0.8437\pm0.0058$\\

\multirow{3}*{8} & Approach 1 & - & $0.9294\pm0.0065$ & $0.9076\pm0.0110$ & $0.8707\pm0.0149$ \\
& Approach 2 & \bm{$0.9576\pm0.0072$} & \bm{$0.9314\pm0.0055$} & \bm{$0.9125\pm0.0107$} & \bm{$0.8776\pm0.0108$} \\
& LFP-NONM & - & $0.9303\pm 0.0057$ & $0.9086\pm 0.0105$ & $0.8716\pm 0.0143$ \\
& UTADIS-NONM  & $0.9439\pm0.0046$ & $0.9130\pm0.0109$ & $0.8713\pm0.0078$ & $0.8354\pm0.0092$\\
\bottomrule
\end{tabular}
\end{table}

\textbf{Simulation experiment \RNum{3}\ :}\
In this experiment, we explore the impact of varying the number of categories on the performance of different approaches.
To do so, let $n=200$, $m=6$, $q\in \{2,3,4\}$, $r=0.8$, $s_j\in \{1,2,3,4\}$, $j\in M$. For each combination of $q$ and $s_j$, we execute Algorithm \ref{alg:2} 10 times to generate 10 datasets. Subsequently, for each dataset, we run Algorithms \ref{alg:3} 100 times and calculate the average accuracy metric. Finally, we calculate the mean and standard deviation of the accuracy metric across the ten datasets for each of the four approaches. The results are displayed in Table \ref{table:simu3}.
\begin{table}[htbp]
\centering
\setlength{\abovecaptionskip}{0pt}
\setlength{\belowcaptionskip}{10pt}
\caption{The mean and standard deviation of the accuracy metric for the four approaches with different values of $q$ in Simulation experiment \RNum{3}}
\label{table:simu3}
\begin{tabular}{cccccc}
\toprule
$q$ & Approach & $s_j=1$ & $s_j=2$ & $s_j=3$ & $s_j=4$    \\ \midrule
\multirow{3}*{2} & Approach 1 & - & $0.9528\pm0.0066$ & \bm{$0.9272\pm0.0063$} & $0.9078\pm0.0154$ \\
& Approach 2 & \bm{$0.9767\pm0.0037$} & \bm{$0.9551\pm0.0069$} & $0.9260\pm0.0083$ & \bm{$0.9108\pm0.0123$} \\
& LFP-NONM & - & $0.9528\pm0.0066$ & \bm{$0.9272\pm0.0063$} & $0.9078\pm0.0154$\\
& UTADIS-NONM & $0.9708\pm0.0032$ & $0.9367\pm0.0062$ & $0.9096\pm0.0052$ & $0.8838\pm0.0092$\\

\multirow{3}*{3} & Approach 1 & - & $0.9503\pm0.0079$ & $0.9266\pm0.0068$ & $0.9061\pm0.0137$ \\
& Approach 2 & \bm{$0.9740\pm0.0047$} & \bm{$0.9517\pm0.0070$} & \bm{$0.9321\pm0.0077$} & \bm{$0.9107\pm0.0125$} \\
& LFP-NONM & - & $0.9501\pm 0.0079$ & $0.9267\pm0.0067$ & $0.9061\pm0.0137$\\
& UTADIS-NONM & $0.9644\pm0.0036$ & $0.9294\pm0.0049$ & $0.9038\pm0.0087$ & $0.8688\pm0.0093$\\

\multirow{3}*{4} & Approach 1 & - & $0.9433\pm0.0081$ & $0.9250\pm0.0105$ & $0.9021\pm0.0112$\\
& Approach 2 & \bm{$0.9649\pm0.0073$} & $0.9435\pm0.0083$ & \bm{$0.9293\pm0.0089$} & \bm{$0.9036\pm0.0114$} \\
& LFP-NONM & - & \bm{$0.9445\pm0.0081$} & $0.9247\pm0.0090$ & $0.9017\pm0.0117$ \\
& UTADIS-NONM  & $0.9584\pm0.0029$ & $0.9252\pm0.0044$ & $0.9047\pm0.0086$ & $0.8614\pm0.0067$ \\

\bottomrule
\end{tabular}
\end{table}

\textbf{Simulation experiment \RNum{4}:}
In this experiment, we analyze the influence of the proportion of reference alternatives among all alternatives on the performance of the four approaches. To this end, we vary $r$ in \{0.2, 0.5, 0.8\}, and set $n=200$, $m=6$, $q=4$, $s_j\in \{1,2,3,4\}$, $j\in M$. In particular, for each fixed value of $s_j$, $j\in M$, we utilize the same ten datasets generated in the above simulations when $n=200$, $m=6$, $q=4$. Following this, for each dataset, Algorithms \ref{alg:3} is run 100 times and the average accuracy metric is computed. On this basis, we calculate the mean and standard deviation of the accuracy metric across the ten datasets for each of the four approaches. The results are displayed in Table \ref{table:simu4}.
\begin{table}[htbp]
\centering
\setlength{\abovecaptionskip}{0pt}
\setlength{\belowcaptionskip}{10pt}
\caption{The mean and standard deviation of the accuracy metric for the four approaches with different values of $r$ in Simulation experiment \RNum{4}}
\label{table:simu4}
\begin{tabular}{cccccc}
\toprule
$r$ & Approach & $s_j=1$ & $s_j=2$ & $s_j=3$ & $s_j=4$    \\ \midrule
\multirow{3}*{0.2} & Approach 1 & - & $0.7886\pm0.0138$ & $0.7379\pm0.0202$ & $0.7010\pm0.0271$  \\
& Approach 2 & \bm{$0.8716\pm0.0089$} & \bm{$0.8066\pm0.0164$} & \bm{$0.7621\pm0.0093$} & \bm{$0.7208\pm0.0151$} \\
& LFP-NONM & - & $0.7866\pm0.0121$ & $0.7372\pm0.0214$ & $0.6986\pm0.0264$\\
& UTADIS-NONM & $0.8274\pm0.0120$ & $0.7316\pm0.0111$ & $0.6507\pm0.0157$ & $0.6195\pm0.0187$ \\

\multirow{3}*{0.5} & Approach 1 & - & $0.9115\pm0.0043$ & $0.8830\pm0.0071$ & $0.8505\pm0.0105$ \\
& Approach 2 &  \bm{$0.9434\pm0.0054$} & \bm{$0.9138\pm0.0053$} & \bm{$0.8870\pm0.0056$} & \bm{$0.8549\pm0.0079$}\\
& LFP-NONM & - & $0.9124\pm0.0047$ & $0.8826\pm0.0072$ & $0.8500\pm0.0091$ \\
& UTADIS-NONM & $0.9297\pm0.0059$ & $0.8814\pm0.0062$ & $0.8375\pm0.0040$ & $0.7889\pm0.00100$\\

\multirow{3}*{0.8} & Approach 1 & - & $0.9433\pm0.0081$ & $0.9250\pm0.0105$ & $0.9021\pm0.0112$\\
& Approach 2 & \bm{$0.9649\pm0.0073$} & $0.9435\pm0.0083$ & \bm{$0.9293\pm0.0089$} & \bm{$0.9036\pm0.0114$} \\
& LFP-NONM & - & \bm{$0.9445\pm0.0081$} & $0.9247\pm0.0090$ & $0.9017\pm0.0117$ \\
& UTADIS-NONM  & $0.9584\pm0.0029$ & $0.9252\pm0.0044$ & $0.9047\pm0.0086$ & $0.8614\pm0.0067$ \\

\bottomrule
\end{tabular}
\end{table}

From Tables \ref{table:simu1}~-~\ref{table:simu4}, we draw the following observations:

(1) Approaches 1~-~2 and LFP-NONM consistently outperform UTADIS-NONM in all scenarios examined in the simulation experiments. Furthermore, as the number of subintervals for each criterion increases, Approaches 1~-~2 and LFP-NONM exhibit superior performance compared to UTADIS-NONM. This underscores the potential of considering both model complexity and discriminative power to derive the representative sorting model for MCS problems with non-monotonic criteria, ultimately leading to improved performance.

(2) Among the three approaches, i.e., Approaches 1~-~2 and LFP-NONM, Approach 2 demonstrates the best performance in most cases, indicating that prioritizing model discriminative power enhances the performance. While LFP-NONM excels in some certain specific scenarios, its performance is generally inferior to Approach 2 and comparable to Approach 1. The proposed lexicographic optimization-based approaches offer flexibility by allowing different priorities to be set, making them adaptable to the specific needs of various MCS problems. This adaptability makes the proposed approaches more versatile and applicable to a broader range of real-world MCS problems.

(3) The accuracy of all four approaches improves with an increasing number of alternatives or a higher proportion of reference alternatives among all alternatives. These improvements primarily arise from the greater availability of assignment example preference information, which enhances model training as the number of alternatives or the proportion of reference alternatives grows. However, as the number of criteria and categories increases, the performance of all four approaches deteriorates. This decline is likely due to the increased complexity of datasets generated with more criteria and categories, which consequently reduces the model's ability to fit these datasets effectively.

(4) Across  all simulation experiments, a slight decrease in accuracy is observed for all approaches as the number of subintervals within each criterion increases. This decrease can be attributed to the growing complexity of the marginal value functions with more subintervals, which leads to a reduced generalization ability of the models.

To further confirm the superiority of the proposed approaches, we perform one-tailed paired $t$-tests to assess the differences between Approach 1 and UTADIS-NONM (Comparison 1),  Approach 2 and UTADIS-NONM (Comparison 2), as well as between LFP-NONM and UTADIS-NONM (Comparison 3) from a statistical perspective. For each $t$-test, the null hypothesis is denoted as $H_0$: $u_{\rm{method1}}\le u_{\rm{method2}}$, and the alternative hypothesis as $H_1$: $u_{\rm{method1}} > u_{\rm{method2}}$, where $u_{\rm{method1}}$ and $u_{\rm{method2}}$ represent the means of a specific measure for method 1 and method 2, respectively. Given a significance level $\alpha=0.05$, we conduct $t$-tests under each parameter setting based on Simulation experiments \RNum{1} to \RNum{4}, and the results are presented in Table \ref{table:ttest1} of the supplemental file.

Observing Table \ref{table:ttest1}, it is apparent that, under all parameter settings, the null hypothesis $H_0$ is rejected, which indicates that there are statistically significant differences between Approach 1, Approach 2 and LFP-NONM compared to UTADIS-NONM.

\subsection{Comparisons with the non-monotonic criteria modeling method in \cite{Kadzinski20ijar}}\label{sec:6.2}

To further validate the effectiveness of the proposed approaches, we conduct simulation experiments to compare Approach 1, Approach 2 with \cite{Kadzinski20ijar}'s non-monotonic criteria modeling method. Due to the significant increase in model complexity associated with \cite{Kadzinski20ijar}'s method, which involves numerous binary variables, we limit the number of alternatives to 50 for a fair comparison.

We then examine the impact of varying the number of criteria, categories, and the proportion of reference alternatives among all alternatives on the performance of the three approaches. The detailed parameter settings for each simulation experiment are shown in Table \ref{table:parameter_s}, and the experimental setting is consistent with that of Simulation experiments \RNum{1}~-~\RNum{5}. The mean and standard deviation of the accuracy metric for the three approaches under different values of $m$, $q$, $r$ in Simulation experiment \RNum{5}~-~\RNum{7} are displayed in Tables \ref{table:simu5}~-~\ref{table:simu7}. Moreover, the results of $t$-tests in Simulation experiments \RNum{5}~-~\RNum{7} are provided in Table \ref{table:ttest2} of the supplemental file.

\begin{table}[htbp]
\centering
\setlength{\abovecaptionskip}{0pt}
\setlength{\belowcaptionskip}{10pt}
\caption{The detailed parameter settings of each simulation experiment}
\label{table:parameter_s}
\begin{tabular}{lccccc}
\toprule
Simulation experiment & $n$ & $m$  & $q$ & $r$ & $s_j$    \\ \midrule
Simulation experiment \RNum{5} & 50 & \{6,7,8\} & 4 & 0.8 & \{1,2,3,4\}  \\
Simulation experiment \RNum{6} & 50 & 6 & \{2,3,4\} & 0.8 & \{1,2,3,4\}  \\
Simulation experiment \RNum{7} & 50 & 6 & 4 & \{0.2,0.5,0.8\} & \{1,2,3,4\} \\
\bottomrule
\end{tabular}
\end{table}

\begin{table}[htbp]
\centering
\setlength{\abovecaptionskip}{0pt}
\setlength{\belowcaptionskip}{10pt}
\caption{The mean and standard deviation of the accuracy metric for the three approaches with different values of $m$ in Simulation experiment \RNum{5}}
\label{table:simu5}
\resizebox{\textwidth}{!}{
\begin{tabular}{cccccc}
\toprule
$m$ & Approach & $s_j=1$ & $s_j=2$ & $s_j=3$ & $s_j=4$    \\ \midrule
\multirow{3}*{6} & Approach 1 & - & $0.7851\pm0.0188$ & $0.7279\pm0.0155$ & \bm{$0.7176\pm0.0457$}\\
& Approach 2 & \bm{$0.8791\pm0.0186$} & \bm{$0.8061\pm0.0175$} & \bm{$0.7455\pm0.0322$} & $0.7153\pm0.0459$ \\
& \cite{Kadzinski20ijar}'s method & $0.4752\pm 0.0702$ & $0.4614\pm0.0552$ & $0.4727\pm0.0492$ & $0.4712\pm0.0804$ \\

\multirow{3}*{7} & Approach 1 & - & $0.7618\pm0.0254$ & $0.7334\pm0.0428$ & $0.6583\pm0.0646$ \\
& Approach 2 & \bm{$0.8704\pm0.0366$} & \bm{$0.7919\pm0.0229$} & \bm{$0.7393\pm0.0292$} & \bm{$0.6653\pm0.0528$} \\
& \cite{Kadzinski20ijar}'s method & $0.5513\pm0.0981$ & $0.4906\pm0.0796$ & $0.5004\pm0.0509$ & $0.4656\pm0.0806$ \\

\multirow{3}*{8} & Approach 1 & - & $0.7497\pm0.0327$ & $0.7157\pm0.0560$ & \bm{$0.6557\pm0.0722$} \\
& Approach 2 & \bm{$0.8614\pm0.0198$} & \bm{$0.7621\pm0.0420$} & \bm{$0.7216\pm0.0382$} & $0.6518\pm0.0282$ \\
& \cite{Kadzinski20ijar}'s method & $0.5358\pm0.0481$ & $0.4718\pm 0.0496$ & $0.5528\pm 0.0816$ & $0.5046\pm 0.0593$ \\
\bottomrule
\end{tabular}}
\vspace{2em}
\centering
\setlength{\abovecaptionskip}{0pt}
\setlength{\belowcaptionskip}{10pt}
\caption{The mean and standard deviation of the accuracy metric for the three approaches with different values of $q$ in Simulation experiment \RNum{6}}
\label{table:simu6}
\resizebox{\textwidth}{!}{
\begin{tabular}{cccccc}
\toprule
$q$ & Approach & $s_j=1$ & $s_j=2$ & $s_j=3$ & $s_j=4$    \\ \midrule
\multirow{3}*{2} & Approach 1 & - & $0.8304\pm0.0346$ & \bm{$0.7740\pm0.0496$} & $0.7491\pm0.0493$\\
& Approach 2 & \bm{$0.9042\pm0.0115$} & \bm{$0.8339\pm0.0265$} & $0.7679\pm0.0465$ & \bm{$0.7634\pm0.0392$} \\
& \cite{Kadzinski20ijar}'s method & $0.6580\pm 0.0420$ & $0.6473\pm0.0512$ & $0.6220\pm0.0727$ & $0.6202\pm0.0468$ \\

\multirow{3}*{3} & Approach 1 & - & $0.8167\pm0.0245$ & $0.7495\pm0.0559$ & $0.7211\pm0.0415$ \\
& Approach 2 & \bm{$0.8790\pm0.0139$} & \bm{$0.8219\pm0.0178$} & \bm{$0.7911\pm0.0365$} & \bm{$0.7392\pm0.0444$} \\
& \cite{Kadzinski20ijar}'s method & $0.5512\pm0.0586$ & $0.5255\pm0.0562$ & $0.5216\pm0.0543$ & $0.5042\pm0.0740$ \\

\multirow{3}*{4} & Approach 1 & - & $0.7851\pm0.0188$ & $0.7279\pm0.0155$ & \bm{$0.7176\pm0.0457$}\\
& Approach 2 & \bm{$0.8791\pm0.0186$} & \bm{$0.8061\pm0.0175$} & \bm{$0.7455\pm0.0322$} & $0.7153\pm0.0459$ \\
& \cite{Kadzinski20ijar}'s method & $0.4752\pm 0.0702$ & $0.4614\pm0.0552$ & $0.4727\pm0.0492$ & $0.4712\pm0.0804$ \\

\bottomrule
\end{tabular}}
\vspace{2em}
\centering
\setlength{\abovecaptionskip}{0pt}
\setlength{\belowcaptionskip}{10pt}
\caption{The mean and standard deviation of the accuracy metric for the three approaches with different values of $r$ in Simulation experiment \RNum{7}}
\label{table:simu7}
\resizebox{\textwidth}{!}{
\begin{tabular}{cccccc}
\toprule
$r$ & Approach & $s_j=1$ & $s_j=2$ & $s_j=3$ & $s_j=4$    \\ \midrule
\multirow{3}*{0.2} & Approach 1 & - & \bm{$0.5545\pm0.0797$} & $0.5207\pm0.0492$ & $0.4816\pm0.0655$\\
& Approach 2 & \bm{$0.6229\pm0.0446$} & $0.5513\pm0.0331$ & \bm{$0.5320\pm0.0430$} & \bm{$0.5061\pm0.0564$} \\
& \cite{Kadzinski20ijar}'s method & $0.3742\pm 0.0393$ & $0.3661\pm0.0301$ & $0.3779\pm0.0367$ & $0.3599\pm0.0573$ \\

\multirow{3}*{0.5} & Approach 1 & - & $0.6908\pm0.0184$ & \bm{$0.6525\pm0.0298$} & $0.6352\pm0.0910$ \\
& Approach 2 & \bm{$0.7976\pm0.0169$} & \bm{$0.6930\pm0.0377$} & $0.6464\pm0.0390$ & \bm{$0.6416\pm0.0455$} \\
& \cite{Kadzinski20ijar}'s method & $0.4276\pm0.0874$ & $0.3696\pm0.0714$ & $0.4374\pm0.0679$ & $0.4440\pm0.0721$ \\

\multirow{3}*{0.8} & Approach 1 & - & $0.7851\pm0.0188$ & $0.7279\pm0.0155$ & \bm{$0.7176\pm0.0457$}\\
& Approach 2 & \bm{$0.8791\pm0.0186$} & \bm{$0.8061\pm0.0175$} & \bm{$0.7455\pm0.0322$} & $0.7153\pm0.0459$ \\
& \cite{Kadzinski20ijar}'s method & $0.4752\pm 0.0702$ & $0.4614\pm0.0552$ & $0.4727\pm0.0492$ & $0.4712\pm0.0804$ \\

\bottomrule
\end{tabular}}
\end{table}

From Tables \ref{table:simu5}~-~\ref{table:simu7} and Table \ref{table:ttest2}, it is evident that across all parameter settings, Approach 1 and Approach 2 consistently  outperform \cite{Kadzinski20ijar}'s method, with statistically significant differences observed. These results further demonstrate the effectiveness of the proposed approaches.

Additionally, it is noteworthy that the category thresholds are set as $b_0=0$, $b_q=1$, $b_h=1/q$, $h=1,\ldots,q-1$ in Algorithm \ref{alg:2}, and the reference alternatives are randomly chosen in Algorithm \ref{alg:3}, potentially causing category imbalance in Simulation experiments \RNum{1}~-~\RNum{7}. To address this issue, we conduct additional simulation experiments with a balanced distribution of alternatives across all categories (Simulation experiments \RNum{1}$'$~-~\RNum{7}$'$). The simulation procedure and outcomes are presented in \ref{appendix:c} of the supplemental file. The results from these additional experiments corroborate the findings from Simulation experiments \RNum{1}~-~\RNum{7}, providing further evidence for the effectiveness of the proposed approaches.

\subsection{Robustness analysis of the proposed approaches}
In this subsection, we explore the robustness of the proposed approaches by analyzing the effects of varying the number of criteria, categories, and the proportion of reference alternatives among all alternatives. The parameter settings used are consistent with those outlined in Table \ref{table:parameter_s}. For each parameter setting, we execute  Algorithm \ref{alg:2} 10 times, generating  10 datasets. For each dataset, Algorithm \ref{alg:4} is run 100 times, and the average value of the $APA$ metric is calculated. We then compute the mean and standard deviation of the $APA$ metric across the 10 datasets. The results are shown in Fig. \ref{fig:APA_imbalance}.

\begin{algorithm}\label{alg:4}
~\\
{\bf Input:} The decision matrix $X$, the sorting result for alternatives $F$, the number of subintervals for each criterion $s_j$, $j\in M$ and the proportion of reference alternatives among all alternatives $r$.\\
{\bf Output:} The $APA$ value of the proposed approaches.
\begin{enumerate}[\bf Step 1:]
  \item Randomly partition the set of alternatives $A=\{a_1,a_2,\ldots,a_n\}$ into two subsets: the set of reference alternatives $A^R$ and the set of non-reference alternatives $A^N$, where $|A^R|=[n\cdot r]$ and $|A^N|=n-|A^R|$ are the number of alternatives in $A^R$ and $A^N$, respectively.
  \item Determine the sorting result for reference alternatives in $A^R$ based on $F$, denoted as $F_1$.
  \item For each non-reference alternative $a_i\in A^N$ and each category $C_h\in C$, solve the model \eqref{m:robust}  to obtain the optimal objective function value, denoted as $\varepsilon^*_h$.
  \item Determine  the possible assignment for each non-reference alternative, i.e., $C^P_i=\{C_h| \varepsilon^*_h>0, h\in Q\}$, $\forall a_{i}\in A^N$.
  \item Calculate the $APA$ metric using Eq. \eqref{eq:APA}
  \item Output the $APA$ value.
\end{enumerate}
\end{algorithm}

Similar to Sections \ref{sec:6.1}~-~\ref{sec:6.2}, we also investigate the impact of various parameters on the $APA$ metric for the proposed approaches, assuming that alternatives are distributed in a balanced manner across all categories. The results of this analysis are displayed  in Fig. \ref{fig:APA_balance}.

%\begin{figure}[htbp]
%\centering
%\includegraphics[scale=0.4]{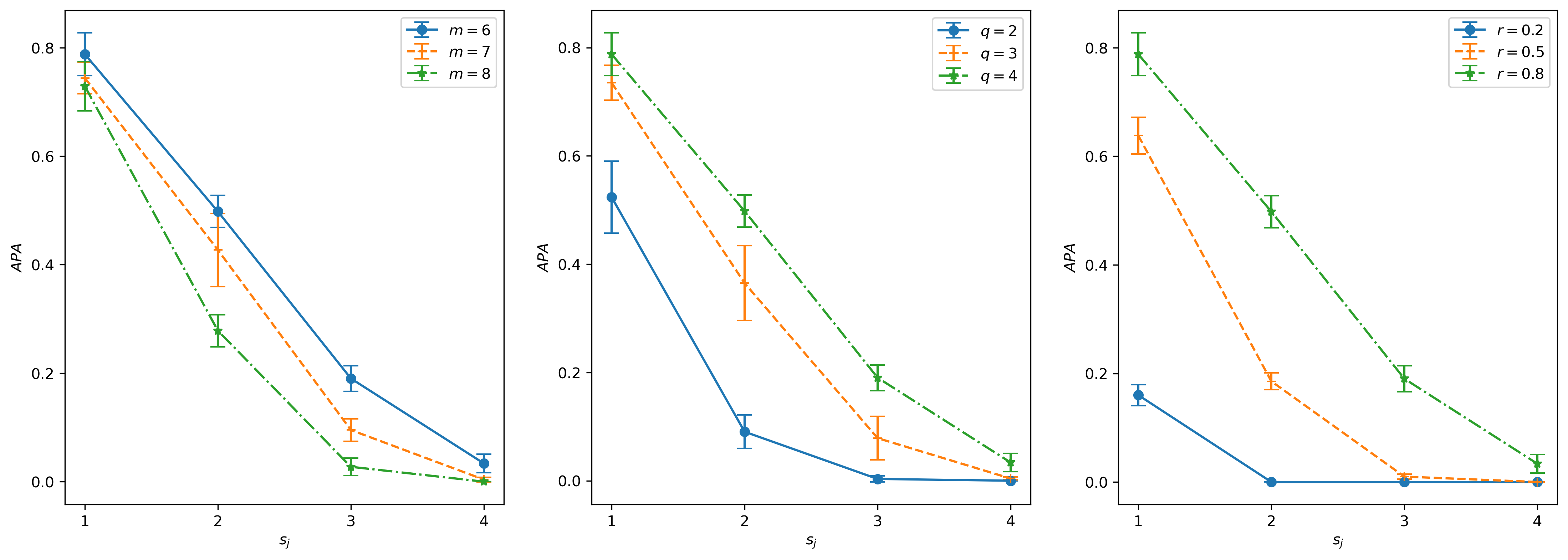}
%\caption{The mean and  standard deviation of the $APA$ metric for the proposed approaches under different parameter settings under the situation that alternatives are balanced distributed across all categories}
%\label{fig:APA_balance}
%\end{figure}

From Figs. \ref{fig:APA_imbalance}~-~\ref{fig:APA_balance}, we can derive the following observations. On the one hand, as the number of subintervals for each criterion ($s_j$) and the number of criteria ($m$) increase, the robustness of the proposed approaches  tends to decrease. This is because the flexibility of the compatible marginal value functions increases with these parameters, resulting in a larger set of possible assignments for non-reference alternatives, which leads to a lower $APA$ value. On the other hand, the proposed approaches demonstrate greater robustness as the number of categories ($q$) and the proportion of reference alternatives among all alternatives ($r$) increase.  This suggests that higher values of $q$ and $r$ make the set of compatible marginal value functions and category thresholds more constrained, thereby enhancing the robustness of the proposed approaches.

\begin{figure}[H]
\centering
\includegraphics[scale=0.4]{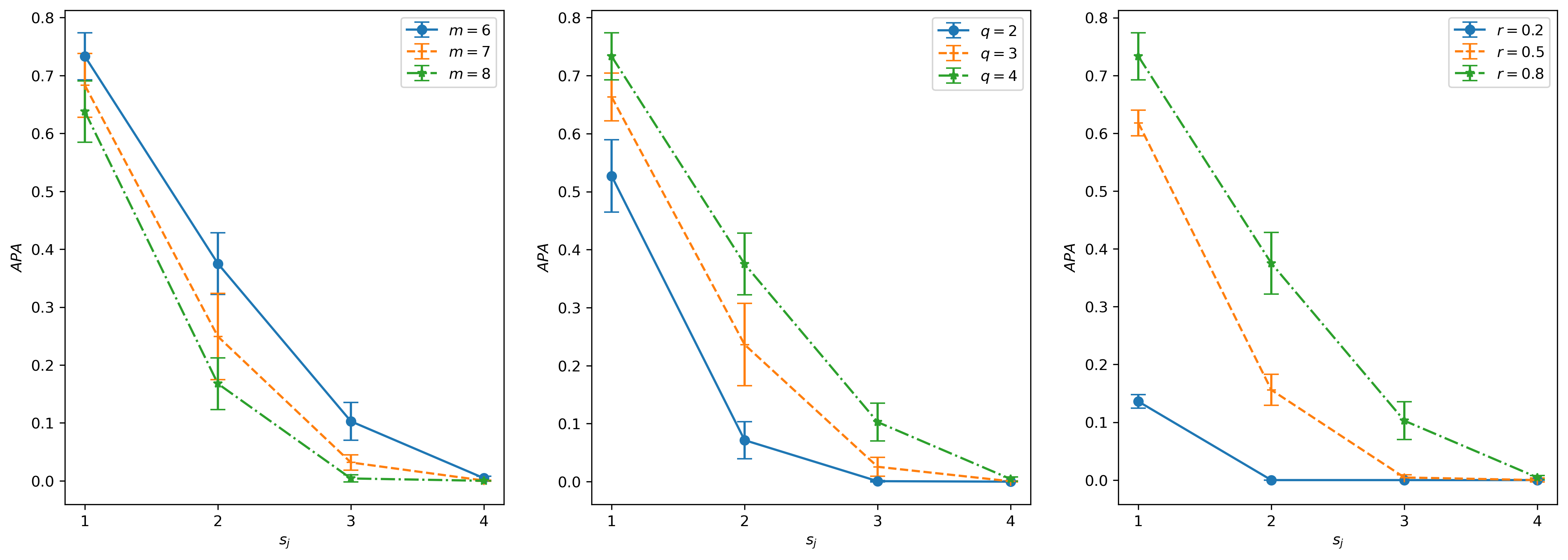}
\caption{The mean and  standard deviation of the $APA$ metric for the proposed approaches under different parameter settings}\label{fig:APA_imbalance}
\vspace{0.5em}
\includegraphics[scale=0.4]{APA_balance.png}
\caption{The mean and  standard deviation of the $APA$ metric for the proposed approaches under different parameter settings when alternatives are balanced distributed across all categories}\label{fig:APA_balance}
\end{figure}

\section{Conclusion}\label{sec:7}
In this paper, we present lexicographic optimization-based approaches to learning a representative model for MCS problems with non-monotonic criteria from the preference disaggregation perspective. To elaborate, we commence by introducing transformation functions designed to map the marginal values of each criterion and the category thresholds into a UTA-like functional space. This guarantees that the sorting result for alternatives before and after transformation remain unchanged. Subsequently, some constraint sets are developed to model the non-monotonic criteria in MCS problems.
Moreover, we introduce a consistency check model and a preference adjustment model to derive consistent assignment example preference information. We further propose lexicographic optimization-based approaches to learning a representative sorting model. The proposed approaches simultaneously consider both model complexity and discriminative power. Following this, we conduct an extensive series of experiments to illustrate the effectiveness of our proposed approaches. The experimental results underscore the superiority of our approaches when compared to some non-monotonic criteria modeling methods. Ultimately, we investigate the influence of certain parameters on the robustness of the proposed approaches.

Additionally, we identify several potential avenues for future research:

(1) While this paper employs the threshold-based value-driven sorting procedure as the foundational sorting model, it would be promising to explore novel approaches that address non-monotonic criteria using alternative MCS methods, such as outranking-based sorting methods \citep{Kadzinski16ins,Dias18omega}.

(2) The proposed lexicographic optimization-based approaches primarily focus on simple assignment example preference information. However, decision makers may provide other forms of preference information, such as desired category cardinalities and assignment-based pairwise comparisons \citep{Kadzinski15ejor,Li23jors}. Future research could explore the integration of these types of preference information into our proposed approaches.

%(3) Exploring the robustness of the proposed approaches through experimental analysis also presents a compelling avenue for future research \citep{Kadzinski17caor}.

(3) Incorporating machine learning techniques into our approaches is a necessary step to effectively handle MCS problems involving large-scale datasets \citep{Martyn23ejor}. This integration could enhance the scalability and performance of our apporoaches in real-world applications.

\ack{This work was partly supported by the National Natural Science Foundation of China under Grant Nos. 72371049 and 71971039, the Natural Science Foundation of Liaoning Province under Grant No. 2024-MSBA-26, the Funds for Humanities and Social Sciences of Ministry of Education of China under Grant No. 23YJC630219 and the Fundamental Research Funds for the Central Universities of China under Grant No. DUT23RW406.}

\bibliographystyle{model5-names}
\bibliography{reference}

\newpage

{\Large \noindent \textbf{Supplemental File}}

\appendix
\setcounter{table}{0}
\pagenumbering{arabic}
\setcounter{page}{1}

\renewcommand{\thealgorithm}{\Alph{section}\arabic{algorithm}}
\setcounter{algorithm}{0}

\section{Algorithms for generating simulation datasets and comparison}

\begin{algorithm}\label{alg:2}
~\\
{\bf Input:} The number of alternatives $n$, the number of criteria $m$, the number of categories $q$, and the number of subintervals for each criterion $s_j$, $j\in M$.\\
{\bf Output:} The decision matrix $X=(x_{ij})_{n\times m}$ and the sorting result for alternatives $F=(f_1,f_2,\ldots,f_n)^{\rm T}$.
\begin{enumerate}[\bf Step 1:]
  \item Generate the decision matrix $X=(x_{ij})_{n\times m}$ by randomly generating values for each element $x_{ij}$ within the interval [0, 100], following a uniform distribution, $i\in N$, $j\in M$.
  \item Using the decision matrix $X$, calculate the minimum and maximum performance levels of each criterion, i.e., $\beta_j^-$ and $\beta_j^+$, $j\in M$.
  \item Based on $\beta_j^-$, $\beta_j^+$ and the number of subintervals $s_j$, calculate all the breakpoints $\beta_j^l$, $l=1,\ldots,s_j+1$, $j\in M$.
  \item Randomly generate the marginal value for each breakpoint $v_j(\beta_j^l)$ from the interval [0,1] based on a uniform distribution, $l=1,\ldots,s_j+1, j\in M$.
  \item Use Eq. \eqref{eq:fv} to calculate the transformed marginal value for each breakpoint $v_j(\beta_j^l)$, $l=1,\ldots,s_j+1, j\in M$. Furthermore, set the category thresholds as follows: $b_0=0$, $b_q=1$, $b_h=1/q$, $h=1,\ldots,q-1$.
  \item Employ Eqs. \eqref{eq:global_v}~-~\eqref{eq:marginal_v0} to calculate the global value for each alternative. Afterward, derive the sorting result for alternatives, denoted as $F=(f_1, f_2,\ldots,f_n)^{\rm T}$, using the threshold-based value-driven sorting procedure.
   \item Output $X$ and $F$.
\end{enumerate}
\end{algorithm}

\begin{algorithm}\label{alg:3}
~\\
{\bf Input:} The decision matrix $X$, the sorting result for alternatives $F$, the number of subintervals for each criterion $s_j$, $j\in M$ and the proportion of reference alternatives among all alternatives $r$.\\
{\bf Output:} The accuracy metric of the four approaches, denoted by $accuracy_1$, $accuracy_2$, $accuracy_3$ and $accuracy_4$ , respectively.
\begin{enumerate}[\bf Step 1:]
  \item Randomly partition the set of alternatives $A=\{a_1,a_2,\ldots,a_n\}$ into two subsets: the set of reference alternatives $A^R$ and the set of non-reference alternatives $A^N$, where $|A^R|=[n\cdot r]$ and $|A^N|=n-|A^R|$ are the number of alternatives in $A^R$ and $A^N$, respectively.
  \item Determine the sorting result for alternatives in $A^R$ and $A^N$ based on $F$. These results are denoted as $F_1$ for $A^R$ and $F_2$ for $A^N$.
  \item Employ Approach 1 to derive the inferred sorting result for non-reference alternatives as $\overline{F_2^1}$, and then move on to Step 7.
  \item Use Approach 2 to obtain the inferred sorting result for non-reference alternatives as $\overline{F_2^2}$, and then move on to Step 7.
  \item Utilize LFP-NONM to derive the inferred sorting result for non-reference alternatives as $\overline{F_2^3}$, i.e., solve the model \eqref{m:LFP}, and then move on to Step 7.
  \item Apply UTADIS-NONM to derive the inferred sorting result for non-reference alternatives as $\overline{F_2^4}$, i.e., directly utilize the model \eqref{m:con_check} to derive the representative sorting model and implement Step 5 in Algorithm \ref{alg:1}. Then, continue to the next step.
  \item Based on $\overline{F_2^1}$, $\overline{F_2^2}$, $\overline{F_2^3}$,  $\overline{F_2^4}$ and $F_2$, calculate the accuracy metrics $accuracy_1$, $accuracy_2$, $accuracy_3$ and $accuracy_4$ by Eq. \eqref{eq:accu2}.
  \item Output $accuracy_1$, $accuracy_2$, $accuracy_3$ and $accuracy_4$.
\end{enumerate}
\end{algorithm}

\newpage
\section{The results of $t$-tests in Simulation experiments \RNum{1}~-~\RNum{4}}

\begin{table}[htbp]
\centering
\setlength{\abovecaptionskip}{0pt}
\setlength{\belowcaptionskip}{10pt}
\caption{The results of $t$-tests in Simulation experiments \RNum{1}~-~\RNum{4}}
\label{table:ttest1}
\resizebox{0.73\textwidth}{!}{
\begin{threeparttable}
\begin{tabular}{cccccc}
\toprule
Simulation experiment & Parameter & $s_j$ & Comparison 1 & Comparison 2 & Comparison 3  \\ \midrule
\multirow{12}*{\RNum{1}} & \multirow{4}*{$n=200$} & $s_j=1$ & - & $0.0097^*$ & - \\
&  &$s_j=2$ & $0.0000^*$ & $0.0000^*$ & $0.0000^*$ \\
&  &$s_j=3$ & $0.0000^*$ & $0.0000^*$ & $0.0000^*$ \\
&  &$s_j=4$ & $0.0000^*$ & $0.0000^*$ & $0.0000^*$ \\

& \multirow{4}*{$n=400$} & $s_j=1$ & - & $0.0008^*$ & - \\
&  &$s_j=2$ & $0.0000^*$ & $0.0000^*$ & $0.0000^*$ \\
&  &$s_j=3$ & $0.0000^*$ & $0.0000^*$ & $0.0000^*$ \\
&  &$s_j=4$ & $0.0000^*$ & $0.0000^*$ & $0.0000^*$ \\

& \multirow{4}*{$n=600$} & $s_j=1$ & - & $0.0004^*$ & - \\
&  &$s_j=2$ & $0.0000^*$ & $0.0001^*$ & $0.0001^*$ \\
&  &$s_j=3$ & $0.0004^*$ & $0.0003^*$ & $0.0003^*$ \\
&  &$s_j=4$ & $0.0000^*$ & $0.0000^*$ & $0.0000^*$ \\ \midrule

\multirow{12}*{\RNum{2}} & \multirow{4}*{$m=6$} & $s_j=1$ & - & $0.0097^*$ & - \\
&  &$s_j=2$ & $0.0000^*$ & $0.0000^*$ & $0.0000^*$ \\
&  &$s_j=3$ & $0.0000^*$ & $0.0000^*$ & $0.0000^*$ \\
&  &$s_j=4$ & $0.0000^*$ & $0.0000^*$ & $0.0000^*$ \\

& \multirow{4}*{$m=7$} & $s_j=1$ & - & $0.0011^*$  & - \\
&  &$s_j=2$ & $0.0000^*$ & $0.0000^*$ & $0.0000^*$ \\
&  &$s_j=3$ & $0.0000^*$ & $0.0000^*$ & $0.0000^*$ \\
&  &$s_j=4$ & $0.0000^*$ & $0.0000^*$ & $0.0000^*$ \\

& \multirow{4}*{$m=8$} & $s_j=1$ & - & $0.0000^*$ & - \\
&  &$s_j=2$ & $0.0000^*$ & $0.0000^*$ & $0.0000^*$ \\
&  &$s_j=3$ & $0.0000^*$ & $0.0000^*$ & $0.0000^*$ \\
&  &$s_j=4$ & $0.0000^*$ & $0.0000^*$ & $0.0000^*$ \\ \midrule

\multirow{12}*{\RNum{3}} & \multirow{4}*{$q=2$} & $s_j=1$ & - & $0.0000^*$ & - \\
&  &$s_j=2$ & $0.0000^*$ & $0.0000^*$ & $0.0000^*$ \\
&  &$s_j=3$ & $0.0000^*$ & $0.0000^*$ & $0.0000^*$ \\
&  &$s_j=4$ & $0.0002^*$ & $0.0000^*$ & $0.0002^*$ \\

& \multirow{4}*{$q=3$} & $s_j=1$ & - & $0.0000^*$ & - \\
&  &$s_j=2$ & $0.0000^*$ & $0.0000^*$ & $0.0000^*$ \\
&  &$s_j=3$ & $0.0000^*$ & $0.0000^*$ & $0.0000^*$ \\
&  &$s_j=4$ & $0.0000^*$ & $0.0000^*$ & $0.0000^*$ \\

& \multirow{4}*{$q=4$} & $s_j=1$ & - & $0.0097^*$ & - \\
&  &$s_j=2$ & $0.0000^*$ & $0.0000^*$ & $0.0000^*$ \\
&  &$s_j=3$ & $0.0000^*$ & $0.0000^*$ & $0.0000^*$ \\
&  &$s_j=4$ & $0.0000^*$ & $0.0000^*$ & $0.0000^*$ \\  \midrule

\multirow{12}*{\RNum{4}} & \multirow{4}*{$r=0.2$} & $s_j=1$ & - & $0.0000^*$ & - \\
&  &$s_j=2$ & $0.0000^*$ & $0.0000^*$ & $0.0000^*$ \\
&  &$s_j=3$ & $0.0000^*$ & $0.0000^*$ & $0.0000^*$ \\
&  &$s_j=4$ & $0.0000^*$ & $0.0000^*$ & $0.0000^*$ \\

& \multirow{4}*{$r=0.5$} & $s_j=1$ & - & $0.0003^*$ & - \\
&  &$s_j=2$ & $0.0000^*$ & $0.0000^*$ & $0.0000^*$ \\
&  &$s_j=3$ & $0.0000^*$ & $0.0000^*$ & $0.0000^*$ \\
&  &$s_j=4$ & $0.0000^*$ & $0.0000^*$ & $0.0000^*$ \\

& \multirow{4}*{$r=0.8$} & $s_j=1$ & - & $0.0097^*$ & - \\
&  &$s_j=2$ & $0.0000^*$ & $0.0000^*$ & $0.0000^*$ \\
&  &$s_j=3$ & $0.0000^*$ & $0.0000^*$ & $0.0000^*$ \\
&  &$s_j=4$ & $0.0000^*$ & $0.0000^*$ & $0.0000^*$ \\

\bottomrule
\end{tabular}
\begin{tablenotes}
\footnotesize
\item[*] Significance at the 5 \% level.
\end{tablenotes}
\end{threeparttable}}
\end{table}

\begin{small}
\begin{table}[htbp]
\centering
\setlength{\abovecaptionskip}{0pt}
\setlength{\belowcaptionskip}{10pt}
\caption{The results of $t$-tests in Simulation experiments \RNum{5}~-~\RNum{7}}
\label{table:ttest2}
\resizebox{0.73\textwidth}{!}{
\begin{threeparttable}
\begin{tabular}{ccccc}
\toprule
Simulation experiment & Parameter & $s_j$ & Comparison 4 & Comparison 5  \\ \midrule
\multirow{12}*{\RNum{5}} & \multirow{4}*{$m=6$} & $s_j=1$ & - & $0.0000^*$  \\
&  &$s_j=2$ & $0.0000^*$ & $0.0000^*$  \\
&  &$s_j=3$ & $0.0000^*$ & $0.0000^*$  \\
&  &$s_j=4$ & $0.0000^*$ & $0.0000^*$  \\

& \multirow{4}*{$m=7$} & $s_j=1$ & - & $0.0000^*$  \\
&  &$s_j=2$ & $0.0000^*$ & $0.0000^*$  \\
&  &$s_j=3$ & $0.0000^*$ & $0.0000^*$  \\
&  &$s_j=4$ & $0.0000^*$ & $0.0000^*$  \\

& \multirow{4}*{$m=8$} & $s_j=1$ & - & $0.0000^*$  \\
&  &$s_j=2$ & $0.0000^*$ & $0.0000^*$  \\
&  &$s_j=3$ & $0.0000^*$ & $0.0000^*$  \\
&  &$s_j=4$ & $0.0000^*$ & $0.0000^*$  \\  \midrule

\multirow{12}*{\RNum{6}} & \multirow{4}*{$q=2$} & $s_j=1$ & - & $0.0000^*$  \\
&  &$s_j=2$ & $0.0000^*$ & $0.0000^*$  \\
&  &$s_j=3$ & $0.0000^*$ & $0.0000^*$  \\
&  &$s_j=4$ & $0.0000^*$ & $0.0000^*$  \\

& \multirow{4}*{$q=3$} & $s_j=1$ & - & $0.0000^*$   \\
&  &$s_j=2$ & $0.0000^*$ & $0.0000^*$  \\
&  &$s_j=3$ & $0.0000^*$ & $0.0000^*$  \\
&  &$s_j=4$ & $0.0000^*$ & $0.0000^*$  \\

& \multirow{4}*{$q=4$} & $s_j=1$ & - & $0.0000^*$ \\
&  &$s_j=2$ & $0.0000^*$ & $0.0000^*$  \\
&  &$s_j=3$ & $0.0000^*$ & $0.0000^*$  \\
&  &$s_j=4$ & $0.0000^*$ & $0.0000^*$  \\  \midrule

\multirow{12}*{\RNum{7}} & \multirow{4}*{$r=0.2$} & $s_j=1$ & - & $0.0000^*$  \\
&  &$s_j=2$ & $0.0000^*$ & $0.0000^*$  \\
&  &$s_j=3$ & $0.0000^*$ & $0.0000^*$  \\
&  &$s_j=4$ & $0.0000^*$ & $0.0000^*$  \\

& \multirow{4}*{$r=0.5$} & $s_j=1$ & - & $0.0000^*$ \\
&  &$s_j=2$ & $0.0000^*$ & $0.0000^*$  \\
&  &$s_j=3$ & $0.0000^*$ & $0.0000^*$  \\
&  &$s_j=4$ & $0.0000^*$ & $0.0000^*$  \\

& \multirow{4}*{$r=0.8$} & $s_j=1$ & - & $0.0000^*$  \\
&  &$s_j=2$ & $0.0000^*$ & $0.0000^*$  \\
&  &$s_j=3$ & $0.0000^*$ & $0.0000^*$  \\
&  &$s_j=4$ & $0.0000^*$ & $0.0000^*$  \\

\bottomrule
\end{tabular}
\begin{tablenotes}
\footnotesize
\item \emph{Note:}  Comparison 4 evaluates the differences between Approach 1 and \cite{Kadzinski20ijar}'s method, and Comparison 5 evaluates the differences between Approach 2 and \cite{Kadzinski20ijar}'s method.
\item[*] Significance at the 5 \% level.
\end{tablenotes}
\end{threeparttable}}
\end{table}
\end{small}

\newpage

\section{Simulation experiments under the situation that alternatives are balanced distributed across all categories }\label{appendix:c}
We modify Steps 5 and 6 of Algorithm \ref{alg:2} to Steps 5$'$ and 6$'$, and form a new algorithm (Algorithm A1$'$).

Step 5$'$: Use Eq. \eqref{eq:fv} to calculate the transformed marginal value for each breakpoint $v_j(\beta_j^l)$, $l=1,\ldots,s_j+1, j\in M$.

Step 6$'$: Employ Eqs. \eqref{eq:global_v}~-~\eqref{eq:marginal_v0} to calculate the global value for each alternative. Afterwards, determine the category thresholds with the objective of achieving an approximately equal number of alternatives across different categories. Finally, employ the threshold-based value-driven sorting procedure to derive the sorting result for alternatives, denoted as $F=(f_1, f_2,\ldots,f_n)^{\rm T}$.

Moreover, we update Step 2 of Algorithm \ref{alg:3} to Step 2$'$, and form Algorithm A2$'$.

Step 2$'$: Calculate the number of reference alternatives to be selected, i.e., $|A^R|=[n\cdot r]$. Subsequently, choose $|A^R|$ reference alternatives with the aim of achieving a balanced distribution of alternatives across all categories. Additionally, denote the remaining $|A^N|=n-|A^R|$ alternatives as non-reference alternatives.

On this basis, we replicate Simulation experiments \RNum{1}~-~\RNum{7} using Algorithms A1$'$ and A2$'$ (referred to as Simulation experiments \RNum{1}$'$~-~\RNum{7}$'$). The simulation results are reported in Tables \ref{table:r_simu1}~-~\ref{table:r_simu7}. Analogously, we also conduct $t$-tests for Simulation experiments \RNum{1}$'$~-~\RNum{4}$'$, Simulation experiments \RNum{5}$'$~-~\RNum{7}$'$, respectively. The results of these $t$-tests are reported in Tables \ref{table:r_ttest1}~-~\ref{table:r_ttest2}. Specifically,  Comparison 1$'$, Comparison 2$'$ and Comparison 3$'$ evaluate the differences between the three approaches (Approach 1, Approach2, and LFP-NONM) and UTADIS-NONM in Simulation experiments \RNum{1}$'$~-~\RNum{4}$'$, Comparison 4$'$ and Comparison 5$'$ evaluate the differences between Approach 1, Approach 2 and \cite{Kadzinski20ijar}'s method in Simulation experiments \RNum{5}$'$~-~\RNum{7}$'$.

\begin{table}[htbp]
\centering
\setlength{\abovecaptionskip}{0pt}
\setlength{\belowcaptionskip}{10pt}
\caption{The mean and standard deviation of the accuracy metric for the four approaches with different values of $n$ in Simulation experiment \RNum{1}$'$}
\label{table:r_simu1}
\begin{tabular}{cccccc}
\toprule
$n$ & Approach & $s_j=1$ & $s_j=2$ & $s_j=3$ & $s_j=4$    \\ \midrule
\multirow{3}*{200} & Approach 1 & - & $0.9450\pm0.0047$ & $0.9198\pm0.0097$ & $0.9059\pm0.0081$\\
& Approach 2 & \bm{$0.9667\pm0.0060$} & \bm{$0.9463\pm0.0044$} & \bm{$0.9206\pm0.0097$} & \bm{$0.9071\pm0.0089$} \\
& LFP-NONM & - & $0.9451\pm0.0046$ & $0.9198\pm0.0079$ & $0.9059\pm0.0081$ \\
& UTADIS-NONM  & $0.9591\pm0.0037$ & $0.9297\pm0.0080$ & $0.9037\pm0.0085$ & $0.8767\pm0.0193$ \\

\multirow{3}*{400} & Approach 1 & - & \bm{$0.9731\pm0.0025$} & \bm{$0.9588\pm0.0029$} & $0.9513\pm 0.0048$ \\
& Approach 2 & \bm{$0.9843\pm0.0024$} & $0.9730\pm0.0023$ & $0.9587\pm0.0030$ & \bm{$0.9519\pm0.0051$} \\
& LFP-NONM & - & \bm{$0.9731\pm0.0025$} & \bm{$0.9588\pm 0.0029$} & $0.9513\pm 0.0048$\\
& UTADIS-NONM  & $0.9805\pm0.0029$ & $0.9632\pm0.0042$ & $0.9505\pm0.0048$ & $0.9351\pm0.0068$\\

\multirow{3}*{600} & Approach 1 & - & \bm{$0.9812\pm0.0023$} & $0.9729\pm0.0020$ & $0.9666\pm0.0026$ \\
& Approach 2 & \bm{$0.9893\pm0.0012$} & \bm{$0.9812\pm0.0023$} & \bm{$0.9732\pm0.0019$} & \bm{$0.9670\pm0.0029$} \\
& LFP-NONM & - & \bm{$0.9812\pm 0.0023$} & $0.9729\pm 0.0020$ & $0.9666\pm 0.0026$ \\
& UTADIS-NONM  & $0.9871\pm0.0009$ & $0.9760\pm0.0019$ & $0.9675\pm0.0039$ & $0.9565\pm0.0051$\\
\bottomrule
\end{tabular}
\end{table}

\begin{table}[htbp]
\centering
\setlength{\abovecaptionskip}{0pt}
\setlength{\belowcaptionskip}{10pt}
\caption{The mean and standard deviation of the accuracy metric for the four approaches with different values of $m$ in Simulation experiment \RNum{2}$'$}
\label{table:r_simu2}
\begin{tabular}{cccccc}
\toprule
$m$ & Approach & $s_j=1$ & $s_j=2$ & $s_j=3$ & $s_j=4$    \\ \midrule
\multirow{3}*{6} & Approach 1 & - & $0.9450\pm0.0047$ & $0.9198\pm0.0097$ & $0.9059\pm0.0081$\\
& Approach 2 & \bm{$0.9667\pm0.0060$} & \bm{$0.9463\pm0.0044$} & \bm{$0.9206\pm0.0097$} & \bm{$0.9071\pm0.0089$} \\
& LFP-NONM & - & $0.9451\pm0.0046$ & $0.9198\pm0.0079$ & $0.9059\pm0.0081$ \\
& UTADIS-NONM  & $0.9591\pm0.0037$ & $0.9297\pm0.0080$ & $0.9037\pm0.0085$ & $0.8767\pm0.0193$ \\

\multirow{3}*{7} & Approach 1 & - & $0.9385\pm0.0083$ & $0.9121\pm0.0060$ & \bm{$0.8867\pm0.0089$} \\
& Approach 2 & \bm{$0.9649\pm0.0038$} & \bm{$0.9395\pm0.0083$} & \bm{$0.9124\pm0.0055$} & $0.8861\pm0.0078$ \\
& LFP-NONM & - & $0.9385\pm0.0083$ & $0.9121\pm0.0060$ & \bm{$0.8867\pm0.0089$} \\
& UTADIS-NONM  & $0.9525\pm0.0035$ & $0.9214\pm0.0094$ & $0.8925\pm0.0086$ & $0.8478\pm0.0131$\\

\multirow{3}*{8} & Approach 1 & - & $0.9346\pm0.0057$ & $0.9077\pm0.0074$ & $0.8688\pm0.0098$ \\
& Approach 2 & \bm{$0.9599\pm0.0038$} & \bm{$0.9346\pm0.0055$} & \bm{$0.9094\pm0.0072$} & \bm{$0.8694\pm0.0096$} \\
& LFP-NONM & - & $0.9346\pm 0.0057$ & $0.9077\pm 0.0074$ & $0.8688\pm 0.0098$ \\
& UTADIS-NONM  & $0.9463\pm0.0051$ & $0.9180\pm0.0096$ & $0.8813\pm0.0172$ & $0.8258\pm0.0134$\\
\bottomrule
\end{tabular}
\end{table}

\begin{table}[htbp]
\centering
\setlength{\abovecaptionskip}{0pt}
\setlength{\belowcaptionskip}{10pt}
\caption{The mean and standard deviation of the accuracy metric for the four approaches with different values of $q$ in Simulation experiment \RNum{3}$'$}
\label{table:r_simu3}
\begin{tabular}{cccccc}
\toprule
$q$ & Approach & $s_j=1$ & $s_j=2$ & $s_j=3$ & $s_j=4$    \\ \midrule
\multirow{3}*{2} & Approach 1 & - & $0.9524\pm0.0068$ & $0.9313\pm0.0109$ & $0.9112\pm0.0100$ \\
& Approach 2 & \bm{$0.9764\pm0.0040$} & \bm{$0.9550\pm0.0088$} & \bm{$0.9334\pm0.0086$} & \bm{$0.9118\pm0.0122$} \\
& LFP-NONM & - & $0.9524\pm0.0068$ & $0.9313\pm0.0109$ & $0.9112\pm0.0100$\\
& UTADIS-NONM & $0.9711\pm0.0026$ & $0.9407\pm0.0067$ & $0.9123\pm0.0075$ & $0.8889\pm0.0112$\\

\multirow{3}*{3} & Approach 1 & - & $0.9485\pm0.0052$ & $0.9275\pm0.0067$ & $0.9010\pm0.0097$ \\
& Approach 2 & \bm{$0.9704\pm0.0058$} & \bm{$0.9492\pm0.0044$} & \bm{$0.9286\pm0.0062$} & \bm{$0.9034\pm0.0087$} \\
& LFP-NONM & - & $0.9485\pm 0.0052$ & $0.9275\pm0.0067$ & $0.9010\pm0.0097$\\
& UTADIS-NONM & $0.9637\pm0.0043$ & $0.9327\pm0.0068$ & $0.9077\pm0.0105$ & $0.8757\pm0.0087$\\

\multirow{3}*{4} & Approach 1 & - & $0.9450\pm0.0047$ & $0.9198\pm0.0097$ & $0.9059\pm0.0081$\\
& Approach 2 & \bm{$0.9667\pm0.0060$} & \bm{$0.9463\pm0.0044$} & \bm{$0.9206\pm0.0097$} & \bm{$0.9071\pm0.0089$} \\
& LFP-NONM & - & $0.9451\pm0.0046$ & $0.9198\pm0.0079$ & $0.9059\pm0.0081$ \\
& UTADIS-NONM  & $0.9591\pm0.0037$ & $0.9297\pm0.0080$ & $0.9037\pm0.0085$ & $0.8767\pm0.0193$ \\

\bottomrule
\end{tabular}
\end{table}

\begin{table}[htbp]
\centering
\setlength{\abovecaptionskip}{0pt}
\setlength{\belowcaptionskip}{10pt}
\caption{The mean and standard deviation of the accuracy metric for the four approaches with different values of $r$ in Simulation experiment \RNum{4}$'$}
\label{table:r_simu4}
\begin{tabular}{cccccc}
\toprule
$r$ & Approach & $s_j=1$ & $s_j=2$ & $s_j=3$ & $s_j=4$    \\ \midrule
\multirow{3}*{0.2} & Approach 1 & - & $0.7874\pm0.0072$ & $0.7039\pm0.0079$ & $0.6158\pm0.0125$  \\
& Approach 2 & \bm{$0.8731\pm0.0052$} & \bm{$0.7956\pm0.0051$} & \bm{$0.7189\pm0.0126$} & \bm{$0.6496\pm0.0122$} \\
& LFP-NONM & - & $0.7876\pm0.0072$ & $0.7040\pm0.0078$ & $0.6157\pm0.0126$\\
& UTADIS-NONM & $0.8342\pm0.0064$ & $0.7140\pm0.0119$ & $0.6043\pm0.0173$ & $0.4991\pm0.0146$ \\

\multirow{3}*{0.5} & Approach 1 & - & \bm{$0.9089\pm0.0059$} & $0.8804\pm0.0082$ & $0.8422\pm0.0044$ \\
& Approach 2 &  \bm{$0.9491\pm0.0059$} & $0.9075\pm0.0082$ & \bm{$0.8821\pm0.0080$} & \bm{$0.8480\pm0.0070$}\\
& LFP-NONM & - & \bm{$0.9089\pm0.0059$} & $0.8804\pm0.0082$ & $0.8422\pm0.0043$ \\
& UTADIS-NONM & $0.9341\pm0.0057$ & $0.8866\pm0.0075$ & $0.8499\pm0.0130$ & $0.7901\pm0.0155$\\

\multirow{3}*{0.8} & Approach 1 & - & $0.9450\pm0.0047$ & $0.9198\pm0.0097$ & $0.9059\pm0.0081$\\
& Approach 2 & \bm{$0.9667\pm0.0060$} & \bm{$0.9463\pm0.0044$} & \bm{$0.9206\pm0.0097$} & \bm{$0.9071\pm0.0089$} \\
& LFP-NONM & - & $0.9451\pm0.0046$ & $0.9198\pm0.0079$ & $0.9059\pm0.0081$ \\
& UTADIS-NONM  & $0.9591\pm0.0037$ & $0.9297\pm0.0080$ & $0.9037\pm0.0085$ & $0.8767\pm0.0193$ \\
\bottomrule
\end{tabular}
\end{table}

\begin{table}[htbp]
\centering
\setlength{\abovecaptionskip}{0pt}
\setlength{\belowcaptionskip}{10pt}
\caption{The mean and standard deviation of the accuracy metric for the three approaches with different values of $m$ in Simulation experiment \RNum{5}$'$}
\label{table:r_simu5}
\resizebox{\textwidth}{!}{
\begin{tabular}{cccccc}
\toprule
$m$ & Approach & $s_j=1$ & $s_j=2$ & $s_j=3$ & $s_j=4$    \\ \midrule
\multirow{3}*{6} & Approach 1 & - & $0.7996\pm0.0332$ & $0.7127\pm0.0265$ & $0.6280\pm0.0322$\\
& Approach 2 & \bm{$0.8871\pm0.0202$} & \bm{$0.8047\pm0.0240$} & \bm{$0.7277\pm0.0323$} & \bm{$0.6667\pm0.0466$} \\
& \cite{Kadzinski20ijar}'s method & $0.4678\pm 0.0555$ & $0.4099\pm0.0327$ & $0.3529\pm0.0320$ & $0.3406\pm0.0313$ \\

\multirow{3}*{7} & Approach 1 & - & $0.7764\pm0.0324$ & $0.6772\pm0.0364$ & $0.5737\pm0.0656$ \\
& Approach 2 & \bm{$0.8734\pm0.0312$} & \bm{$0.7837\pm0.0265$} & \bm{$0.6917\pm0.0449$} & \bm{$0.5920\pm0.0457$} \\
& \cite{Kadzinski20ijar}'s method & $0.4624\pm0.0320$ & $0.3929\pm0.0529$ & $0.3466\pm0.0388$ & $0.3426\pm0.0544$ \\

\multirow{3}*{8} & Approach 1 & - & $0.7450\pm0.0442$ & $0.6498\pm0.0302$ & $0.5647\pm0.0328$ \\
& Approach 2 & \bm{$0.8484\pm0.0169$} & \bm{$0.7549\pm0.0315$} & \bm{$0.6583\pm0.0571$} & \bm{$0.5693\pm0.0335$} \\
& \cite{Kadzinski20ijar}'s method & $0.4700\pm0.0371$ & $0.3725\pm 0.0372$ & $0.3661\pm 0.0328$ & $0.3405\pm 0.0111$ \\

\bottomrule
\end{tabular}}
\end{table}

\begin{table}[htbp]
\centering
\setlength{\abovecaptionskip}{0pt}
\setlength{\belowcaptionskip}{10pt}
\caption{The mean and standard deviation of the accuracy metric for the three approaches with different values of $q$ in Simulation experiment \RNum{6}$'$}
\label{table:r_simu6}
\resizebox{\textwidth}{!}{
\begin{tabular}{cccccc}
\toprule
$q$ & Approach & $s_j=1$ & $s_j=2$ & $s_j=3$ & $s_j=4$    \\ \midrule
\multirow{3}*{2} & Approach 1 & - & $0.8599\pm0.0205$ & $0.7762\pm0.0356$ & $0.7524\pm0.0170$\\
& Approach 2 & \bm{$0.9140\pm0.0257$} & \bm{$0.8699\pm0.0190$} & \bm{$0.8124\pm0.0308$} & \bm{$0.7878\pm0.0227$} \\
& \cite{Kadzinski20ijar}'s method & $0.6758\pm 0.0655$ & $0.5878\pm0.0634$ & $0.5859\pm0.0381$ & $0.5744\pm0.0434$ \\

\multirow{3}*{3} & Approach 1 & - & \bm{$0.7858\pm0.0290$} & $0.7465\pm0.0451$ & $0.6650\pm0.0384$ \\
& Approach 2 & \bm{$0.9008\pm0.0246$} & $0.7855\pm0.0200$ & \bm{$0.7585\pm0.0350$} & \bm{$0.6735\pm0.0315$} \\
& \cite{Kadzinski20ijar}'s method & $0.5432\pm0.0699$ & $0.4626\pm0.0354$ & $0.4739\pm0.0429$ & $0.4318\pm0.0346$ \\

\multirow{3}*{4} & Approach 1 & - & $0.7996\pm0.0332$ & $0.7127\pm0.0265$ & $0.6280\pm0.0322$\\
& Approach 2 & \bm{$0.8871\pm0.0202$} & \bm{$0.8047\pm0.0240$} & \bm{$0.7277\pm0.0323$} & \bm{$0.6667\pm0.0466$} \\
& \cite{Kadzinski20ijar}'s method & $0.4678\pm 0.0555$ & $0.4099\pm0.0327$ & $0.3529\pm0.0320$ & $0.3406\pm0.0313$ \\

\bottomrule
\end{tabular}}
\end{table}

\begin{table}[htbp]
\centering
\setlength{\abovecaptionskip}{0pt}
\setlength{\belowcaptionskip}{10pt}
\caption{The mean and standard deviation of the accuracy metric for the three approaches with different values of $r$ in Simulation experiment \RNum{7}$'$}
\label{table:r_simu7}
\resizebox{\textwidth}{!}{
\begin{tabular}{cccccc}
\toprule
$r$ & Approach & $s_j=1$ & $s_j=2$ & $s_j=3$ & $s_j=4$    \\ \midrule
\multirow{3}*{0.2} & Approach 1 & - & $0.3703\pm0.0364$ & $0.3282\pm0.0336$ & $0.3205\pm0.0238$\\
& Approach 2 & \bm{$0.5993\pm0.0237$} & \bm{$0.4352\pm0.0271$} & \bm{$0.3789\pm0.0265$} & \bm{$0.3552\pm0.0249$} \\
& \cite{Kadzinski20ijar}'s method & $0.2926\pm 0.0430$ & $0.2807\pm0.0112$ & $0.2724\pm0.0158$ & $0.2804\pm0.0230$ \\

\multirow{3}*{0.5} & Approach 1 & - & $0.6683\pm0.0296$ & $0.5477\pm0.0340$ & $0.4820\pm0.0371$ \\
& Approach 2 & \bm{$0.8174\pm0.0213$} & \bm{$0.6731\pm0.0282$} & \bm{$0.5791\pm0.0346$} & \bm{$0.5186\pm0.0336$} \\
& \cite{Kadzinski20ijar}'s method & $0.3765\pm0.0396$ & $0.3231\pm0.0279$ & $0.3060\pm0.0176$ & $0.3052\pm0.0217$ \\

\multirow{3}*{0.8} & Approach 1 & - & $0.7996\pm0.0332$ & $0.7127\pm0.0265$ & $0.6280\pm0.0322$\\
& Approach 2 & \bm{$0.8871\pm0.0202$} & \bm{$0.8047\pm0.0240$} & \bm{$0.7277\pm0.0323$} & \bm{$0.6667\pm0.0466$} \\
& \cite{Kadzinski20ijar}'s method & $0.4678\pm 0.0555$ & $0.4099\pm0.0327$ & $0.3529\pm0.0320$ & $0.3406\pm0.0313$ \\

\bottomrule
\end{tabular}}
\end{table}

\begin{small}
\begin{table}[htbp]
\centering
\setlength{\abovecaptionskip}{0pt}
\setlength{\belowcaptionskip}{10pt}
\caption{The results of $t$-tests in Simulation experiments \RNum{1}$'$~-~\RNum{4}$'$}
\label{table:r_ttest1}
\begin{threeparttable}
\begin{tabular}{cccccc}
\toprule
Simulation experiment & Parameter & $s_j$ & Comparison 1$'$ & Comparison 2$'$ & Comparison 3$'$  \\ \midrule
\multirow{12}*{\RNum{1}$'$} & \multirow{4}*{$n=200$} & $s_j=1$ & - & $0.0000^*$ & - \\
&  &$s_j=2$ & $0.0000^*$ & $0.0000^*$ & $0.0000^*$ \\
&  &$s_j=3$ & $0.0006^*$ & $0.0006^*$ & $0.0006^*$ \\
&  &$s_j=4$ & $0.0002^*$ & $0.0001^*$ & $0.0002^*$ \\

& \multirow{4}*{$n=400$} & $s_j=1$ & - & $0.0009^*$ & - \\
&  &$s_j=2$ & $0.0001^*$ & $0.0001^*$ & $0.0001^*$ \\
&  &$s_j=3$ & $0.0002^*$ & $0.0001^*$ & $0.0002^*$ \\
&  &$s_j=4$ & $0.0001^*$ & $0.0001^*$ & $0.0001^*$ \\

& \multirow{4}*{$n=600$} & $s_j=1$ & - & $0.0001^*$ & - \\
&  &$s_j=2$ & $0.0000^*$ & $0.0000^*$ & $0.0000^*$ \\
&  &$s_j=3$ & $0.0002^*$ & $0.0000^*$ & $0.0002^*$ \\
&  &$s_j=4$ & $0.0000^*$ & $0.0000^*$ & $0.0000^*$ \\ \midrule

\multirow{12}*{\RNum{2}$'$} & \multirow{4}*{$m=6$} & $s_j=1$ & - & $0.0000^*$ & - \\
&  &$s_j=2$ & $0.0000^*$ & $0.0000^*$ & $0.0000^*$ \\
&  &$s_j=3$ & $0.0006^*$ & $0.0006^*$ & $0.0006^*$ \\
&  &$s_j=4$ & $0.0002^*$ & $0.0001^*$ & $0.0002^*$ \\

& \multirow{4}*{$m=7$} & $s_j=1$ & - & $0.0000^*$  & - \\
&  &$s_j=2$ & $0.0003^*$ & $0.0003^*$ & $0.0003^*$ \\
&  &$s_j=3$ & $0.0001^*$ & $0.0000^*$ & $0.0001^*$ \\
&  &$s_j=4$ & $0.0000^*$ & $0.0000^*$ & $0.0000^*$ \\

& \multirow{4}*{$m=8$} & $s_j=1$ & - & $0.0000^*$ & - \\
&  &$s_j=2$ & $0.0000^*$ & $0.0000^*$ & $0.0000^*$ \\
&  &$s_j=3$ & $0.0004^*$ & $0.0003^*$ & $0.0004^*$ \\
&  &$s_j=4$ & $0.0000^*$ & $0.0000^*$ & $0.0000^*$ \\ \midrule

\multirow{12}*{\RNum{3}$'$} & \multirow{4}*{$q=2$} & $s_j=1$ & - & $0.0023^*$ & - \\
&  &$s_j=2$ & $0.0000^*$ & $0.0000^*$ & $0.0000^*$ \\
&  &$s_j=3$ & $0.0000^*$ & $0.0000^*$ & $0.0000^*$ \\
&  &$s_j=4$ & $0.0000^*$ & $0.0000^*$ & $0.0000^*$ \\

& \multirow{4}*{$q=3$} & $s_j=1$ & - & $0.0045^*$ & - \\
&  &$s_j=2$ & $0.0000^*$ & $0.0000^*$ & $0.0000^*$ \\
&  &$s_j=3$ & $0.0008^*$ & $0.0004^*$ & $0.0008^*$ \\
&  &$s_j=4$ & $0.0001^*$ & $0.0000^*$ & $0.0001^*$ \\

& \multirow{4}*{$q=4$} & $s_j=1$ & - & $0.0000^*$ & - \\
&  &$s_j=2$ & $0.0000^*$ & $0.0000^*$ & $0.0000^*$ \\
&  &$s_j=3$ & $0.0006^*$ & $0.0006^*$ & $0.0006^*$ \\
&  &$s_j=4$ & $0.0002^*$ & $0.0001^*$ & $0.0002^*$ \\ \midrule

\multirow{12}*{\RNum{4}$'$} & \multirow{4}*{$r=0.2$} & $s_j=1$ & - & $0.0000^*$ & - \\
&  &$s_j=2$ & $0.0000^*$ & $0.0000^*$ & $0.0000^*$ \\
&  &$s_j=3$ & $0.0000^*$ & $0.0000^*$ & $0.0000^*$ \\
&  &$s_j=4$ & $0.0000^*$ & $0.0000^*$ & $0.0000^*$ \\

& \multirow{4}*{$r=0.5$} & $s_j=1$ & - & $0.0000^*$ & - \\
&  &$s_j=2$ & $0.0000^*$ & $0.0000^*$ & $0.0000^*$ \\
&  &$s_j=3$ & $0.0000^*$ & $0.0000^*$ & $0.0000^*$ \\
&  &$s_j=4$ & $0.0000^*$ & $0.0000^*$ & $0.0000^*$ \\

& \multirow{4}*{$r=0.8$} & $s_j=1$ & - & $0.0000^*$ & - \\
&  &$s_j=2$ & $0.0000^*$ & $0.0000^*$ & $0.0000^*$ \\
&  &$s_j=3$ & $0.0006^*$ & $0.0006^*$ & $0.0006^*$ \\
&  &$s_j=4$ & $0.0002^*$ & $0.0001^*$ & $0.0002^*$ \\

\bottomrule
\end{tabular}
\begin{tablenotes}
\footnotesize
\item[*] Significance at the 5 \% level.
\end{tablenotes}
\end{threeparttable}
\end{table}
\end{small}

\begin{small}
\begin{table}[htbp]
\centering
\setlength{\abovecaptionskip}{0pt}
\setlength{\belowcaptionskip}{10pt}
\caption{The results of $t$-tests in Simulation experiments \RNum{5}$'$~-~\RNum{7}$'$}
\label{table:r_ttest2}
\begin{threeparttable}
\begin{tabular}{ccccc}
\toprule
Simulation experiment & Parameter & $s_j$ & Comparison 4$'$ & Comparison 5$'$  \\ \midrule
\multirow{12}*{\RNum{5}$'$} & \multirow{4}*{$m=6$} & $s_j=1$ & - & $0.0000^*$  \\
&  &$s_j=2$ & $0.0000^*$ & $0.0000^*$  \\
&  &$s_j=3$ & $0.0000^*$ & $0.0000^*$  \\
&  &$s_j=4$ & $0.0000^*$ & $0.0000^*$  \\

& \multirow{4}*{$m=7$} & $s_j=1$ & - & $0.0000^*$  \\
&  &$s_j=2$ & $0.0000^*$ & $0.0000^*$  \\
&  &$s_j=3$ & $0.0000^*$ & $0.0000^*$  \\
&  &$s_j=4$ & $0.0000^*$ & $0.0000^*$  \\

& \multirow{4}*{$m=8$} & $s_j=1$ & - & $0.0000^*$  \\
&  &$s_j=2$ & $0.0000^*$ & $0.0000^*$  \\
&  &$s_j=3$ & $0.0000^*$ & $0.0000^*$  \\
&  &$s_j=4$ & $0.0000^*$ & $0.0000^*$  \\  \midrule

\multirow{12}*{\RNum{6}$'$} & \multirow{4}*{$q=2$} & $s_j=1$ & - & $0.0000^*$  \\
&  &$s_j=2$ & $0.0000^*$ & $0.0000^*$  \\
&  &$s_j=3$ & $0.0000^*$ & $0.0000^*$  \\
&  &$s_j=4$ & $0.0000^*$ & $0.0000^*$  \\

& \multirow{4}*{$q=3$} & $s_j=1$ & - & $0.0000^*$   \\
&  &$s_j=2$ & $0.0000^*$ & $0.0000^*$  \\
&  &$s_j=3$ & $0.0000^*$ & $0.0000^*$  \\
&  &$s_j=4$ & $0.0000^*$ & $0.0000^*$  \\

& \multirow{4}*{$q=4$} & $s_j=1$ & - & $0.0000^*$ \\
&  &$s_j=2$ & $0.0000^*$ & $0.0000^*$  \\
&  &$s_j=3$ & $0.0000^*$ & $0.0000^*$  \\
&  &$s_j=4$ & $0.0000^*$ & $0.0000^*$  \\ \midrule

\multirow{12}*{\RNum{7}$'$} & \multirow{4}*{$r=0.2$} & $s_j=1$ & - & $0.0000^*$  \\
&  &$s_j=2$ & $0.0000^*$ & $0.0000^*$  \\
&  &$s_j=3$ & $0.0012^*$ & $0.0000^*$  \\
&  &$s_j=4$ & $0.0005^*$ & $0.0000^*$  \\

& \multirow{4}*{$r=0.5$} & $s_j=1$ & - & $0.0000^*$ \\
&  &$s_j=2$ & $0.0000^*$ & $0.0000^*$  \\
&  &$s_j=3$ & $0.0000^*$ & $0.0000^*$  \\
&  &$s_j=4$ & $0.0000^*$ & $0.0000^*$  \\

& \multirow{4}*{$r=0.8$} & $s_j=1$ & - & $0.0000^*$  \\
&  &$s_j=2$ & $0.0000^*$ & $0.0000^*$  \\
&  &$s_j=3$ & $0.0000^*$ & $0.0000^*$  \\
&  &$s_j=4$ & $0.0000^*$ & $0.0000^*$  \\

\bottomrule
\end{tabular}
\begin{tablenotes}
\footnotesize
\item[*] Significance at the 5 \% level.
\end{tablenotes}
\end{threeparttable}
\end{table}
\end{small}

\end{document}